\definecolor{darkred}{RGB}{100,0,0}
\definecolor{darkgreen}{RGB}{0,100,0}
\definecolor{darkblue}{RGB}{0,0,150}
\eacurl\url{http://www.math.ucsd.edu/~eariasca/}
\newtheorem{thm}{Theorem}
\newtheorem{lem}{Lemma}
\def\beq{\begin{equation}}
\def\eeq{\end{equation}}
\def\beqn{\begin{eqnarray*}}
\def\eeqn{\end{eqnarray*}}
\def\bitem{\begin{itemize}}
\def\eitem{\end{itemize}}
\def\benum{\begin{enumerate}}
\def\eenum{\end{enumerate}}
\def\bmult{\begin{multline*}}
\def\emult{\end{multline*}}
\def\bcenter{\begin{center}}
\def\ecenter{\end{center}}
\newcommand{\thmref}[1]{Theorem~\ref{thm:#1}}
\newcommand{\lemref}[1]{Lemma~\ref{lem:#1}}
\newcommand{\secref}[1]{Section~\ref{sec:#1}}
\newcommand{\figref}[1]{Figure~\ref{fig:#1}}
\newcommand{\algref}[1]{Algorithm~\ref{alg:#1}}
\newcommand{\Clref}[1]{C_{\ref*{lem:#1}}}
\DeclareMathOperator{\diam}{diam}
\DeclareMathOperator{\dist}{dist}
\DeclareMathOperator{\asin}{asin}
\def\cN{\mathcal{N}}
\def\cR{\mathcal{R}}
\def\cS{\mathcal{S}}
\def\bA{{\boldsymbol A}}
\def\bC{{\boldsymbol C}}
\def\bD{{\boldsymbol D}}
\def\bM{{\boldsymbol M}}
\def\bN{{\boldsymbol N}}
\def\bP{{\boldsymbol P}}
\def\bQ{{\boldsymbol Q}}
\def\bW{{\boldsymbol W}}
\def\bZ{{\boldsymbol Z}}
\def\ba{{\boldsymbol a}}
\def\bb{{\boldsymbol b}}
\def\bs{{\boldsymbol s}}
\def\bt{{\boldsymbol t}}
\def\bu{{\boldsymbol u}}
\def\bv{{\boldsymbol v}}
\def\bw{{\boldsymbol w}}
\def\bx{{\boldsymbol x}}
\def\by{{\boldsymbol y}}
\def\bz{{\boldsymbol z}}
\newcommand{\bmu}{{\boldsymbol\mu}}
\newcommand{\bxi}{{\boldsymbol\xi}}
\newcommand{\bzeta}{{\boldsymbol\zeta}}
\newcommand\bSigma{{\boldsymbol\Sigma}}
\def\bbN{\mathbb{N}}
\def\bbR{\mathbb{R}}
\newcommand{\E}{\operatorname{\mathbb{E}}}
\renewcommand{\P}{\operatorname{\mathbb{P}}}
\newcommand{\Var}{\operatorname{Var}}
\newcommand{\Cov}{\operatorname{Cov}}
\newcommand{\pr}[1]{\mathbb{P}\left(#1\right)}
\def\Bin{\text{Bin}}
\def\eps{\varepsilon}
\newcommand{\vol}{\operatorname{vol}}
\def\R{\mathbb{R}}
\def\reach{{\rm reach}}
\def\rad{r}
\def\thetamin{\theta_{\rm min}}
\def\thetamax{\theta_{\rm max}}
\newcommand{\symd}{\, \triangle \, }
\definecolor{darkgreen}{rgb}{.0,.638,.035}
\definecolor{darkred}{rgb}{.638,.0,.035}
\definecolor{purple}{rgb}{0.4,.1,.9}
\newcommand{\1}{{\rm 1}\kern-0.24em{\rm I}}
\begin{document}

\noindent {\huge Spectral Clustering Based on Local PCA}

\medskip
\renewcommand*{\thefootnote}{\fnsymbol{footnote}}
\noindent {\large
Ery Arias-Castro\footnote{Corresponding author: \url{math.ucsd.edu/~eariasca}}\renewcommand{\thefootnote}{\arabic{footnote}}\setcounter{footnote}{0}\footnote{University of California, San Diego},
Gilad Lerman%
\footnote{University of Minnesota, Twin Cities}
and
Teng Zhang%
\footnote{Institute for Mathematics and its Applications (IMA)}
}

\bigskip
\noindent
We propose a spectral clustering method based on local principal components analysis (PCA).  After performing local PCA in selected neighborhoods, the algorithm builds a nearest neighbor graph weighted according to a discrepancy between the principal subspaces in the neighborhoods, and then applies spectral clustering.  As opposed to standard spectral methods based solely on pairwise distances between points, our algorithm is able to resolve intersections.  We establish theoretical guarantees for  simpler variants within a prototypical mathematical framework for multi-manifold clustering, and evaluate our algorithm on various simulated data sets.

\medskip


\noindent {\bf Keywords:}
multi-manifold clustering, spectral clustering, local principal component analysis, intersecting clusters.


\section{Introduction}
\label{sec:intro}

The task of multi-manifold clustering, where the data are assumed to be located near surfaces embedded in Euclidean space, is relevant in a variety of applications.  In cosmology, it arises as the extraction of galaxy clusters in the form of filaments (curves) and walls (surfaces)~\citep{galaxy-nonrandom,MarSaa}; in motion segmentation, moving objects tracked along different views form affine or algebraic surfaces~\citep{Ma07,1530127,vidal2006unified,AtevKSCC}; this is also true in face recognition, in the context of images of faces in fixed pose under varying illumination conditions~\citep{Ho03,Basri03,Epstein95}.

We consider a stylized setting where the underlying surfaces are nonparametric in nature, with a particular emphasis on situations where the surfaces intersect.  Specifically, we assume the surfaces are smooth, for otherwise the notion of continuation is potentially ill-posed.  For example, without smoothness assumptions, an L-shaped cluster is indistinguishable from the union of two line-segments meeting at right angle.

Spectral methods \citep{1288832} are particularly suited for nonparametric settings, where the underlying clusters are usually far from convex, making standard methods like K-means irrelevant.  However, a drawback of standard spectral approaches such as the well-known variation of \citet*{Ng02} is their inability to separate intersecting clusters.  Indeed, consider the simplest situation where two straight clusters intersect at right angle, pictured in \figref{segments} below.
\begin{figure}[htbp]
\vspace{0.2in}
\begin{center}
\centering
\includegraphics[width=.27\columnwidth]{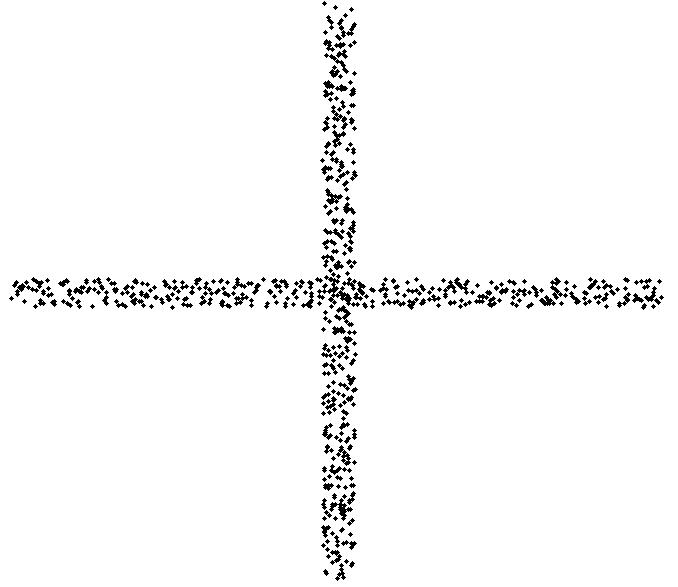} \qquad 
\includegraphics[width=.27\columnwidth]{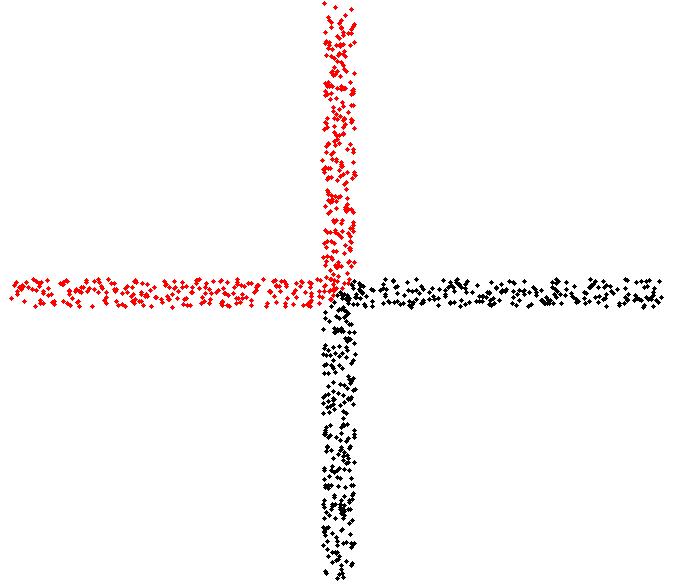} \qquad 
\includegraphics[width=.27\columnwidth]{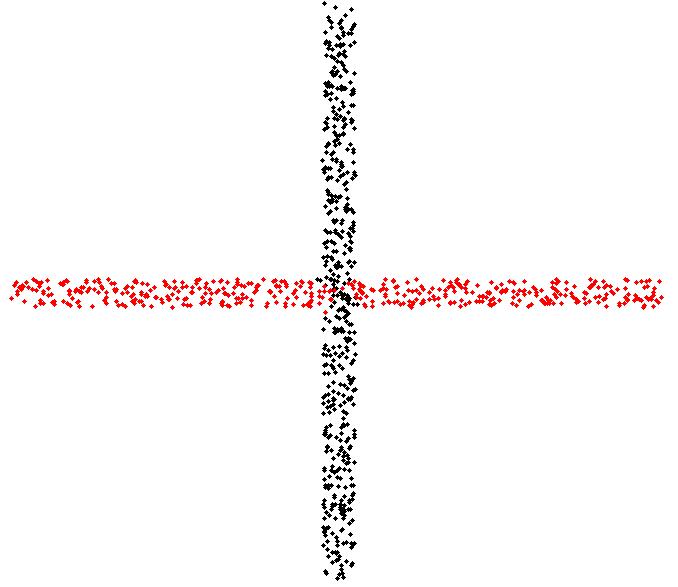}
\caption{Two rectangular clusters intersecting at right angle.  Left: the original data.  Center: a typical output of the standard spectral clustering method of \citet{Ng02}, which is generally unable to resolve intersections.  Right: our method.}
\label{fig:segments}
\end{center}
\end{figure}
The algorithm of \citet{Ng02} is based on pairwise affinities that are decreasing in the distances between data points, making it insensitive to smoothness and, therefore, intersections.  And indeed, this algorithm typically fails to separate intersecting clusters, even in the easiest setting of \figref{segments}.

As argued in \citep{Agarwal05,Agarwal06,Shashua06}, a multiway affinity is needed to capture complex structure in data (here, smoothness) beyond proximity attributes.  For example, \citet{spectral_applied} use a flatness affinity in the context of {\em hybrid linear modeling}, where the surfaces are assumed to be affine subspaces, and subsequently extended to algebraic surfaces via the `kernel trick' \citep*{AtevKSCC}.  Moving beyond parametric models, \citet*{higher-order} consider a localized measure of flatness; see also \citet{NIPS2011_0065}.  Continuing this line of work, we suggest a spectral clustering method based on the estimation of the local linear structure (tangent bundle) via local principal component analysis (PCA).

The idea of using local PCA combined with spectral clustering has precedents in the literature.  In particular, our method is inspired by the work of \citet*{goldberg2009multi}, where the authors develop a spectral clustering method within a semi-supervised learning framework.  Local PCA is also used in the multiscale, spectral-flavored algorithm of \citet*{kushnir}.  This approach is in the zeitgeist.  While writing this paper, 
we became aware of two very recent publications, by \citet*{wang2011spectral} and by \citet*{Gong2012}, both proposing approaches very similar to ours.
We comment on these spectral methods in more detail later on.

The basic proposition of local PCA combined with spectral clustering has two main stages. The first one forms an affinity between a pair of data points that takes into account both their Euclidean distance and a measure of discrepancy between their tangent spaces. Each tangent space is estimated by PCA in a local neighborhood around each point. The second stage applies standard spectral clustering with this affinity.  As a reality check, this relatively simple algorithm succeeds at separating the straight clusters in \figref{segments}.  We tested our algorithm in more elaborate settings, some of them described in \secref{numerics}.

Besides spectral-type approaches to multi-manifold clustering, other methods appear in the literature.  The methods we know of either assume that the different surfaces do not intersect \citep{polito2001grouping}, or that the intersecting surfaces have different intrinsic dimension or density \citep{gionis,Haro06}.  The few exceptions tend to propose very complex methods that promise to be challenging to analyze \citep{souvenir,energy}.

Our contribution is the design and detailed study of a prototypical spectral clustering algorithm based on local PCA, tailored to settings where the underlying clusters come from sampling in the vicinity of smooth surfaces that may intersect.  We endeavored to simplify the algorithm as much as possible without sacrificing performance.
We provide theoretical results for simpler variants within a standard mathematical framework for multi-manifold clustering.
To our knowledge, these are the first mathematically backed successes at the task of resolving intersections in the context of multi-manifold clustering, with the exception of \citep{higher-order}, where the corresponding algorithm is shown to succeed at identifying intersecting curves.
The salient features of that algorithm are illustrated via numerical experiments.

The rest of the paper is organized as follows.
In \secref{algo}, we introduce our methods.
In \secref{math}, we analyze our methods in a standard mathematical framework for multi-manifold learning.
In \secref{numerics}, we perform some numerical experiments illustrating several features of our algorithm.
In \secref{discussion}, we discuss possible extensions.

\section{The methodology}
\label{sec:algo}

We introduce our algorithm and simpler variants that are later analyzed in a mathematical framework.  We start with some review of the literature, zooming in on the most closely related publications.

\subsection{Some precedents}
Using local PCA within a spectral clustering algorithm was implemented in four other publications we know of \citep{goldberg2009multi,kushnir,Gong2012,wang2011spectral}.  As a first stage in their semi-supervised learning method, \cite*{goldberg2009multi} design a spectral clustering algorithm.  The method starts by subsampling the data points, obtaining `centers' in the following way.  Draw $\by_1$ at random from the data and remove its $\ell$-nearest neighbors from the data.  Then repeat with the remaining data, obtaining centers $\by_1, \by_2, \dots$.  Let $\bC_i$ denote the sample covariance in the neighborhood of $\by_i$ made of its $\ell$-nearest neighbors.  An $m$-nearest-neighbor graph is then defined on the centers in terms of the Mahalanobis distances.  Explicitly, the centers $\by_i$ and $\by_j$ are connected in the graph if $\by_j$ is among the $m$ nearest neighbors of $\by_i$ in Mahalanobis distance
\beq \label{maha}
\|\bC_{i}^{-1/2}(\by_i - \by_j)\|,
\eeq
or vice-versa.  The parameters $\ell$ and $m$ are both chosen of order $\log n$.
An existing edge between $\by_i$ and $\by_j$ is then weighted by $\exp(-H_{ij}^2/\eta^2)$, where $H_{ij}$ denotes the Hellinger distance between the probability distributions $\cN({\bf 0}, \bC_i)$ and $\cN({\bf 0}, \bC_j)$.  The spectral graph partitioning algorithm of \citet*{Ng02} --- detailed in \algref{njw} --- is then applied to the resulting affinity matrix, with some form of constrained K-means.
We note that \cite{goldberg2009multi} evaluate their method in the context of semi-supervised learning where the clustering routine is only required to return subclusters of actual clusters.  In particular, the data points other than the centers are discarded.
Note also that their evaluation is empirical.

\begin{algorithm}[htb]
\caption{\quad Spectral Graph Partitioning \citep*{Ng02}}
\label{alg:njw}
\vspace{.1in}
{\bf Input:} \\
\hspace*{.1in} Affinity matrix $\bW = (W_{ij})$, size of the partition $K$ \\[.1in]
{\bf Steps:}\\
\hspace*{.1in} {\bf 1:} Compute $\bZ = (Z_{ij})$ according to $Z_{ij} = W_{ij}/\sqrt{D_i D_j},$ with $D_i = \sum_{j=1}^n W_{ij}$. \\ 
\hspace*{.1in} {\bf 2:} Extract the top $K$ eigenvectors of $\bZ$. \\
\hspace*{.1in} {\bf 3:} Renormalize each row of the resulting $n \times K$ matrix. \\
\hspace*{.1in} {\bf 4:} Apply $K$-means to the row vectors. \\[-.05in]
\end{algorithm}

The algorithm proposed by \cite*{kushnir} is multiscale and works by coarsening the neighborhood graph and computing sampling density and geometric information inferred along the way such as obtained via PCA in local neighborhoods.  This bottom-up flow is then followed by a top-down pass, and the two are iterated a few times.  The algorithm is too complex to be described in detail here, and probably too complex to be analyzed mathematically.  The clustering methods of \cite{goldberg2009multi} and ours can be seen as simpler variants that only go bottom up and coarsen the graph only once.

In the last stages of writing this paper, we learned of the works of \cite*{wang2011spectral} and \cite*{Gong2012}, who propose algorithms very similar to our \algref{proj} detailed below.  Note that these publications do not provide any theoretical guarantees for their methods, which is one of our main contributions here.

\subsection{Our algorithms}
We now describe our method and propose several variants.
Our setting is standard: we observe data points $\bx_1, \dots, \bx_n \in \bbR^D$ that we assume were sampled in the vicinity of $K$ smooth surfaces embedded in $\bbR^D$.  The setting is formalized later in \secref{setting}.

\subsubsection{Connected component extraction: comparing local covariances}
\label{sec:cov}

We start with our simplest variant, which is also the most natural.
The method depends on a neighborhood radius $\rad > 0$, a spatial scale parameter $\eps > 0$ and a covariance (relative) scale $\eta > 0$.
For a vector $\bx$, $\|\bx\|$ denotes its Euclidean norm, and for a (square) matrix $\bA$, $\|\bA\|$ denotes its spectral norm. For $n \in \bbN$, we denote by $[n]$ the set $\{1,\ldots,n\}$.
Given a data set $\bx_1, \dots, \bx_n$, for any point $\bx \in \bbR^D$ and $\rad > 0$, define the neighborhood
\beq \label{Nr}
N_\rad(\bx) = \{\bx_j : \|\bx - \bx_j\| \le \rad\}.
\eeq

\begin{algorithm}[htb]
\caption{\quad Connected Component Extraction: Comparing Covariances}
\label{alg:cov}
\vspace{.1in}
{\bf Input:} \\
\hspace*{.1in} Data points $\bx_1, \dots, \bx_n$; neighborhood radius $\rad > 0$; spatial scale $\eps > 0$, covariance scale $\eta > 0$. \\[.1in]
{\bf Steps:}\\
\hspace*{.1in} {\bf 1:} For each $i \in [n]$, compute the sample covariance matrix $\bC_i$ of $N_\rad(\bx_i)$. \\
\hspace*{.1in} {\bf 2:} Compute the following affinities between data points:
\beq \label{cov-aff}
W_{ij} = \1_{\{\|\bx_i - \bx_j\| \le \eps\}} \cdot \1_{\{\|\bC_i - \bC_j\| \le \eta \rad^2\}}.
\eeq
\hspace*{.1in} {\bf 3:} Remove $\bx_i$ when there is $\bx_j$ such that $\|\bx_{j} - \bx_i\| \le \rad$ and $\|\bC_{j} - \bC_{i}\| > \eta \rad^2$. \\
\hspace*{.1in} {\bf 4:} Extract the connected components of the resulting graph. \\
\hspace*{.1in} {\bf 5:} Points removed in Step~3 are grouped with the closest point that survived Step~3. \\[-.05in]
\end{algorithm}

In summary, the algorithm first creates an unweighted graph: the nodes of this graph are the data points and edges are formed between two nodes if both the distance between these nodes and the distance between the local covariance structures at these nodes are sufficiently small.  After removing the points near the intersection at Step~3, the algorithm then extracts the connected components of the graph.


In principle, the neighborhood size $\rad$ is chosen just large enough that performing PCA in each neighborhood yields a reliable estimate of the local covariance structure.  For this, the number of points inside the neighborhood needs to be large enough, which depends on the sample size $n$, the sampling density, intrinsic dimension of the surfaces and their surface area (Hausdorff measure), how far the points are from the surfaces (i.e., noise level), and the regularity of the surfaces.
The spatial scale parameter $\eps$ depends on the sampling density and $\rad$.  It needs to be large enough that a point has plenty of points within distance $\eps$, including some across an intersection, so each cluster is strongly connected.  At the same time, $\eps$ needs to be small enough that a local linear approximation to the surfaces is a relevant feature of proximity.  Its choice is rather similar to the choice of the scale parameter in standard spectral clustering \citep{Ng02,Zelnik-Manor04}.
The orientation scale $\eta$ needs to be large enough that centers from the same cluster and within distance $\eps$ of each other have local covariance matrices within distance $\eta \rad^2$, but small enough that points from different clusters near their intersection have local covariance matrices separated by a distance substantially larger than $\eta \rad^2$.  This depends on the curvature of the surfaces and the incidence angle at the intersection of two (or more) surfaces.  Note that a typical covariance matrix over a ball of radius $\rad$ has norm of order $\rad^2$, which justifies using our choice of parametrization.
In the mathematical framework we introduce later on, these parameters can be chosen automatically as done in \citep{higher-order}, at least when the points are sampled exactly on the surfaces.  We will not elaborate on that since in practice this does not inform our choice of parameters.

The rationale behind Step~3 is as follows.  As we just discussed, the parameters need to be tuned so that points from the same cluster and within distance $\eps$ have local covariance matrices within distance $\eta \rad^2$.  Hence, $\bx_i$ and $\bx_j$ in Step~3 are necessarily from different clusters.  Since they are near each other, in our model this will imply that they are close to an intersection.  Therefore, roughly speaking, Step~3 removes points near an intersection.

Although this method works in simple situations like that of two intersecting segments (\figref{segments}), it is not meant to be practical.
Indeed, extracting connected components is known to be sensitive to spurious points and therefore unstable.
Furthermore, we found that comparing local covariance matrices as in affinity \eqref{cov-aff} tends to be less stable than comparing local projections as in affinity \eqref{proj-aff}, which brings us to our next variant.

\subsubsection{Connected component extraction: comparing local projections}
\label{sec:proj}

We present another variant also based on extracting the connected components of a neighborhood graph that compares orthogonal projections onto the largest principal directions.

\begin{algorithm}[htb]
\caption{\quad Connected Component Extraction: Comparing Projections}
\label{alg:proj}
\vspace{.1in}
{\bf Input:} \\
\hspace*{.1in} Data points $\bx_1, \dots, \bx_n$; neighborhood radius $\rad > 0$, spatial scale $\eps > 0$, projection scale $\eta > 0$. \\[.1in]
{\bf Steps:}\\
\hspace*{.1in} {\bf 1:} For each $i \in [n]$, compute the sample covariance matrix $\bC_i$ of $N_\rad(\bx_i)$.  \\
\hspace*{.1in} {\bf 2:} Compute the projection $\bQ_i$ onto the eigenvectors of $\bC_i$ with eigenvalue exceeding $\sqrt{\eta} \, \|\bC_i\|$. \\
\hspace*{.1in} {\bf 3:} Compute the following affinities between data points:
\beq \label{proj-aff}
W_{ij} = \1_{\{\|\bx_i - \bx_j\| \le \eps\}} \cdot \1_{\{\|\bQ_i - \bQ_j\| \le \eta\}}.
\eeq
\hspace*{.1in} {\bf 4:} Extract the connected components of the resulting graph. \\
\end{algorithm}

We note that the local intrinsic dimension is determined by thresholding the eigenvalues of the local covariance matrix, keeping the directions with eigenvalues within some range of the largest eigenvalue.
The same strategy is used by \cite{kushnir}, but with a different threshold.
The method is a hard version of what we implemented, which we describe next.

\subsubsection{Covariances or projections?}
\label{sec:compare}

In our numerical experiments, we tried working both directly with covariance matrices as in \eqref{cov-aff} and with projections as in \eqref{proj-aff}.
Note that in our experiments we used spectral graph partitioning with soft versions of these affinities, as described in \secref{stylized}.
We found working with projections to be more reliable.
The problem comes, in part, from boundaries.
When a surface has a boundary, local covariances over neighborhoods that overlap with the boundary are quite different from local covariances over nearby neighborhoods that do not touch the boundary.
Consider the example of two segments, $S_1$ and $S_2$, intersecting at an angle of $\theta \in (0, \pi/2)$ at their middle point, specifically
\[
S_1 = [-1,1] \times \{0\}, \qquad S_2 = \{(x, x \tan \theta): x \in [-\cos \theta,\cos \theta]\}.
\]
Assume there is no noise and that the sampling is uniform.  Assume $\rad \in (0, \frac12 \sin \theta)$ so that the disc centered at $\bx_1 := (1/2,0)$ does not intersect $S_2$, and the disc centered at $\bx_2 := (\frac12 \cos \theta, \frac12 \tan \theta)$ does not intersect $S_1$.  Let $\bx_0 = (1,0)$.  For $\bx \in S_1 \cup S_2$, let $\bC_\bx$ denote the local covariance at $\bx$ over a ball of radius $\rad$.  Simple calculations yield:
\[
\bC_{(1,0)} = \frac{\rad^2}{12} \begin{pmatrix} 1 & 0 \\ 0 & 0 \end{pmatrix}, \quad
\bC_{\bx_1} = \frac{\rad^2}{3} \begin{pmatrix} 1 & 0 \\ 0 & 0 \end{pmatrix}, \quad
\bC_{\bx_2} = \frac{\rad^2}{3} \begin{pmatrix} \cos^2 \theta & \sin(\theta) \cos(\theta) \\ \sin(\theta) \cos(\theta) & \sin^2 \theta \end{pmatrix},
\]
and therefore
\[
\|\bC_{\bx_0} - \bC_{\bx_1}\| = \frac{\rad^2}{4}, \quad \|\bC_{\bx_1} - \bC_{\bx_2}\| = \frac{\sqrt{2} \rad^2}3 \sin \theta.
\]
When $\sin \theta \le \frac3{4\sqrt{2}}$ (roughly, $\theta \le 32^o$), the difference in Frobenius norm between the local covariances at $\bx_0, \bx_1 \in S_1$ is larger than that at $\bx_1 \in S_1$ and $\bx_2 \in S_2$.  As for projections, however,
\[
\bQ_{\bx_0} = \bQ_{\bx_1} = \begin{pmatrix} 1 & 0 \\ 0 & 0 \end{pmatrix}, \quad
\bQ_{\bx_2} = \begin{pmatrix} \cos^2 \theta & \sin(\theta) \cos(\theta) \\ \sin(\theta) \cos(\theta) & \sin^2 \theta \end{pmatrix},
\]
so that
\[
\|\bQ_{\bx_0} - \bQ_{\bx_1}\| = 0, \quad \|\bQ_{\bx_1} - \bQ_{\bx_2}\| = \sqrt{2} \sin \theta.
\]

While in theory the boundary points account for a small portion of the sample, in practice this is not the case and we find that spectral graph partitioning is challenged by having points near the boundary that are far (in affinity) from nearby points from the same cluster.  This may explain why the (soft version of) affinity \eqref{proj-aff} yields better results than the (soft version of) affinity \eqref{cov-aff} in our experiments.

\subsubsection{Spectral Clustering Based on Local PCA}
\label{sec:stylized}

The following variant is more robust in practice and is the algorithm we actually implemented.
The method assumes that the surfaces are of same dimension $d$ and that they are $K$ surfaces, with both parameters $K$ and $d$ known.

\begin{algorithm}[htb]
\caption{\quad Spectral Clustering Based on Local PCA}
\label{alg:spectral}
\vspace{.1in}
{\bf Input:} \\
\hspace*{.1in} Data points $\bx_1, \dots, \bx_n$; neighborhood radius $\rad > 0$; spatial scale $\eps > 0$, projection scale $\eta > 0$; intrinsic dimension $d$; number of clusters $K$. \\[.1in]
{\bf Steps:}\\
\hspace*{.1in} {\bf 0:} Pick one point $\by_1$ at random from the data.  Pick another point $\by_2$ among the data points not included in $N_\rad(\by_1)$, and repeat the process, selecting centers $\by_1, \dots, \by_{n_0}$.\\
\hspace*{.1in} {\bf 1:} For each $i = 1, \dots, n_0$, compute the sample covariance matrix $\bC_i$ of $N_\rad(\by_i)$.  Let $\bQ_i$ denote the orthogonal projection onto the space spanned by the top $d$ eigenvectors of $\bC_i$.\\
\hspace*{.1in} {\bf 2:} Compute the following affinities between center pairs:
\beq \label{aff}
W_{ij} = \exp\left(-\frac{\|\by_i - \by_j\|^2}{\eps^2}\right) \cdot \exp\left(-\frac{\|\bQ_i - \bQ_j\|^2}{\eta^2} \right). 
\eeq
\hspace*{.1in} {\bf 3:} Apply spectral graph partitioning (\algref{njw}) to $\bW$.\\
\hspace*{.1in} {\bf 4:} The data points are clustered according to the closest center in Euclidean distance.\\[-.05in]
\end{algorithm}

We note that $\by_1, \dots, \by_{n_0}$ forms an $\rad$-packing of the data.
The underlying rationale for this coarsening is justified in \citep{goldberg2009multi} by the fact that the covariance matrices, and also the top principal directions, change smoothly with the location of the neighborhood, so that without subsampling these characteristics would not help detect the abrupt event of an intersection.
The affinity \eqref{aff} is of course a soft version of \eqref{proj-aff}.


\subsubsection{Comparison with closely related methods}
We highlight some differences with the other proposals in the literature.
We first compare our approach to that of \cite{goldberg2009multi}, which was our main inspiration.
\bitem
\item {\em Neighborhoods.}  Comparing with \cite{goldberg2009multi}, we define neighborhoods over $\rad$-balls instead of $\ell$-nearest neighbors, and connect points over $\eps$-balls instead of $m$-nearest neighbors.  This choice is for convenience, as these ways are in fact essentially equivalent when the sampling density is fairly uniform.  This is elaborated at length in \citep{1519716,brito,5714241}.
\item {\em Mahalanobis distances.}
\cite{goldberg2009multi} use Mahalanobis distances \eqref{maha} between centers.  In our version, we could for example replace the Euclidean distance $\|\bx_i -\bx_j\|$ in the affinity \eqref{cov-aff} with the average Mahalanobis distance
\beq \label{maha2}
\|\bC_{i}^{-1/2}(\bx_i - \bx_j)\| + \|\bC_{j}^{-1/2}(\bx_j - \bx_i)\|.
\eeq
We actually tried this and found that the algorithm was less stable, particularly under low noise.  Introducing a regularization in this distance --- which requires the introduction of another parameter --- solves this problem partially.

That said, using Mahalanobis distances makes the procedure less sensitive to the choice of $\eps$, in that neighborhoods may include points from different clusters.  Think of two parallel line segments separated by a distance of $\delta$, and assume there is no noise, so the points are sampled exactly from these segments.  Assuming an infinite sample size, the local covariance is the same everywhere so that points within distance $\eps$ are connected by the affinity \eqref{cov-aff}.  Hence, \algref{cov} requires that $\eps < \delta$.  In terms of Mahalanobis distances, points on different segments are infinitely separated, so a version based on these distances would work with any $\eps > 0$.  In the case of curved surfaces and/or noise, the situation is similar, though not as evident.  Even then, the gain in performance guarantees is not obvious, since we only require that $\eps$ be slightly larger in order of magnitude that $\rad$.

\item {\em Hellinger distances.}  As we mentioned earlier, \cite{goldberg2009multi} use Hellinger distances of the probability distributions $\cN({\bf 0}, \bC_i)$ and $\cN({\bf 0}, \bC_j)$ to compare  covariance matrices, specifically
\beq \label{hellinger}
\left(1 - 2^{D/2} \frac{\det(\bC_i \bC_j)^{1/4}}{\det(\bC_i + \bC_j)^{1/2}} \right)^{1/2},
\eeq
if $\bC_i$ and $\bC_j$ are full-rank.
While using these distances or the Frobenius distances makes little difference in practice, we find it easier to work with the latter when it comes to proving theoretical guarantees.  Moreover, it seems more natural to assume a uniform sampling distribution in each neighborhood rather than a normal distribution, so that using the more sophisticated similarity \eqref{hellinger} does not seem justified.

\item {\em K-means.}
We use K-means++ for a good initialization.  However, we found that the more sophisticated size-constrained K-means \citep{constrained-k-means} used in \citep{goldberg2009multi} did not improve the clustering results.

\eitem

%

As we mentioned above, our work was developed in parallel to that of \cite{wang2011spectral} and \cite{Gong2012}.
We highlight some differences.  They do not subsample, but estimate the local tangent space at each data point $\bx_i$.  \cite{wang2011spectral} fit a mixture of $d$-dimensional affine subspaces to the data using MPPCA \citep{tipping1999mixtures}, which is then used to estimate the tangent subspaces at each data point.
\cite{Gong2012} develop some sort of robust local PCA.
While \cite{wang2011spectral} assume all surfaces are of same dimension known to the user, \cite{Gong2012} estimate that locally by looking at the largest gap in the spectrum of estimated local covariance matrix.
This is similar in spirit to what is done in Step~2 of \algref{proj}, but we did not include this step in \algref{spectral} because we did not find it reliable in practice.
We also tried estimating the local dimensionality using the method of \cite{little2009multiscale}, but this failed in the most complex cases.

\cite{wang2011spectral} use a nearest-neighbor graph and their affinity is defined as
\beq \label{wang}
W_{ij} = \Delta_{ij} \cdot \left(\prod_{s = 1}^d \cos \theta_s(i,j)\right)^\alpha,
\eeq
where $\Delta_{ij} = 1$ if $\bx_i$ is among the $\ell$-nearest neighbors of $\bx_j$, or vice versa, while $\Delta_{ij} = 0$ otherwise; $\theta_1(i,j) \ge \cdots \ge \theta_d(i,j)$ are the principal (aka, canonical) angles \citep{MR1061154} between the estimated tangent subspaces at $\bx_i$ and $\bx_j$.  $\ell$ and $\alpha$ are parameters of the method.
\cite{Gong2012} define an affinity that incorporates the self-tuning method of \cite{Zelnik-Manor04}; in our notation, their affinity is
\beq \label{gong}
\exp\left(-\frac{\|\bx_i - \bx_j\|^2}{\eps_i \eps_j}\right) \cdot \exp\left(-\frac{\asin^2 \|\bQ_i - \bQ_j\|}{\eta^2 \|\bx_i - \bx_j\|^2/(\eps_i \eps_j)} \right). 
\eeq
where $\eps_i$ is the distance from $\bx_i$ to its $\ell$-nearest neighbor.  $\ell$ is a parameter.

Although we do not analyze their respective ways of estimating the tangent subspaces, our analysis provides essential insights into their methods, and for that matter, any other method built on spectral clustering based on tangent subspace comparisons.

\section{Mathematical Analysis}
\label{sec:math}

While the analysis of \algref{spectral} seems within reach, there are some complications due to the fact that points near the intersection may form a cluster of their own --- we were not able to discard this possibility.  Instead, we study the simpler variants described in \algref{cov} and \algref{proj}.
Even then, the arguments are rather complex and interestingly involved.  The theoretical guarantees that we thus obtain for these variants are stated in \thmref{main} and proved in \secref{proofs}.  We comment on the analysis of \algref{spectral} right after that.  We note that there are very few theoretical results on resolving intersecting clusters.  In fact, we are only aware of \citep{spectral_theory} in the context of affine surfaces, \citep{Soltanolkotabi_Candes2011} in the context of affine surfaces without noise and \citep{higher-order} in the context of curves.

The generative model we assume is a natural mathematical framework for multi-manifold learning where points are sampled in the vicinity of smooth surfaces embedded in Euclidean space.  For concreteness and ease of exposition, we focus on the situation where two surfaces (i.e., $K = 2$) of same dimension $1 \le d \le D$ intersect.  This special situation already contains all the geometric intricacies of separating intersecting clusters.  On the one hand, clusters of different intrinsic dimension may be separated with an accurate estimation of the local intrinsic dimension without further geometry involved \citep{Haro06}.  On the other hand, more complex intersections (3-way and higher) complicate the situation without offering truly new challenges.
For simplicity of exposition, we assume that the surfaces are submanifolds without boundary, though it will be clear from the analysis (and the experiments) that the method can handle surfaces with (smooth) boundaries that may self-intersect.  We discuss other possible extensions in \secref{discussion}.

Within that framework, we show that \algref{cov} and \algref{proj} are able to identify the clusters accurately except for points near the intersection.  Specifically, with high probability with respect to the sampling distribution, \algref{cov} divides the data points into two groups such that, except for points within distance $C \eps$ of the intersection, all points from the first cluster are in one group and all points from the second cluster are in the other group.  The constant $C$ depends on the surfaces, including their curvatures, separation between them and intersection angle.
The situation for \algref{proj} is more complex, as it may return more than two clusters, but the main feature is that most of two clusters (again, away from the intersection) are in separate connected components.

\subsection{Generative model}
\label{sec:setting}

Each surface we consider is a connected, $C^2$ and compact submanifold without boundary and of dimension $d$ embedded in $\R^D$.  Any such surface has a positive reach, which is what we use to quantify smoothness.  The notion of reach was introduced by \cite{MR0110078}.  Intuitively, a surface has reach exceeding $r$ if, and only if, one can roll a ball of radius $r$ on the surface without obstruction~\citep{walther}.
Formally, for $\bx \in \bbR^D$ and $S \subset \bbR^D$, let
\[
\dist(\bx, S) = \inf_{\bs \in S} \|\bx - \bs\|,
\]
and
\[
B(S, r) = \{\bx : \dist(\bx, S) < r\},
\]
which is often called the $r$-tubular neighborhood (or $r$-neighborhood) of $S$.
The reach of $S$ is the supremum over $r> 0$ such that, for each
$\bx \in B(S, r)$, there is a unique point in $S$ nearest~$\bx$.
It is well-known that, for $C^2$ submanifolds, the reach bounds the radius of curvature from below~\citep[Lem.~4.17]{MR0110078}.
For submanifolds without boundaries, the reach coincides with the condition number introduced in~\citep{1349695}.

When two surfaces $S_1$ and $S_2$ intersect, meaning $S_1 \cap S_2 \neq \emptyset$, we define their incidence angle as
\beq \label{theta}
\theta(S_1,S_2) := \inf \left( \thetamin(T_{S_1}(\bs), T_{S_2}(\bs)) : \bs \in S_1 \cap S_2\right),
\eeq
where $T_S(\bs)$ denote the tangent subspace of submanifold $S$ at point $\bs \in S$, and $\thetamin(T_1, T_2)$ is the smallest {\em nonzero} principal (aka, canonical) angle between subspaces $T_1$ and $T_2$~\citep{MR1061154}.

The clusters are generated as follows.  Each data point $\bx_i$ is drawn according to
\beq \label{data-point}
\bx_i = \bs_i + \bz_i,
\eeq
where $\bs_i$ is drawn from the uniform distribution over $S_1 \cup S_2$ and $\bz_i$ is an additive noise term satisfying $\|\bz_i\| \leq \tau$ --- thus $\tau$ represents the noise or jitter level, and $\tau = 0$ means that the points are sampled on the surfaces.  We assume the points are sampled independently of each other.  We let
\beq \label{I}
I_k = \{i: \bs_i \in S_k\},
\eeq
and the goal is to recover the groups $I_1$ and $I_2$, up to some errors.

\subsection{Performance guarantees}
\label{sec:clustering}

We state some performance guarantees for \algref{cov} and \algref{proj}.

\begin{thm} \label{thm:main}
Consider two connected, compact, twice continuously differentiable submanifolds without boundary, of same dimension, intersecting at a strictly positive angle, with the intersection set having strictly positive reach.
Assume the parameters are set so that
\beq \label{main}
\tau \le \rad \eta/C, \quad \rad \le \eps/C, \quad \eps \le \eta/C, \quad \eta \le 1/C,
\eeq
and $C > 0$ is large enough.
Then with probability at least $1 - C n \exp\big[- n \rad^d \eta^2/C\big]$:
\bitem
\item \algref{cov} returns exactly two groups such that two points from different clusters are not grouped together unless one of them is within distance $C \rad$ from the intersection.
\item \algref{proj} returns at least two groups, and such that two points from different clusters are not grouped together unless one of them is within distance $C \rad$ from the intersection.
\eitem
\end{thm}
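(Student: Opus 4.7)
The plan is to separate deterministic geometry from sampling noise. Introduce the population covariance $\bar{\bC}(\bx) := \E[(\bX - \E \bX)(\bX - \E \bX)^\top]$ where $\bX$ is uniform on $B(\bx,\rad) \cap (S_1 \cup S_2)$ (suitably tilted by the noise distribution), and the population projection $\bar{\bQ}(\bx)$ onto its top $d$ eigenvectors. I would first prove the theorem with $\bC_i, \bQ_i$ replaced by $\bar{\bC}(\bs_i), \bar{\bQ}(\bs_i)$ — a purely geometric statement — and then show via a uniform concentration argument that the empirical quantities are close enough to their population counterparts to carry the conclusion through for the tolerances $\eta \rad^2$ and $\eta$ appearing in \eqref{cov-aff} and \eqref{proj-aff}.

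\textbf{Deterministic geometric step.} Let $\rho$ denote the minimum of the reaches of $S_1, S_2$ and of their intersection, and let $\theta := \theta(S_1,S_2) > 0$. For $\bx$ with $\dist(\bx, S_1 \cap S_2) \ge C_0 \rad$ and $\bx \in B(S_k,\tau)$, the portion of $S_k$ inside $B(\bx,\rad)$ is the graph of a $C^2$ function over the tangent space $T := T_{S_k}(\pi_{S_k}(\bx))$ with curvature controlled by $1/\rho$, so a Taylor expansion of the exponential map yields
\[
\bar{\bC}(\bx) \;=\; \tfrac{\rad^2}{d+2}\,\bQ_T \;+\; E(\bx), \qquad \|E(\bx)\| \;\lesssim\; \rad^3/\rho + \tau \rad.
\]
Using the reach bound, for $\bx, \bx'$ on the same surface $S_k$ with $\|\bx - \bx'\| \le \eps$, the principal-angle perturbation theory of \citep{MR1061154} (or equivalently Davis–Kahan) gives $\|\bQ_{T_{S_k}(\bx)} - \bQ_{T_{S_k}(\bx')}\| \lesssim \eps/\rho$, hence $\|\bar{\bC}(\bx) - \bar{\bC}(\bx')\| \lesssim \rad^2 \eps/\rho + \rad^3/\rho + \tau \rad$, which is bounded by $\eta \rad^2 / 2$ once the conditions \eqref{main} are enforced with $C$ large. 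Conversely, if $\bx \in S_1, \bx' \in S_2$ both lie at distance $\ge C_0 \rad$ from $S_1 \cap S_2$ but within $\eps$ of each other, an argument using the definition \eqref{theta} and continuity of tangent spaces shows $\|\bQ_{T_{S_1}(\bx)} - \bQ_{T_{S_2}(\bx')}\| \gtrsim \sin \theta$, and therefore $\|\bar{\bC}(\bx) - \bar{\bC}(\bx')\| \gtrsim \rad^2 \sin \theta > 2\eta \rad^2$.

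\textbf{Stochastic step.} The number of sample points in each neighborhood $N_\rad(\bx_i)$ concentrates around its mean, which is of order $n \rad^d$ by the positive reach and uniformity of the sampling measure on $S_1 \cup S_2$; a Chernoff bound fails with probability $\le \exp(-c\, n\rad^d)$. Conditionally, a matrix Bernstein inequality applied to the $d$-dimensional covariance summands (each with spectral norm $\lesssim \rad^2$) yields
\[
\P\!\left(\|\bC_i - \bar{\bC}(\bs_i)\| > \eta\rad^2/C\right) \;\le\; \exp\!\left(-c\, n\rad^d \eta^2\right),
\]
and a Davis–Kahan comparison transfers this into $\|\bQ_i - \bar{\bQ}(\bs_i)\| \lesssim \eta$ whenever the spectral gap in $\bar{\bC}(\bs_i)$ exceeds $2\eta\rad^2$, which holds away from the intersection by the previous step. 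A union bound over $i \in [n]$ gives the failure probability $C n \exp(-n\rad^d\eta^2/C)$ advertised in the theorem.

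\textbf{Combinatorial conclusion and main obstacle.} On the good event, Step 2 of Algorithm \ref{alg:cov} places an edge between any two points from the same cluster within spatial distance $\eps$ (by the same-cluster bound), and does \emph{not} place an edge between points of different clusters that both lie at distance $\ge C \rad$ from the intersection. Moreover, any point $\bx_i$ at distance $\ge C\rad$ from the intersection has its entire $\rad$-ball contained on one surface, so Step 3 does not remove it; and the edge structure on each such set is connected (by a standard $\eps$-packing/covering argument using compactness and $\eps \gg \rad$), yielding exactly two connected components. The same argument proves the Algorithm \ref{alg:proj} claim, with the caveat that Step 3 there does not remove intersection points, so near-intersection fragments may appear as extra connected components — hence the statement "at least two groups." The principal obstacle is the geometric lower bound $\|\bar{\bC}(\bx) - \bar{\bC}(\bx')\| \gtrsim \rad^2 \sin \theta$ for cross-cluster pairs near (but not on) the intersection: the two neighborhoods $B(\bx,\rad)$ and $B(\bx',\rad)$ may each pick up a thin slice of the other surface when $\bx, \bx'$ are close to $S_1 \cap S_2$, so the population covariance is not a clean tangent projection. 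Handling this requires quantifying how quickly the contribution of the "wrong" surface decays with $\dist(\bx, S_1 \cap S_2)/\rad$ and exploiting the positive reach of the intersection set to rule out tangential contact.
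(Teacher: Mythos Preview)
Your overall architecture---population covariances $\bar\bC(\bx)$, concentration via matrix Hoeffding/Bernstein to get $\|\bC_i-\bar\bC(\bs_i)\|\le \eta\rad^2/C$ with the stated tail, Davis--Kahan to pass to projections, and an $\eps$-packing argument for connectivity of each cluster---matches the paper's proof closely, and the same-cluster/different-cluster bounds you state for points whose $\rad$-balls touch only one surface are exactly the paper's \eqref{eta-choice1S} and \eqref{eta-choice2S}.

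The genuine gap is your treatment of points $\bx_i$ whose $\rad$-ball meets both surfaces (the set the paper calls $I_\star^c$). Your ``principal obstacle'' paragraph proposes to prove a cross-cluster lower bound $\|\bar\bC(\bx)-\bar\bC(\bx')\|\gtrsim \rad^2\sin\theta$ even for such pairs, by controlling the thin slice from the other surface. This bound is false in general: if $\bx\in S_1$ and $\bx'\in S_2$ are both within, say, $\rad/100$ of $S_1\cap S_2$, then both population covariances are close to the same mixture $\tfrac{\rad^2}{2(d+2)}(P_{T_1}+P_{T_2})$ and hence close to each other, not $\rad^2\sin\theta$ apart. So you cannot separate the clusters by edge-absence alone; near-intersection points can and do link up. The paper's mechanism for \algref{cov} is different: it shows that Step~3 removes \emph{every} point in $I_\star^c$. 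This requires an extra geometric fact (the paper's \lemref{intersect}): for any $\bs$ with $\dist(\bs,S_1\cap S_2)\le C\rad$, as one slides the center along a line perpendicular to $T_1\cap T_2$ by at most $\rad/2$, the population covariance changes by at least $\rad^2/C$. Hence there is always a sample point $\bx_j$ within $\rad$ of $\bx_i$ with $\|\bC_j-\bC_i\|>\eta\rad^2$, triggering removal. Your proposal contains the ingredients (decay of the ``wrong'' slice, positive reach of the intersection) but points them at the wrong target.

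For \algref{proj} there is no removal step, and your sentence ``near-intersection fragments may appear as extra connected components'' does not by itself explain why such fragments cannot bridge the two clusters. The paper's device here is dimension: for $\bs$ with $\dist(\bs,S_2)$ small enough, the $(d{+}1)$st eigenvalue of the population covariance satisfies $\beta_{d+1}(\bSigma(\bs))/\beta_1(\bSigma(\bs))\ge\sqrt\eta$ (their \lemref{eigen-inter}), so the estimated rank is $d_i>d$ and, by \lemref{P-diff}, $\|\bQ_i-\bQ_j\|=1$ whenever $j\in I_\star$. That is what disconnects the near-intersection points from both bulk clusters. You would need to add this eigenvalue lower bound---and a matching argument that points just outside this zone already have clean rank-$d$ projections close to $P_{T_1}$ or $P_{T_2}$---to close the proof.
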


We note that the constant $C>0$ depends on what configuration the surfaces are in, in particular their reach and intersection angle, but also aspects that are harder to quantify, like their separation away from their intersection.

%
%

We now comment on the challenge of proving a similar result for \algref{spectral}.
This algorithm relies on knowledge of the intrinsic dimension of the surfaces $d$ and the number of clusters (here $K=2$), but these may be estimated as in \citep{higher-order}, at least in theory, so we assume these parameters are known.  The subsampling done in Step~0 does not pose any problem whatsoever, since the centers are well-spread when the points themselves are.  The difficulty resides in the application of the spectral graph partitioning, \algref{njw}.  If we were to include the intersection-removal step (Step~3 of \algref{cov}) before applying spectral graph partitioning, then a simple adaptation of arguments in \citep{5714241} would suffice.
The real difficulty, and potential pitfall of the method in this framework (without the intersection-removal step), is that the points near the intersection may form their own cluster.
For example, in the simplest case of two affine surfaces intersecting at a positive angle and no sampling noise, the projection matrix at a point near the intersection --- meaning a point whose $r$-ball contains a substantial piece of both surfaces --- would be the projection matrix onto $S_1 + S_2$ seen as a linear subspace.
We were not able to discard this possibility, although we do not observe this happening in practice.
A possible remedy is to constrain the K-means part to only return large-enough clusters.  However, a proper analysis of this would require a substantial amount of additional work and we did not engage seriously in this pursuit.

\section{Numerical Experiments}
\label{sec:numerics}

We tried our code\footnote{The code is available online at \url{http://www.ima.umn.edu/~zhang620/}.} on a few artificial examples. 
Very few algorithms were designed to work in the general situation we consider here and we did not compare our method with any other.
As we argued earlier, the methods of \cite{wang2011spectral} and \cite{Gong2012} are quite similar to ours, and we encourage the reader to also look at the numerical experiments they performed.
Our numerical experiments should be regarded as a proof of concept, only here to show that our method can be implemented and works on some toy examples.

In all experiments, the number of clusters $K$ and the dimension of the manifolds $d$ are assumed known.
We choose spatial scale $\eps$ and the projection scale $\eta$ automatically as follows: we let
\begin{equation}\eps=\max_{1\leq i\leq n_0}\min_{j\neq i} \|\by_i -\by_j \|,\label{eq:eps}\end{equation} and
\begin{equation}\eta=\operatorname*{median}_{(i,j): \|\by_i -\by_j \|<\eps}\|\bP_i-\bP_j\|.\label{eq:eta}\end{equation}
Here, we implicitly assume that the union of all the underlying surfaces forms a connected set.
In that case, the idea behind choosing $\eps$ as in \eqref{eq:eps} is that we want the $\eps$-graph on the centers $\by_1, \dots, \by_n$ to be connected.
Then $\eta$ is chosen so that a center $\by_i$ remains connected in the $(\eps,\eta)$-graph to most of its neighbors in the $\eps$-graph.

The neighborhood radius $\rad$ is chosen by hand for each situation. 
Although we do not know how to choose $r$ automatically, there are some general ad hoc guidelines.
When $\rad$ is too large, the local linear approximation to the underlying surfaces may not hold in neighborhoods of radius $r$, resulting in local PCA becoming inappropriate.
When $\rad$ is too small, there might not be enough points in a neighborhood of radius $r$ to accurately estimate the local tangent subspace to a given surface at that location, resulting in local PCA becoming inaccurate.
From a computational point of view, the smaller $\rad$, the larger the number of neighborhoods and the heavier the computations, particularly at the level of spectral graph partitioning.
In our numerical experiments, we find that our algorithm is more sensitive to the choice of $\rad$ when the clustering problem is more difficult.
We note that automatic choice of tuning parameters remains a challenge in clustering, and machine learning at large, especially when no labels are available whatsoever.
See \citep{Zelnik-Manor04,LBF_journal10,little2009multiscale,Kaslovsky2011}.

Since the algorithm is randomized (see Step 0 in Algorithm~\ref{alg:spectral}) we repeat each simulation $100$ times and report the median misclustering rate and number of times where the misclustering rate is smaller than $5\%$, $10\%$, and $15\%$.
%

We first run Algorithm~\ref{alg:spectral} on several artificial data sets, which are demonstrated in the LHS of Figures~\ref{fig:simulation_2D} and~\ref{fig:simulation_3D}. Table~\ref{table:simulation} reports the local radius $\rad$ used for each data set ($R$ is the global radius of each data set), and the statistics for misclustering rates.
Typical clustering results are demonstrated in the RHS of Figures~\ref{fig:simulation_2D} and~\ref{fig:simulation_3D}.
It is evident that Algorithm~\ref{alg:spectral} performs well in these simulations.
\begin{table}[h!]
\centering
\begin{tabular}{ l c c c c c}
\hline
dataset &$\rad$ & median misclustering rate & 5\% &10\% & 15\%\\
\hline
Three curves& 0.02 (0.034$R$) &4.16\%& 76 & 89 & 89 \\
Self-intersecting curves& 0.1 (0.017$R$)& 1.16\% & 85 & 85 &86\\
Two spheres& 0.2 (0.059$R$)&3.98\% &100&100&100\\
Mobius strips& 0.1 (0.028$R$)&2.22\% &85&86&88\\
Monkey saddle& 0.1 (0.069$R$)&9.73\% &0&67&97\\
Paraboloids& 0.07 (0.048$R$)& 10.42\% &0&12&91\\
\hline
\end{tabular}
\caption{Choices for $r$ and misclustering statistics for the artificial data sets demonstrated in Figures~\ref{fig:simulation_2D} and~\ref{fig:simulation_3D}.
The statistics are based on $100$ repeats and include the median misclustering rate and number of repeats where the misclustering rate is smaller than $5\%$, $10\%$ and $15\%$.
\label{table:simulation}}
\end{table}

\begin{figure}[htbp]
\begin{center}
\includegraphics[width=.49\columnwidth]{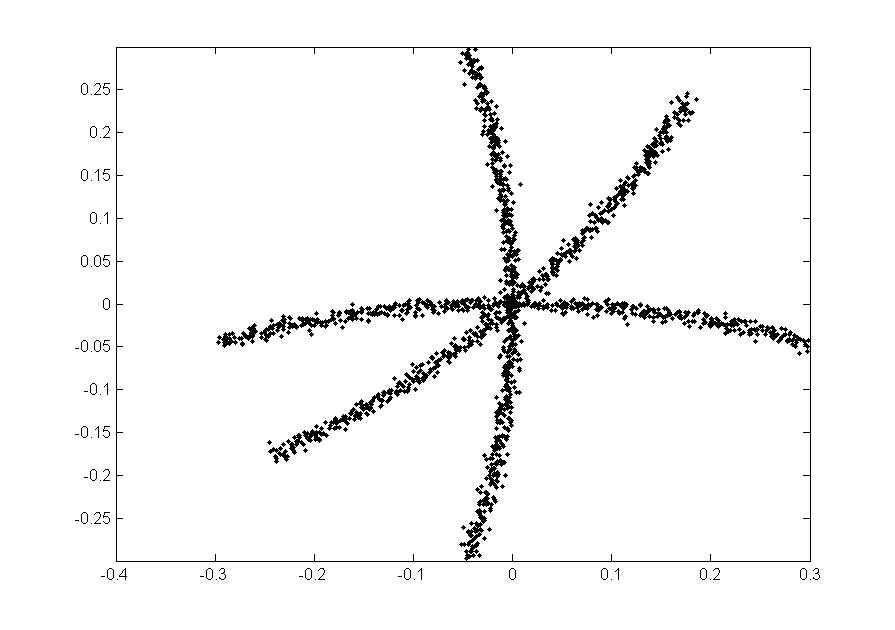}
\includegraphics[width=.49\columnwidth]{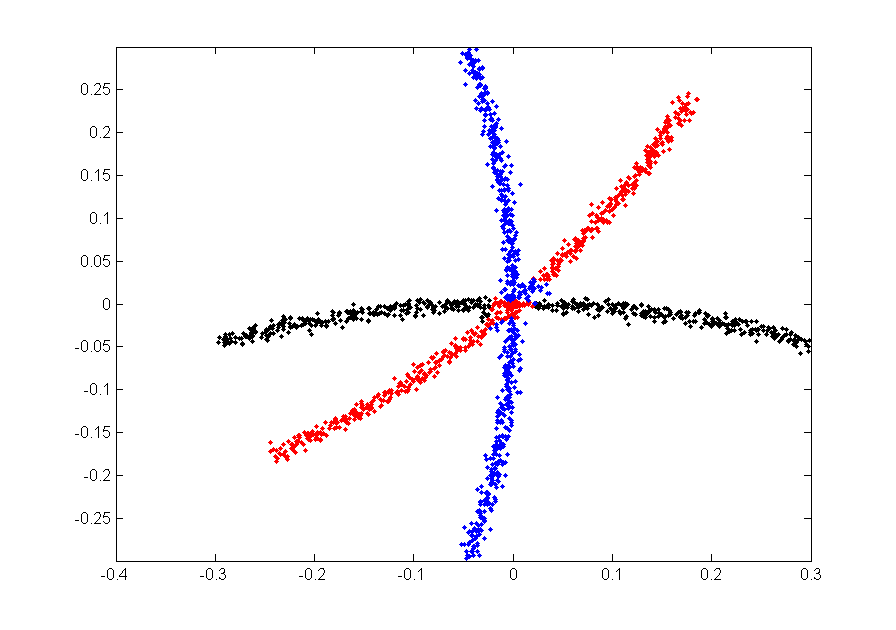}
\includegraphics[width=.49\columnwidth]{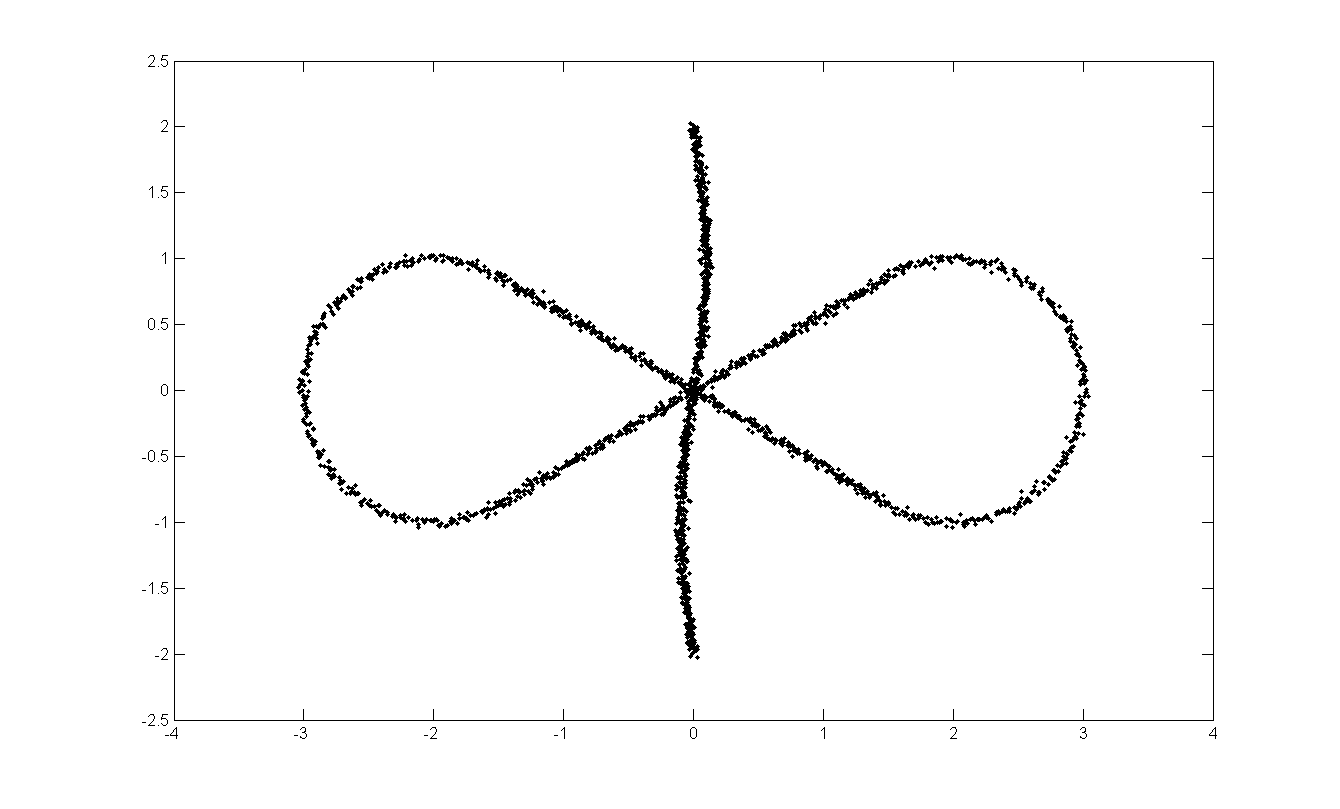}
\includegraphics[width=.49\columnwidth]{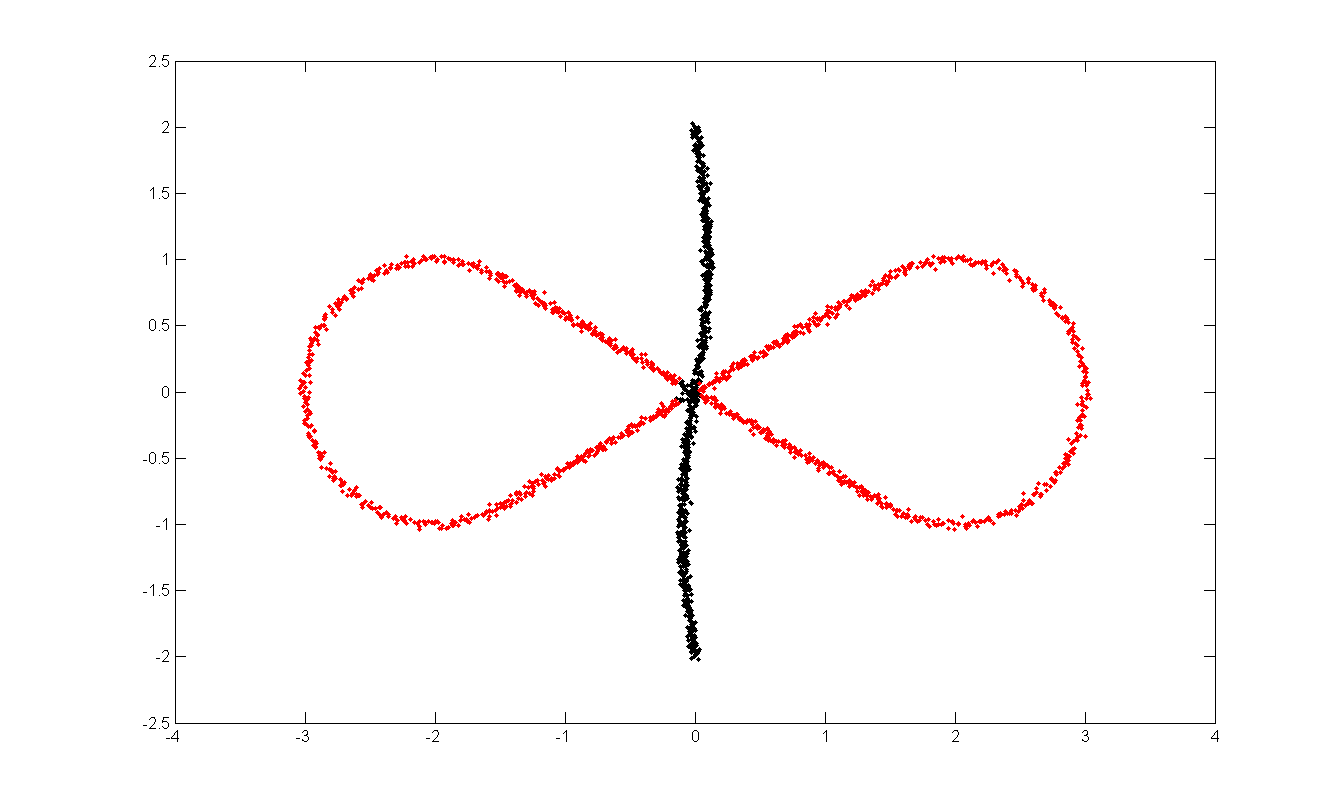} 
\caption{Performance of Algorithm~\ref{alg:spectral} on data sets ``Three curves'' and ``Self-intersecting curves''. Left column is the input data sets, and right column demonstrates the typical clustering.
\label{fig:simulation_2D}}
\end{center}
\end{figure}

\begin{figure}[htbp]
\begin{center}
\centering
\includegraphics[width=.45\columnwidth]{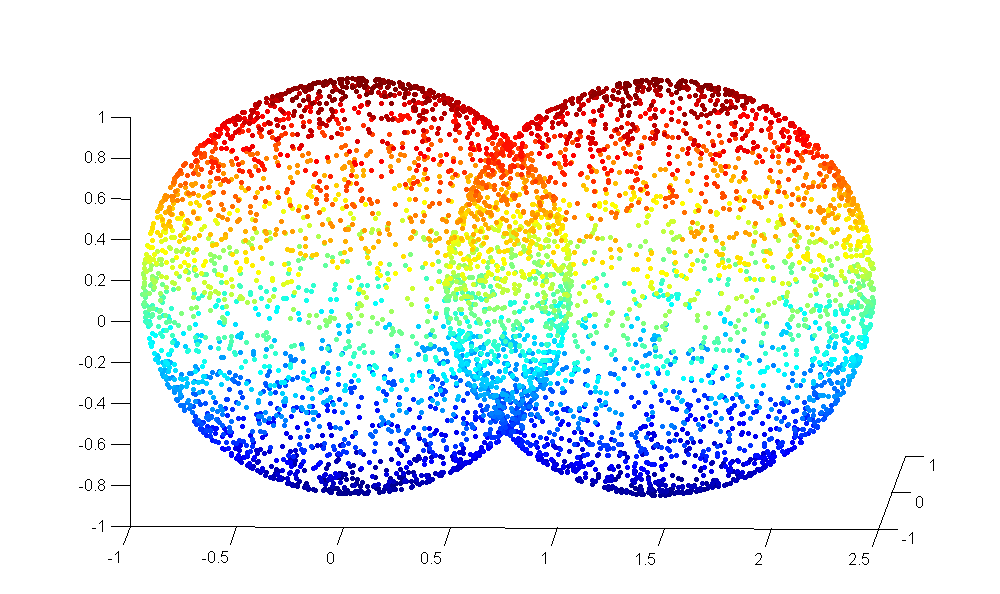}
\includegraphics[width=.45\columnwidth]{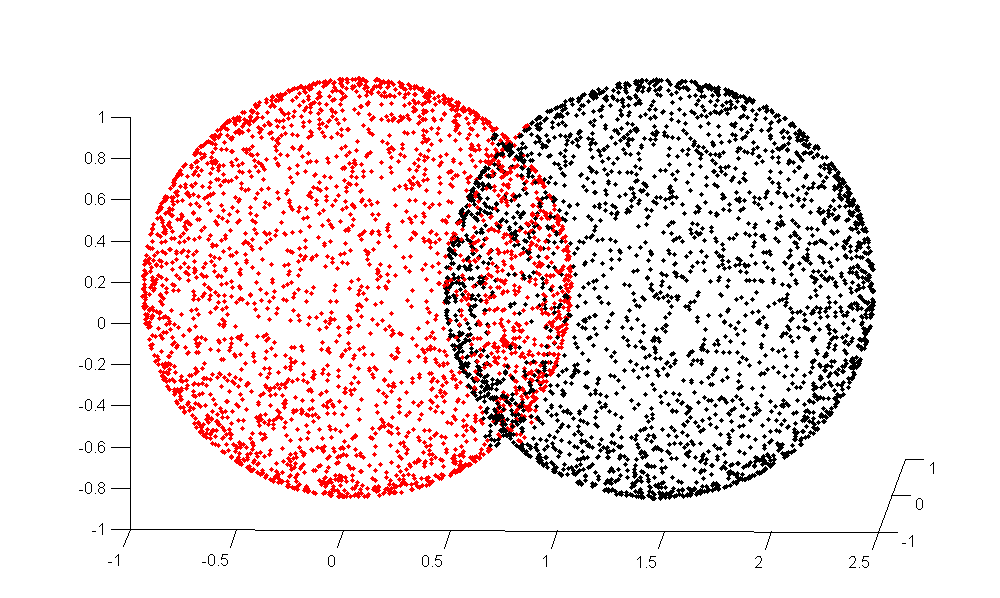}\\
\includegraphics[width=.45\columnwidth]{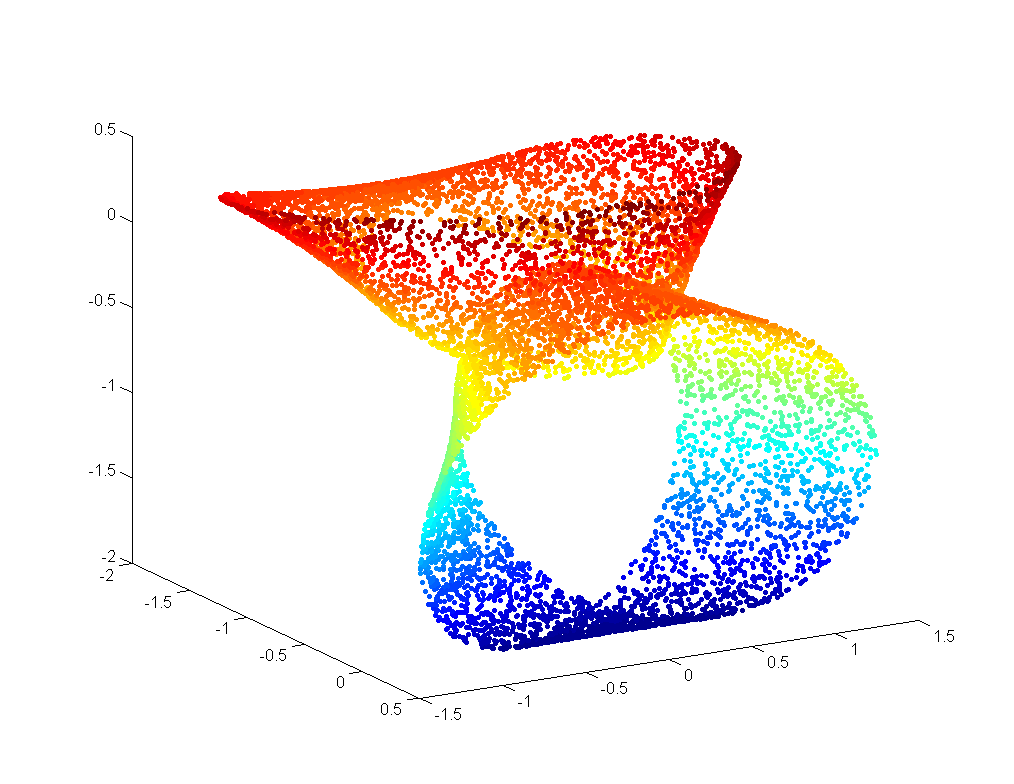}
\includegraphics[width=.45\columnwidth]{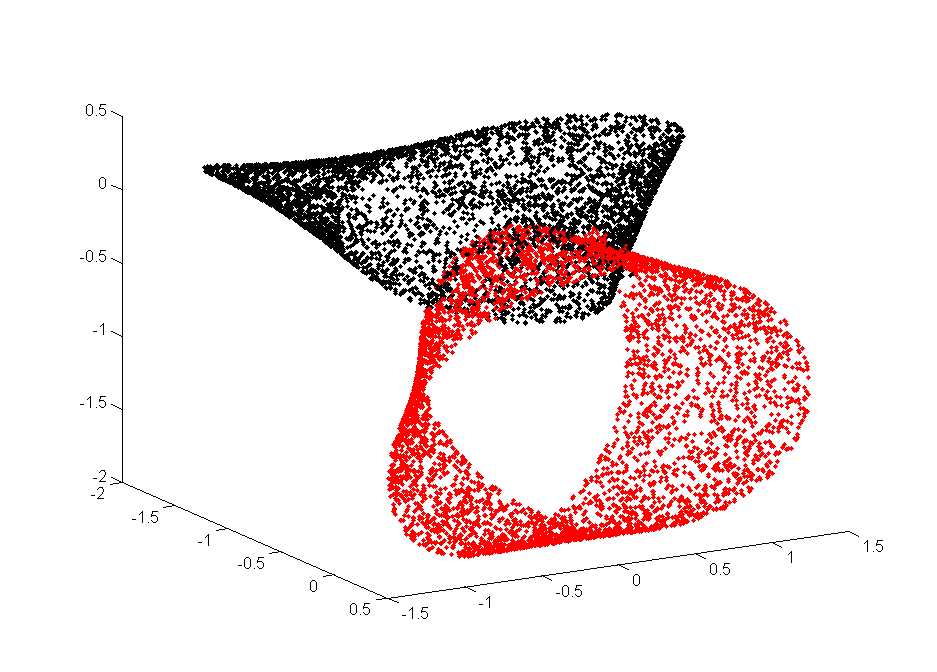}
\includegraphics[width=.45\columnwidth]{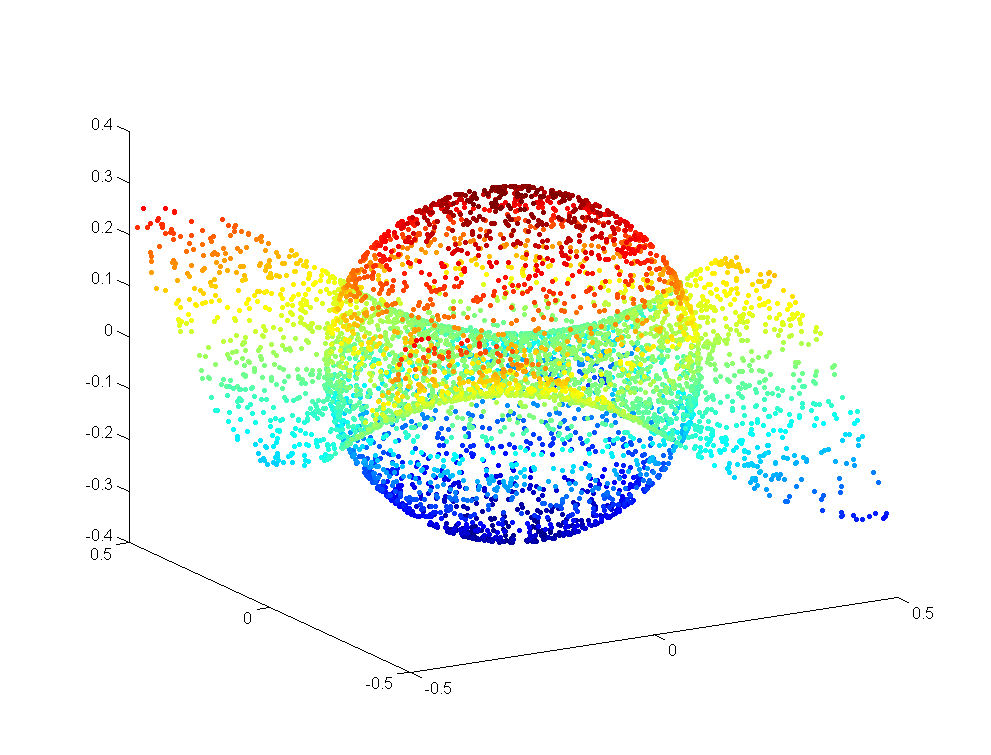}
\includegraphics[width=.45\columnwidth]{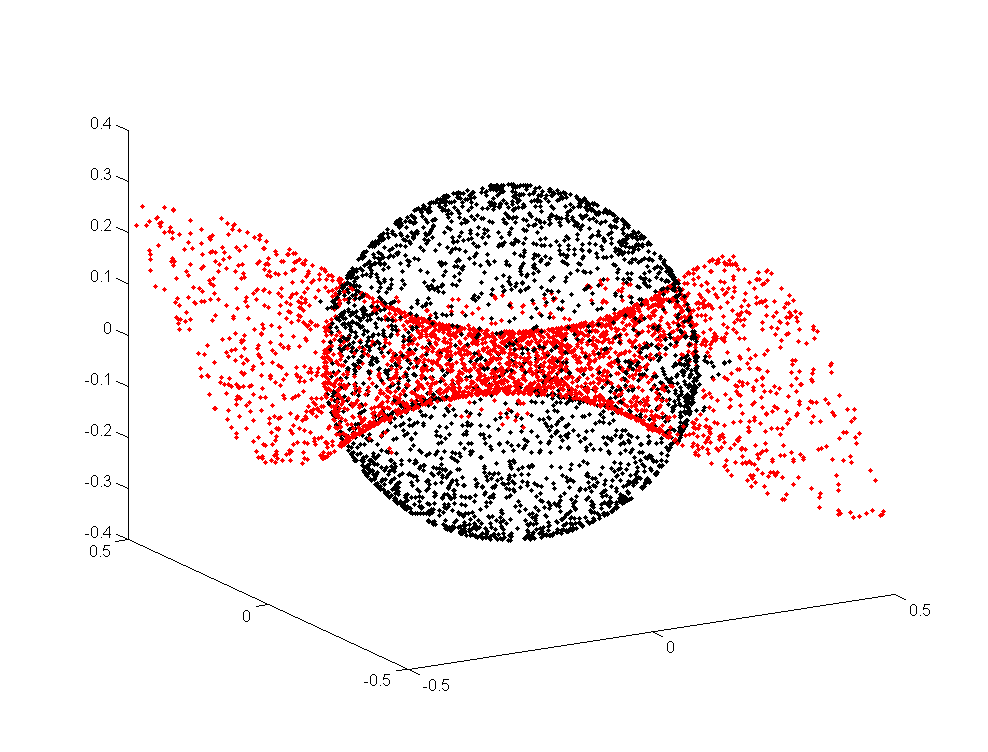}
\includegraphics[width=.45\columnwidth]{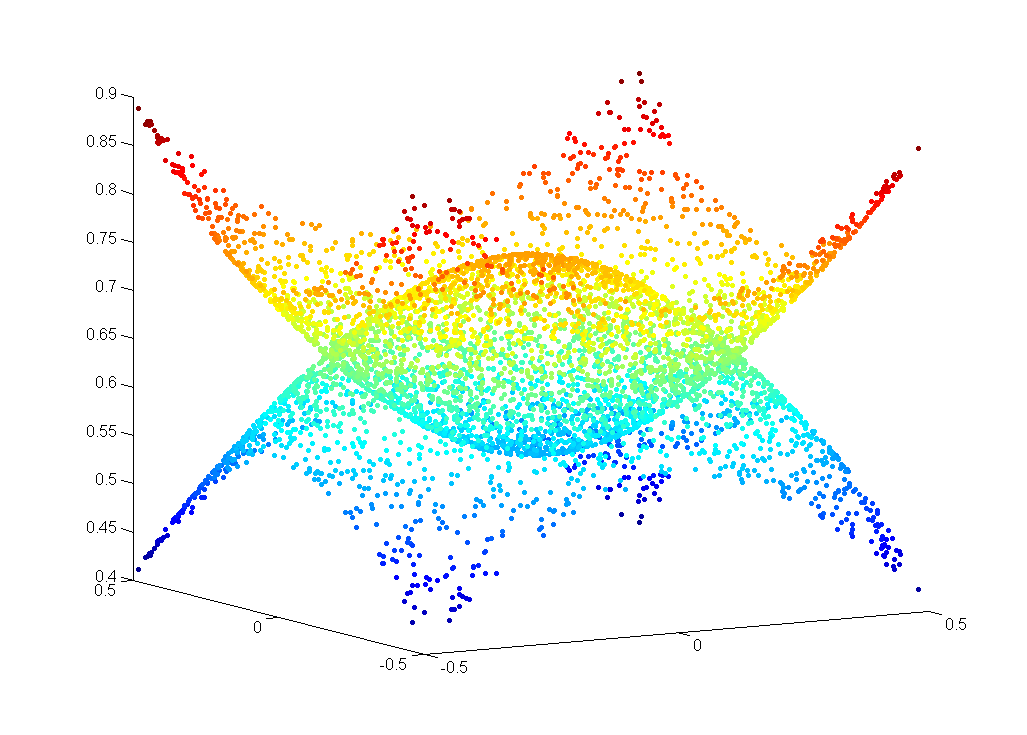}
\includegraphics[width=.45\columnwidth]{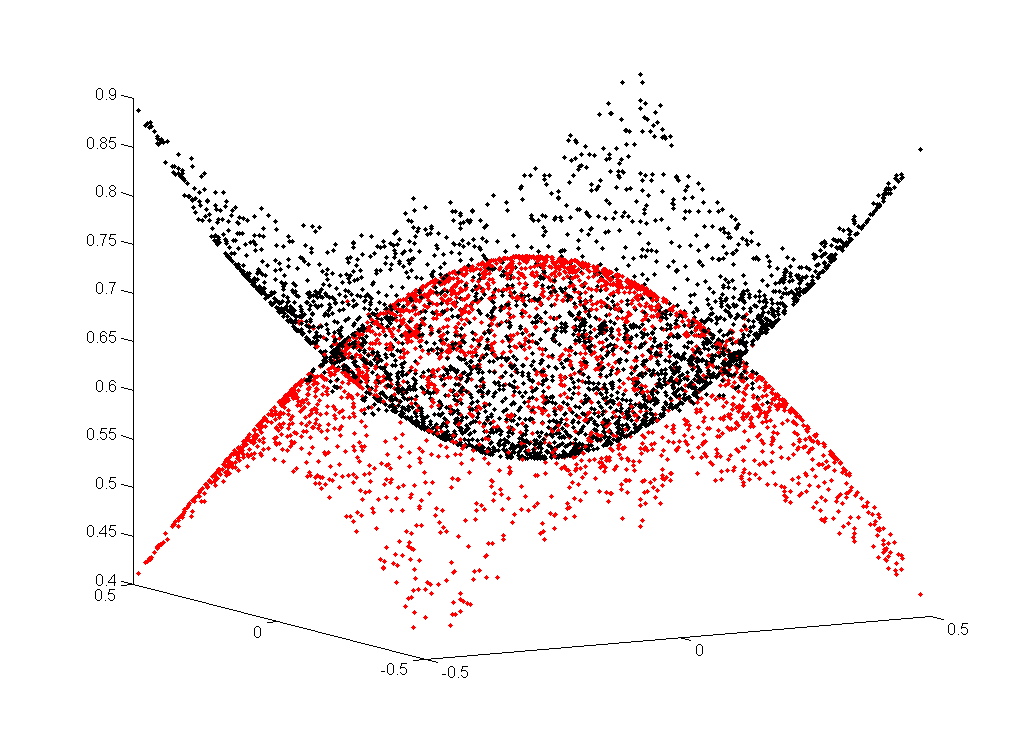}
\caption{Performance of Algorithm~\ref{alg:spectral} on data sets ``Two spheres'', ``Mobius strips'', ``Monkey saddle'' and ``Paraboloids''. Left column is the input data sets, and right column demonstrates the typical clustering.}
\label{fig:simulation_3D}
\end{center}
\end{figure}

In another simulation, we show the dependence of the success of our algorithm on the intersecting angle between curves in Table~\ref{table:2curves} and Figure~\ref{fig:2curves}.
Here, we fix two curves intersecting at a point, and gradually decrease the intersection angle by rotating one of them while holding the other one fixed.
The angles are $\pi/2$, $\pi/4$, $\pi/6$ and $\pi/8$.
From the table we can see that our algorithm performs well when the angle is $\pi/4$, but the performance deteriorates as the angle becomes smaller, and the algorithm almost always fails when the angle is $\pi/8$.

\begin{table}[h!]
\centering
\begin{tabular}{ l c c c c c}
\hline
Intersecting angle&$\rad$ & median misclustering rate & 5\% &10\% & 15\%\\
\hline
$\pi/2$& 0.02 (0.034$R$)&2.08\% & 98 & 98 & 98\\
$\pi/4$& 0.02 (0.034$R$)&3.33\% & 92 & 94 & 94\\
$\pi/6$& 0.02 (0.034$R$)& 5.53\% & 32 & 59 & 59\\
$\pi/8$& 0.02 (0.033$R$)&27.87\% & 0 & 2 & 2 \\
\hline
\end{tabular}
\caption{Choices for $r$ and misclustering statistics for the instances of two intersecting curves demonstrated in \figref{2curves}.
The statistics are based on $100$ repeats and include the median misclustering rate and number of repeats where the misclustering rate is smaller than $5\%$, $10\%$ and $15\%$.
\label{table:2curves}}
\end{table}

\begin{figure}[htbp]
\begin{center}
\includegraphics[width=.45\columnwidth]{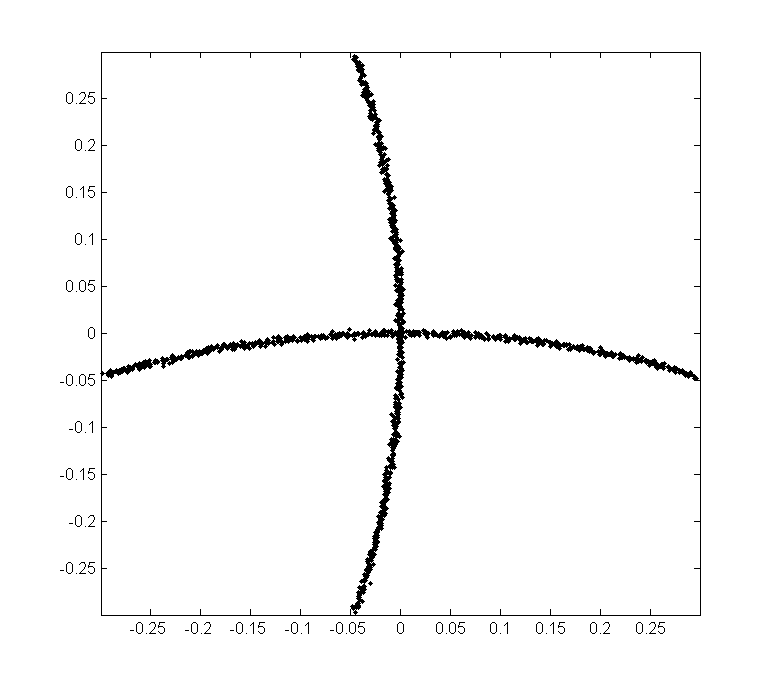}
\includegraphics[width=.45\columnwidth]{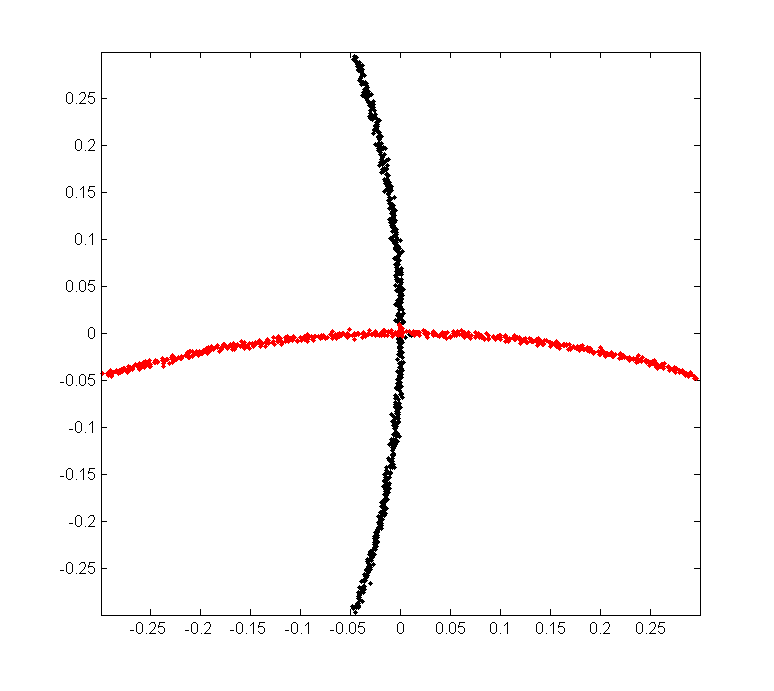}
\includegraphics[width=.45\columnwidth]{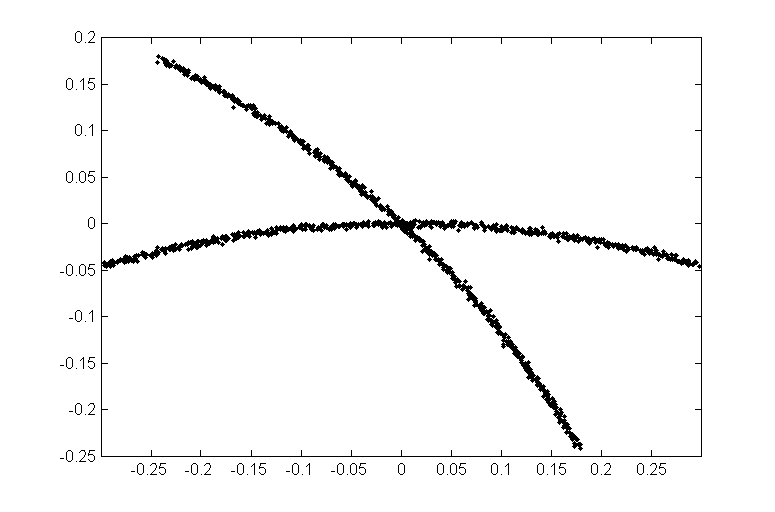}
\includegraphics[width=.45\columnwidth]{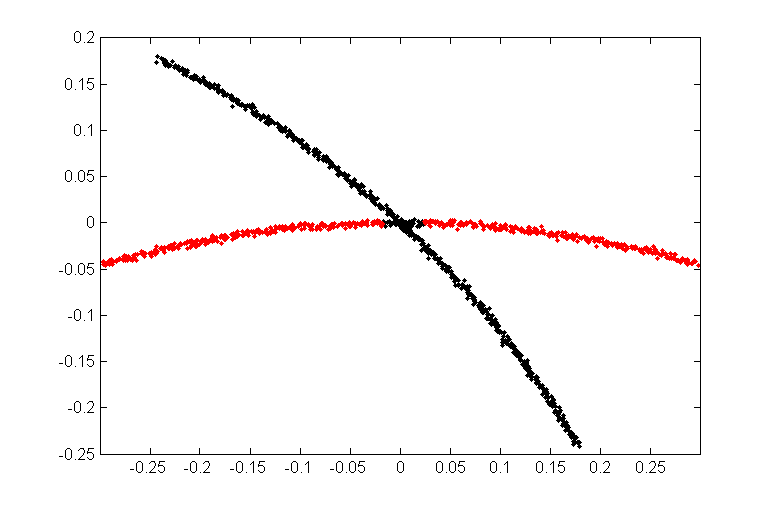}
\includegraphics[width=.45\columnwidth]{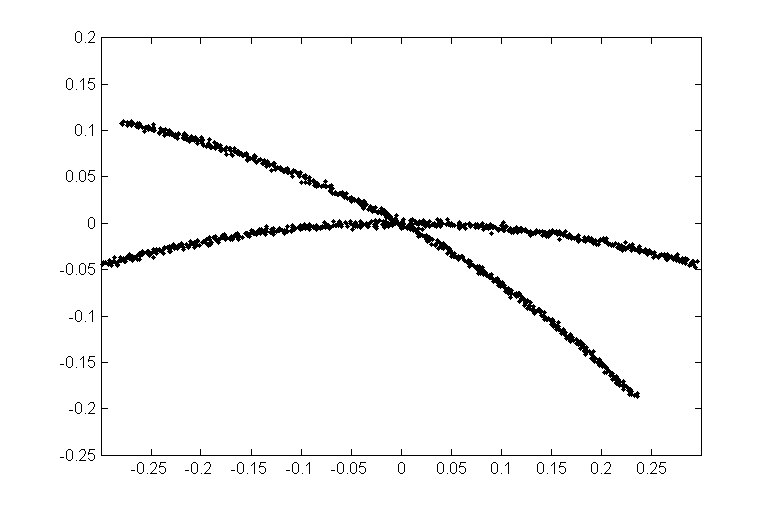}
\includegraphics[width=.45\columnwidth]{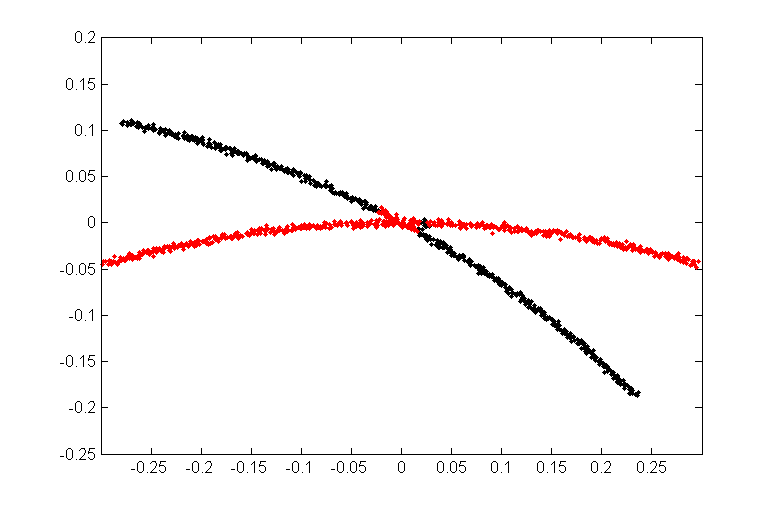}
\includegraphics[width=.45\columnwidth]{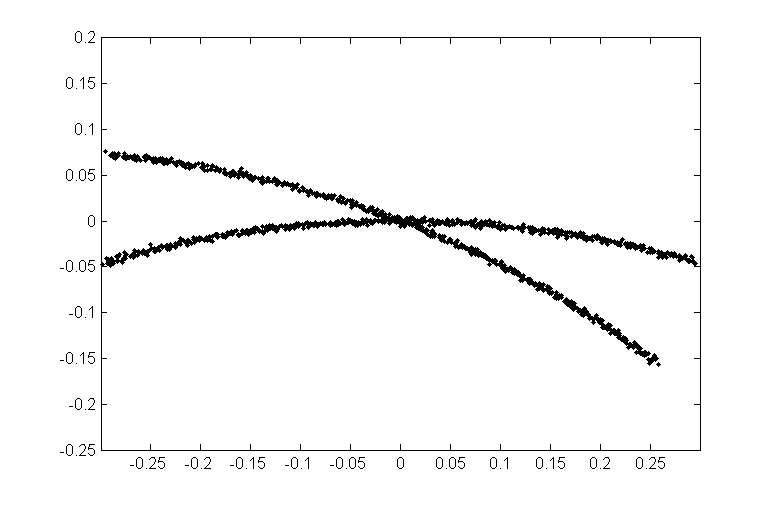}
\includegraphics[width=.45\columnwidth]{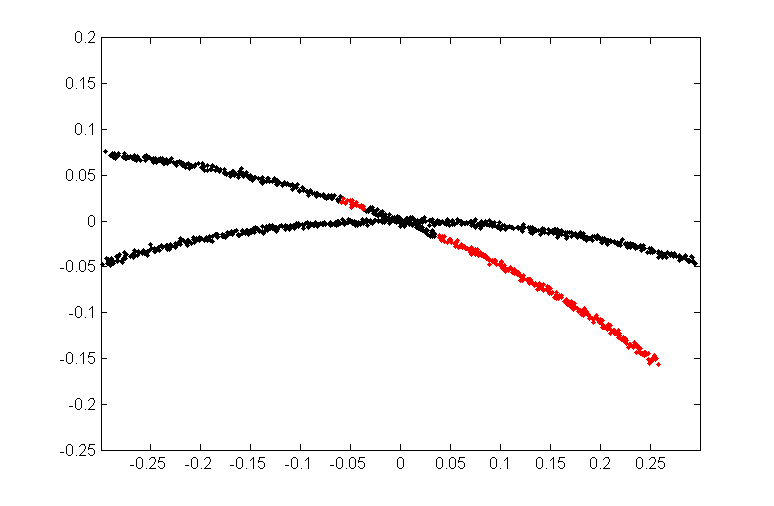}
\caption{Performance of Algorithm~\ref{alg:spectral} on two curves intersecting at various angles $\frac{\pi}{2}$, $\frac{\pi}{4}$, $\frac{\pi}{6}$, $\frac{\pi}{8}$.}
\label{fig:2curves}
\end{center}
\end{figure}

\section{Discussion}
\label{sec:discussion}

We distilled the ideas of~\cite{goldberg2009multi} and of~\cite{kushnir} to cluster points sampled near smooth surfaces.
The key ingredient is the use of local PCA to learn about the local spread and orientation of the data, so as to use that information in an affinity when building a neighborhood graph.

In a typical stylized setting for multi-manifold clustering, we established performance bounds for the simple variants described in \algref{cov} and \algref{proj}, which essentially consist of connecting points that are close in space and orientation, and then extracting the connected components of the resulting graph.
Both are shown to resolve general intersections as long as the incidence angle is strictly positive and the parameters are carefully chosen.
As is commonly the case in such analyzes, our setting can be generalized to other sampling schemes, to multiple intersections, to some features of the surfaces changing with the sample size, and so on, in the spirit of~\citep{higher-order,5714241,spectral_theory}.  We chose to simplify the setup as much as possible while retaining the essential features that makes resolving intersecting clusters challenging.
The resulting arguments are nevertheless rich enough to satisfy the mathematically thirsty reader.

We implemented a spectral version of \algref{proj}, described in \algref{spectral}, that assumes the intrinsic dimensionality and the number of clusters are known.
The resulting approach is very similar to what is offered by \cite{wang2011spectral} and \cite{Gong2012}, although it was developed independently of these works.
\algref{spectral} is shown to perform well in some simulated experiments, although it is somewhat sensitive to the choice of parameters.
This is the case of all other methods for multi-manifold clustering we know of and choosing the parameters automatically remains an open challenge in the field.

\section{Proofs}
\label{sec:proofs}

We start with some additional notation.
The ambient space is $\bbR^D$ unless noted otherwise.
For a vector $\bv \in \bbR^D$, $\|\bv\|$ denotes its Euclidean norm and for a real matrix $\bM \in \bbR^{D \times D}$, $\|\bM\|$ denotes the corresponding operator norm.
For a point $\bx \in \bbR^D$ and $r > 0$, $B(\bx, r)$ denotes the open ball of center $\bx$ and radius $r$, i.e., $B(\bx, r) = \{\by \in \bbR^D: \|\by - \bx\| < r\}$.
For a set $S$ and a point $\bx$, define $\dist(\bx, S) = \inf\{\|\bx - \by\| : \by \in S\}$.
For two points $\ba, \bb$ in the same Euclidean space, $\bb - \ba$ denotes the vector moving $\ba$ to $\bb$.  For a point $\ba$ and a vector $\bv$ in the same Euclidean space, $\ba + \bv$ denotes the translate of $\ba$ by $\bv$.
We identify an affine subspace $T$ with its corresponding linear subspace, for example, when saying that a vector belongs to $T$.

For two subspaces $T$ and $T'$, of possibly different dimensions, we denote by $0 \le \thetamax(T,T') \le \pi/2$ the largest and by $\thetamin(T, T')$ the smallest nonzero principal angle between $T$ and $T'$ \citep{MR1061154}.
When $\bv$ is a vector and $T$ is a subspace, $\angle(\bv, T) := \thetamax(\bbR \bv, T)$ this is the usual definition of the angle between $\bv$ and $T$.

For a subset $A \subset \bbR^D$ and positive integer $d$, $\vol_d(A)$ denotes the $d$-dimensional Hausdorff measure of $A$, and $\vol(A)$ is defined as $\vol_{\dim(A)}(A)$, where $\dim(A)$ is the Hausdorff dimension of $A$.
For a Borel set $A$, let $\lambda_A$ denote the uniform distribution on $A$.

For a set $S \subset \bbR^D$ with reach at least $1/\kappa$, and $\bx$ with $\dist(\bx, S) < 1/\kappa$, let $P_S (\bx)$ denote the metric projection of $\bx$ onto $S$, that is, the point on $S$ closest to $\bx$.
Note that, if $T$ is an affine subspace, then $P_T$ is the usual orthogonal projection onto $T$.
Let $\cS_d(\kappa)$ denote the class of connected, $C^2$ and compact $d$-dimensional submanifolds without boundary embedded in $\R^D$, with reach at least $1/\kappa$.
For a submanifold $S \in \R^D$, let $T_S(\bx)$ denote the tangent space of $S$ at $\bx \in S$.

We will often identify a linear map with its matrix in the canonical basis.
For a symmetric (real) matrix $\bM$, let $\beta_1(\bM) \ge \beta_2(\bM) \ge \cdots$ denote its eigenvalues in decreasing order.

We say that $f: \Omega \subset \bbR^D \to \bbR^D$ is $C$-Lipschitz if
$\|f(\bx) - f(\by)\| \le C \|\bx - \by\|, \forall \bx,\by \in \Omega.$

For two reals $a$ and $b$, $a \vee b = \max(a,b)$ and $a \wedge b = \min(a,b)$.
Additional notation will be introduced as needed.

\subsection{Preliminaries}
\label{sec:prelim}

This section gathers a number of general results from geometry and probability.
We took time to package them into standalone lemmas that could be of potential independent interest, particularly to researchers working in machine learning  and computational geometry.

\subsubsection{Smooth surfaces and their tangent subspaces}
The following result is on approximating a smooth surface near a point by the tangent subspace at that point.  It is based on \citep[Th.~4.18(2)]{MR0110078}.
\begin{lem}
\label{lem:S-approx}
For $S \in \cS_d(\kappa)$, and any two points $\bs, \bs' \in S$,
\beq \label{S-approx1}
\dist(\bs', T_S(\bs)) \leq \frac{\kappa}2 \|\bs' - \bs\|^2,
\eeq
and when $\dist(\bs', T_S(\bs)) \le 1/\kappa$,
\beq \label{S-approx2}
\dist(\bs', T_S(\bs)) \leq \kappa \|P_{T_S(\bs)}(\bs') - \bs\|^2.
\eeq
Moreover, for $\bt \in T_S(\bs)$ such that $\|\bs - \bt\| \le 7/(16 \kappa)$,
\beq \label{S-approx3}
\dist(\bt, S) \leq \kappa \|\bt - \bs\|^2.
\eeq
\end{lem}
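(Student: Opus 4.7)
The lemma consists of three tangent-approximation estimates, which I would prove in the order stated, since each builds on the previous.

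Assertion \eqref{S-approx1} is classical: I would invoke Federer's tangent approximation theorem (Theorem~4.18(2) of \cite{MR0110078}), which says that any set of reach at least $1/\kappa$ satisfies $\dist(\bs', T_S(\bs)) \le (\kappa/2)\|\bs' - \bs\|^2$ for $\bs, \bs'$ in the set. The manifolds in $\cS_d(\kappa)$ meet this hypothesis by definition.

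For \eqref{S-approx2}, I would decompose $\bs' - \bs$ orthogonally as $(P_{T_S(\bs)}\bs' - \bs) + (\bs' - P_{T_S(\bs)}\bs')$, the first summand lying in $T_S(\bs)$ and the second in $T_S(\bs)^\perp$. Setting $a := \|P_{T_S(\bs)}\bs' - \bs\|$ and $h := \dist(\bs', T_S(\bs))$, Pythagoras gives $\|\bs'-\bs\|^2 = a^2 + h^2$, so substitution into \eqref{S-approx1} yields $h \le (\kappa/2)(a^2 + h^2)$. The hypothesis $h \le 1/\kappa$ forces $(\kappa/2) h^2 \le h/2$, and rearranging gives $h \le \kappa a^2$.

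For \eqref{S-approx3}, the idea is to exhibit a point of $S$ sitting directly above $\bt$ in the $T_S(\bs)^\perp$ direction. By a quantitative graph representation (a standard consequence of the reach bound, as developed, e.g., in \cite{1349695}), $S$ is, in a neighborhood of $\bs$, the graph of a $C^2$ map $\phi : U \to T_S(\bs)^\perp$, where $U \subset T_S(\bs)$ is an open neighborhood of $\bs$, with $\phi(\bs) = 0$ and vanishing differential at $\bs$, and $U$ can be taken to contain $T_S(\bs) \cap B(\bs, 7/(16\kappa))$. For the hypothesized $\bt$, the point $\bs'' := \bt + \phi(\bt)$ lies in $S$ and differs from $\bt$ only by $\phi(\bt) \in T_S(\bs)^\perp$, so $\|\bt - \bs''\| = \|\phi(\bt)\| = \dist(\bs'', T_S(\bs))$. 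Applying \eqref{S-approx1} to $\bs''$ then gives the quadratic inequality $\|\phi(\bt)\| \le (\kappa/2)(\|\bt-\bs\|^2 + \|\phi(\bt)\|^2)$ in $\|\phi(\bt)\|$; its smaller root is bounded by $\kappa \|\bt-\bs\|^2$ (using $1 - \sqrt{1-x} \le x$ for $x \in [0,1]$), and continuity of $\phi$ along the segment from $\bs$ to $\bt$, starting at $\phi(\bs) = 0$, pins $\|\phi(\bt)\|$ to this smaller branch as long as the two roots stay separated, i.e., $\kappa\|\bt-\bs\| < 1$, which holds since $\|\bt - \bs\| \le 7/(16\kappa)$.

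The main obstacle is establishing the quantitative graph representation with domain extending out to radius $7/(16\kappa)$ in $T_S(\bs)$. The specific fraction $7/16$ appears to be chosen for compatibility with downstream lemmas: any constant strictly less than $1$ suffices for the analytical bootstrap (ensuring the quadratic has real roots and keeping $\|\phi\|$ on the smaller branch by continuity), while the graph representation itself is valid up to some fixed fraction of the reach. If no direct quantitative citation is available, the required representation can be proved by hand via the inverse function theorem applied to $P_{T_S(\bs)}|_S$, with the reach bound supplying uniform derivative control.
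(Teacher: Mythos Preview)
Your arguments for \eqref{S-approx1} and \eqref{S-approx2} match the paper's exactly: cite Federer's Theorem~4.18(2), then Pythagoras plus the rearrangement $h(1-\tfrac{\kappa}{2}h)\le \tfrac{\kappa}{2}a^2$.

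For \eqref{S-approx3} the underlying idea is the same---produce $\bs''\in S$ with $P_{T_S(\bs)}(\bs'')=\bt$ and bound $\|\bt-\bs''\|=\dist(\bs'',T_S(\bs))$---but the execution differs slightly. The paper does not set up a graph parametrization; instead it forward-references \lemref{proj}, which shows that $P_{T_S(\bs)}$ is injective on $S\cap B(\bs,1/(2\kappa))$ and that its image contains $T_S(\bs)\cap B(\bs,\rad)$ whenever $\rad\le 7/(16\kappa)$. With $\bs'':=P_{T_S(\bs)}^{-1}(\bt)\in B(\bs,1/(2\kappa))$ in hand, the paper applies \eqref{S-approx2} directly (the hypothesis $\dist(\bs'',T_S(\bs))\le 1/\kappa$ being automatic from $\|\bs''-\bs\|<1/(2\kappa)$ and \eqref{S-approx1}), which gives $\dist(\bt,S)\le\|\bt-\bs''\|\le\kappa\|\bt-\bs\|^2$ in one line. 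Your quadratic bootstrap with the continuity argument along the segment is correct but is effectively re-deriving \eqref{S-approx2} in situ; the paper's route is shorter because it reuses \eqref{S-approx2} rather than \eqref{S-approx1}. Your speculation about the constant $7/16$ is on target: it is exactly the radius up to which \lemref{proj} guarantees surjectivity of $P_{T_S(\bs)}$ onto the tangent disc, and that lemma is where the ``inverse function theorem with reach-controlled derivatives'' work you allude to is actually carried out.
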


\begin{proof}
Let $T$ be short for $T_S(\bs)$.
\citep[Th.~4.18(2)]{MR0110078} says that
\beq \label{4.18}
\dist(\bs' -\bs, T) \le \frac{\kappa}2 \|\bs' - \bs\|^2.
\eeq
Immediately, we have
\[
\dist(\bs' -\bs, T) = \|\bs' - P_{T}(\bs')\| = \dist(\bs', T),
\]
and \eqref{S-approx1} comes from that.
Based on that and Pythagoras theorem, we have
\[\dist(\bs', T) = \|P_{T}(\bs') - \bs'\| \leq \frac{\kappa}2 \|\bs' - \bs\|^2 = \frac{\kappa}2 \big(\|P_{T}(\bs') - \bs'\|^2 + \|P_{T}(\bs') - \bs\|^2\big),\]
so that
\[\dist(\bs', T) \big(1 - \frac\kappa2 \dist(\bs', T) \big) \le \frac{\kappa}2 \|P_{T}(\bs') - \bs\|^2,\]
and \eqref{S-approx2} follows easily from that.  For \eqref{S-approx3}, let $\bs' = P_{T}^{-1}(\bt)$, which is well-defined by \lemref{proj} below and belongs to $B(\bs, 1/(2\kappa))$.
We then apply \eqref{S-approx2} to get
\[\dist(\bt, S) \le \|\bt - \bs'\| = \dist(\bs', T) \le \kappa \|\bt - \bs\|^2.\]
\end{proof}

We need a bound on the angle between tangent subspaces on a smooth surface as a function of the distance between the corresponding points of contact.  This could be deduced directly from \citep[Prop.~6.2, 6.3]{1349695}, but the resulting bound is much looser --- and the underlying proof much more complicated --- than the following, which is again based on \citep[Th.~4.18(2)]{MR0110078}.
\begin{lem}
\label{lem:T-diff}
For $S \in \cS_d(\kappa)$, and any $\bs, \bs' \in S$,
\beq \label{T-diff}
\thetamax(T_S(\bs), T_S(\bs')) \le 2 \asin\left(\frac{\kappa}2 \|\bs' - \bs\| \wedge 1\right).
\eeq
\end{lem}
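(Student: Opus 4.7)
The plan is to bound, for every unit $\bv \in T_S(\bs')$, the quantity $\sin \angle(\bv, T_S(\bs)) = \|P_{T_S(\bs)^\perp}\bv\|$ in terms of $h := \|\bs-\bs'\|$ and $\kappa$; since $T_S(\bs)$ and $T_S(\bs')$ have the same dimension and the argument is symmetric in the two points, this suffices to control $\thetamax(T_S(\bs), T_S(\bs'))$. Throughout I write $T = T_S(\bs)$.

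Since $\bv \in T_S(\bs')$ and $S$ is $C^2$, I fix a unit-speed curve $\gamma:(-\delta,\delta)\to S$ with $\gamma(0)=\bs'$ and $\gamma'(0)=\bv$. The reach hypothesis $\reach(S)\ge 1/\kappa$ bounds the curvature of any arc-length parametrized curve in $S$ by $\kappa$, so Taylor's theorem gives
\[
\gamma(t) = \bs' + t\bv + \bxi(t), \qquad \|\bxi(t)\|\le \tfrac{\kappa t^2}{2}.
\]
Next I apply \eqref{S-approx1} at $\bs$ to the point $\gamma(t)\in S$:
\[
\|P_{T^\perp}(\gamma(t)-\bs)\| \le \tfrac{\kappa}{2}\,\|\gamma(t)-\bs\|^2 .
\]

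Decomposing $\gamma(t)-\bs = (\bs'-\bs) + t\bv + \bxi(t)$ and using linearity of $P_{T^\perp}$, together with \eqref{S-approx1} applied at $\bs$ to $\bs'$ (which gives $\|P_{T^\perp}(\bs'-\bs)\|\le \kappa h^2/2$), the triangle inequality yields
\[
t\|P_{T^\perp}\bv\| \le \|P_{T^\perp}(\gamma(t)-\bs)\| + \|P_{T^\perp}(\bs'-\bs)\| + \|\bxi(t)\| \le \tfrac{\kappa}{2}\|\gamma(t)-\bs\|^2 + \tfrac{\kappa h^2}{2} + \tfrac{\kappa t^2}{2}.
\]
Using $\|\gamma(t)-\bs\|^2\le (h+t)^2 + O(t^3)$, dividing by $t$ and optimizing the free parameter at $t\asymp h$ produces an estimate of the form $\|P_{T^\perp}\bv\|\le C\kappa h$ for an absolute constant $C$, which is the correct first-order scaling since $2\asin(\kappa h/2)\sim \kappa h$.

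The main obstacle is tightening the absolute constant so that the bound takes the precise form $2\asin(\kappa h/2)$, which corresponds to the tangent-angle change on the extremal osculating circle of radius $1/\kappa$. The crude triangle-inequality step above gives $C$ on the order of $3$, while the claim essentially requires $C=1$. Closing this gap calls for (i) replacing the bound $\|\gamma(t)-\bs\|^2\le (h+t)^2$ with the sharper identity $\|\gamma(t)-\bs\|^2 = h^2 + 2t(\bs'-\bs)\cdot\bv + t^2 + O(t^3)$, and (ii) exploiting the freedom to replace $\bv$ by $-\bv$ (which leaves $\|P_{T^\perp}\bv\|$ invariant but flips the sign of the cross term $(\bs'-\bs)\cdot\bv$), so that the optimization in $t$ actually matches the circle geometry. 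I expect this constant-tracking to be the most delicate and possibly the most technically involved step of the proof, while the rest follows the elementary scheme above.
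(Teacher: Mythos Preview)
Your approach is genuinely different from the paper's, and the difference is precisely where your difficulty lies. The paper never bounds the angle made by an \emph{arbitrary} $\bv\in T_S(\bs')$ with $T_S(\bs)$; instead it works only with the single chord vector $\bs'-\bs$. Applying Federer's estimate \eqref{4.18} at $\bs$ gives $\dist(\bs'-\bs, T_S(\bs))\le \tfrac{\kappa}{2}\|\bs'-\bs\|^2$, hence $\sin\angle(\bs'-\bs, T_S(\bs))\le \tfrac{\kappa}{2}\|\bs'-\bs\|$; by symmetry the same bound holds with $T_S(\bs')$ in place of $T_S(\bs)$. A triangle inequality for angles then yields
\[
\thetamax(T_S(\bs), T_S(\bs')) \le \angle(\bs'-\bs, T_S(\bs)) + \angle(\bs'-\bs, T_S(\bs')) \le 2\asin\!\Big(\tfrac{\kappa}{2}\|\bs'-\bs\|\wedge 1\Big).
\]
The factor $2$ in the statement is therefore not a hard-won sharp constant extracted from an optimization, but simply the sum of two copies of the same $\asin$ bound. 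Your anticipated ``delicate constant-tracking'' is entirely bypassed: no curves, no Taylor expansion, no free parameter to optimize.

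On your route as written, two remarks. First, the claim that the reach bounds the curvature of \emph{any} unit-speed curve in $S$ by $\kappa$ is false (a tiny circle drawn on a flat plane has arbitrarily large curvature while the plane has infinite reach); you need $\gamma$ to be a geodesic of $S$, so that $\gamma''$ is normal to $S$ and the reach bound on the second fundamental form gives $\|\gamma''\|\le\kappa$. Second, even granting that fix and your refinements (i) and (ii), it is not clear the curve-and-optimize scheme ever recovers the constant $2$; the chord argument is both shorter and immediately sharp.
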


\begin{proof}
By \eqref{4.18} applied twice, we have
\[
\dist(\bs' -\bs, T_S(\bs)) \vee \dist(\bs -\bs', T_S(\bs'))\le \frac{\kappa}2 \|\bs' - \bs\|^2.
\]
Noting that
\beq \label{vT}
\dist(\bv, T) = \|\bv\| \sin \angle(\bv, T),
\eeq
for any vector $\bv$ and any linear subspace $T$, we get
\[
\sin \angle(\bs' -\bs, T_S(\bs)) \ \vee \ \sin \angle(\bs' -\bs, T_S(\bs')) \le \frac{\kappa}2 \|\bs' - \bs\|.
\]
Noting that the LHS never exceeds 1, and applying the arcsine function --- which is increasing --- on both sides, yields
\[
\angle(\bs' -\bs, T_S(\bs)) \vee \angle(\bs' -\bs, T_S(\bs')) \le \asin\left(\frac{\kappa}2 \|\bs' - \bs\| \wedge 1\right).
\]
We then use the triangle inequality
\[
\thetamax(T_S(\bs), T_S(\bs')) \le \angle(\bs' -\bs, T_S(\bs)) + \angle(\bs' -\bs, T_S(\bs')),
\]
and conclude.
\end{proof}

Below we state some properties of a projection onto a tangent subspace.  A result similar to the first part was proved in \citep[Lem.~2]{higher-order} based on results in \citep{1349695}, but the arguments are simpler here and the constants are sharper.
\begin{lem}
\label{lem:proj}
Take $S \in \cS_d(\kappa)$, $\bs \in S$ and $\rad \le \frac1{2\kappa}$, and let $T$ be short for $T_S(\bs)$.
$P_{T}$ is injective on $B(\bs, \rad) \cap S$ and its image contains $B(\bs, \rad') \cap T$, where $\rad' := (1 - \frac12 (\kappa\rad)^2) \rad$.  Moreover, $P_{T}^{-1}$ has Lipschitz constant bounded by $1 + \frac{64}{49} (\kappa\rad)^2$ over $B(\bs, \rad) \cap T$, for any $\rad \le \frac7{16 \kappa}$.
\end{lem}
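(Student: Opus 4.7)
The plan is to treat the three claims in turn, each leaning on \lemref{S-approx} and \lemref{T-diff}.

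For injectivity, I would argue by contradiction. Suppose $\bs_1 \neq \bs_2$ in $B(\bs,\rad) \cap S$ satisfy $P_T(\bs_1) = P_T(\bs_2)$, so that $\bs_2 - \bs_1 \perp T$. Set $T_1 := T_S(\bs_1)$ and $\alpha := \thetamax(T_1, T)$, which is at most $2\asin(\kappa \rad/2)$ by \lemref{T-diff}. Decompose $\bs_2 - \bs_1 = \bw + \bw^\perp$ with $\bw \in T_1$ and $\bw^\perp \perp T_1$; by \eqref{S-approx1} applied at $\bs_1$, $\|\bw^\perp\| \le (\kappa/2)\|\bs_2-\bs_1\|^2$. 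Projecting this decomposition onto $T^\perp$ (which already contains the full vector $\bs_2 - \bs_1$) and using that vectors in $T_1$ have at most a $\sin\alpha$ fraction of their norm in $T^\perp$ gives $\|\bs_2-\bs_1\| \le \|\bs_2-\bs_1\|\sin\alpha + (\kappa/2)\|\bs_2-\bs_1\|^2$. Dividing through by $\|\bs_2-\bs_1\|$ and using $\|\bs_2-\bs_1\| < 2\rad$, $\kappa\rad \le 1/2$, and $\sin(2\asin(x)) = 2x\sqrt{1-x^2}$, one checks the right-hand side is strictly below $1$, a contradiction.

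For image containment, the derivative of $P_T$ at any $\bs' \in B(\bs,\rad) \cap S$ is $P_T\big|_{T_S(\bs')}$, whose smallest singular value is $\cos\thetamax(T_S(\bs'),T) > 0$ by \lemref{T-diff}; combined with injectivity this makes $P_T$ a diffeomorphism onto its (open) image. To see that this image contains $B(\bs,\rad')\cap T$, I will run a boundary-continuity argument: for any $\bs' \in S$ with $\|\bs'-\bs\| = \rad$, Pythagoras together with \eqref{S-approx1} yields $\|P_T(\bs')-\bs\|^2 \ge \rad^2 - (\kappa \rad^2/2)^2$, and the elementary inequality $\sqrt{1-(\kappa\rad/2)^2} \ge 1 - (\kappa\rad)^2/2$ gives $\|P_T(\bs')-\bs\| \ge \rad'$. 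Hence if some $\bt \in B(\bs,\rad')\cap T$ were outside the image, connectedness of $B(\bs,\rad')\cap T$ would force a boundary-of-image point in $B(\bs,\rad')\cap T$, whose preimage in the compact set $\overline{B(\bs,\rad)\cap S}$ would have to lie on the sphere $\|\bs' - \bs\| = \rad$; but then $\|\bt - \bs\| \ge \rad'$, a contradiction.

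Finally, for the Lipschitz bound, I apply the first two parts with $R = 1/(2\kappa)$, for which $R' = 7/(16\kappa)$, so $P_T^{-1}$ is well defined on $B(\bs, 7/(16\kappa)) \cap T$ with values in $B(\bs, 1/(2\kappa)) \cap S$. Given $\bt_1, \bt_2 \in B(\bs,\rad) \cap T$ with $\rad \le 7/(16\kappa)$, write $\bs_i = P_T^{-1}(\bt_i) = \bt_i + \bv_i$ with $\bv_i \perp T$, so that Pythagoras gives $\|\bs_1 - \bs_2\|^2 = \|\bt_1 - \bt_2\|^2 + \|\bv_1 - \bv_2\|^2$. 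The plan is to bound $\|\bv_1 - \bv_2\|$ by integrating the $T^\perp$-component of $dP_T^{-1}$ along the segment $[\bt_1,\bt_2] \subset B(\bs,\rad) \cap T$; applied to a unit vector in $T$ at $\bt$, this component has norm at most $\tan\alpha(\bt)$ with $\alpha(\bt) := \thetamax(T_S(P_T^{-1}(\bt)), T)$, which by \lemref{T-diff} is controlled by $\|P_T^{-1}(\bt) - \bs\|$. The main obstacle is now purely arithmetic: obtaining exactly the constant $1 + \tfrac{64}{49}(\kappa \rad)^2$ requires combining the self-referential estimate $\|P_T^{-1}(\bt)-\bs\|^2 \le \|\bt-\bs\|^2 + (\kappa/2)^2 \|P_T^{-1}(\bt)-\bs\|^4$ (from \eqref{S-approx1} plus Pythagoras) with the a priori global bound $\|P_T^{-1}(\bt)-\bs\| \le 1/(2\kappa)$ to uniformly control $\tan\alpha(\bt)$ on the segment, then squaring/square-rooting carefully, the factor $49/256 = (\kappa \cdot 7/(16\kappa))^2$ explaining the precise $64/49$ that appears.
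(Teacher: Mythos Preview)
Your injectivity argument is essentially the paper's, just packaged as an orthogonal decomposition instead of an angle triangle inequality; it is correct.

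Your image-containment argument is genuinely different and is correct. The paper does \emph{not} use a boundary/topology argument: it follows a ray $\bs + a\bv$ in $T$, sets $\bs(a) = P_T^{-1}(\bs+a\bv)$, shows $\|\dot\bs(a)\| \le 1/\zeta$ with $\zeta := 1-\tfrac12(\kappa\rad)^2$ via \lemref{T-diff}, and infers $a_* \ge \zeta\rad = \rad'$ from $\|\bs(a_*)-\bs\|=\rad$. Your route (Pythagoras + \eqref{S-approx1} on the sphere $\|\bs'-\bs\|=\rad$, then an open-image/boundary argument) is slicker here, but it forfeits a byproduct the paper actually needs later: the ray-wise bound $\|P_T^{-1}(\bt)-\bs\| \le \|\bt-\bs\|/\zeta$.

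That omission is where your Lipschitz argument breaks down. The paper's mechanism is: introduce the unique $h<1/(2\kappa)$ with $(1-\tfrac12(\kappa h)^2)h=\rad$; use the ray bound (now at radius $h$) to get $\|P_T^{-1}(\bt)-\bs\|\le h$ for every $\bt\in B(\bs,\rad)\cap T$; bound $\|D_\bt P_T^{-1}\|\le 1/\cos\thetamax\le 1/(1-\tfrac12(\kappa h)^2)=1/\zeta$; and finish with $1/\zeta\le 1+(\kappa h)^2\le 1+(8/7)^2(\kappa\rad)^2$ since $\rad=\zeta h\ge(7/8)h$. So the $64/49$ is $(8/7)^2$, coming from $\zeta\ge 7/8$; your ``$49/256=(7/16)^2$'' is not where the constant originates, and as written your last paragraph neither introduces $h$ nor carries out the comparison. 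Your proposed self-referential estimate $\rho^2\le r^2+(\kappa/2)^2\rho^4$ (with $\rho=\|P_T^{-1}(\bt)-\bs\|$, $r=\|\bt-\bs\|$) together with the a~priori $\rho\le 1/(2\kappa)$ \emph{can} be pushed to $\rho\le h$ and hence to the same $1/\zeta$ bound, but that step is the substance of the argument and you have only gestured at it. At present the Lipschitz claim is not established.
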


\begin{proof}
Take $\bs', \bs'' \in S$ distinct such that $P_{T}(\bs') = P_{T}(\bs'')$.  Equivalently, $\bs'' - \bs'$ is perpendicular to $T_S(\bs)$.  Let $T'$ be short for $T_S(\bs')$.  By \eqref{4.18} and \eqref{vT}, we have
\[
\angle(\bs'' - \bs', T') \le \asin\left(\frac{\kappa}2 \|\bs'' - \bs'\| \wedge 1\right),
\]
and by \eqref{T-diff},
\[
\thetamax(T, T') \le 2 \asin\left(\frac{\kappa}2 \|\bs' - \bs\| \wedge 1\right).
\]
Now, by the triangle inequality,
\[
\angle(\bs'' - \bs', T') \ge \angle(\bs'' - \bs', T) - \thetamax(T, T') = \frac\pi2 - \thetamax(T, T'),
\]
so that
\[
\asin\left(\frac{\kappa}2 \|\bs'' - \bs'\| \wedge 1\right) \ge \frac\pi2 - 2 \asin\left(\frac{\kappa}2 \|\bs' - \bs\| \wedge 1\right).
\]
When $\|\bs' - \bs\| \le 1/\kappa$, the RHS is bounded from below by $\pi/2 - 2 \asin(1/2) $, which then implies that $\frac{\kappa}2 \|\bs'' - \bs'\| \ge \sin(\pi/2 - 2 \asin(1/2)) = 1/2$, that is, $\|\bs'' - \bs'\| \ge 1/\kappa$.
This precludes the situation where $\bs', \bs'' \in B(\bs, 1/(2\kappa))$, so that $P_{T}$ is injective on $B(\bs, \rad)$ when $\rad \le 1/(2\kappa)$.

The same arguments imply that $P_T$ is an open map on $R := B(\bs, \rad) \cap S$.  In particular, $P_T(R)$ contains an open ball in $T$ centered at $\bs$ and $P_T(\partial R) = \partial P_T(R)$, with $\partial R = S \cap \partial B(\bs, \rad)$ since $\partial S = \emptyset$.
Now take any ray out of $\bs$ within $T$, which is necessarily of the form $\bs + \bbR \bv$, where $\bv$ is a unit vector in $ T$.  Let $\bt_a = \bs + a \bv \in T$ for $a \in [0, \infty)$.  Let $a_*$ be the infimum over all $a > 0$ such that $\bt_a \in P_T(R)$.
Note that $a_* > 0$ and $\bt_{a_*} \in P_T(\partial R)$, so that there is $\bs_* \in \partial R$ such that $P_T(\bs_*) = \bt_{a_*}$. 
Let $\bs(a) = P_T^{-1}(\bs + a \bv)$, which is well-defined on $[0,a_*]$ by definition of $a_*$ and the fact that $P_T$ is injective on $R$.
We have that $\dot\bs(a) = D_{\bt_a} P_T^{-1} \bv$ is the unique vector in $T_a := T_S(P_T^{-1}(\bt_a))$ such that $P_T(\dot\bs(a)) = \bv$.  Elementary geometry shows that
\[
\|P_T (\dot\bs(a))\| = \|\dot\bs(a)\| \cos \angle(\dot\bs(a), T) \ge \|\dot\bs(a)\| \cos \thetamax(T_a, T),
\]
with
\[
\cos \thetamax(T_a, T) \ge \cos\left[2 \asin\left(\frac{\kappa}2 \|\bs(a) - \bs\|\right) \right] \ge \zeta := 1 - \frac12 (\kappa\rad)^2,
\]
by \eqref{T-diff}, $\|\bs(a) - \bs\| \le \rad$ and $\cos[2 \asin (x)] = 1 - 2x^2$ when $0 \le x \le 1$.
Since $\|P_T (\dot\bs(a))\| = \|\bv\| = 1$, we have
$\|\dot\bs(a)\| \le 1/\zeta,$
and this holds for all $a < a_*$.  So we can extend $\bs(a)$ to $[0,a_*]$ into a Lipschitz function with constant $1/\zeta$.  Together with the fact that $\bs_* \in \partial B(\bs, \rad)$, this implies that
\[
 \rad = \|\bs_* - \bs\| = \|\bs(a_*) - \bs(0)\| \le \frac1\zeta \|a_* \bv\| = \frac{ a_*}\zeta.
\]
Hence, $a_* \ge \zeta \rad$ and therefore $P_T(R)$ contains $B(\bs, \zeta \rad) \cap T$ as stated.

For the last part, fix $\rad < \frac{7}{16}\kappa$, so there is a unique $h < 1/(2\kappa)$ such that $\zeta h = \rad$, where $\zeta$ is redefined as $\zeta := 1 - \frac12 (\kappa h)^2$.
Take $\bt' \in B(\bs, \rad) \cap T$ and let $\bs' = P_T^{-1}(\bt')$ and $T' = T_S(\bs')$.
We saw that $P_T^{-1}$ is Lipschitz with constant $1/\zeta$ on any ray from $\bs$ of length $\rad$, so that $\|\bs' - \bs\| \le (1/\zeta) \|\bt' - \bs\| \le \rad/\zeta = h$.
The differential of $P_{T}$ at $\bs'$ is $P_T$ itself, seen as a linear map between $T'$ and $T$.  Then for any vector $\bu \in T'$, we have
\[
\|P_T (\bu)\| = \|\bu\| \cos \angle(\bu, T) \ge \|\bu\| \cos \thetamax(T', T),
\]
with
\[
\cos \thetamax(T', T) \ge \cos\left[2 \asin\left(\frac{\kappa}2 \|\bs' - \bs\|\right) \right] \ge 1 - \frac12 (\kappa h)^2 = \zeta,
\]
as before.
Hence, $\| D_{\bt'} P_T^{-1} \| \le 1/\zeta$, and we proved this for all $\bt' \in B(\bs, \rad) \cap T$.
Since that set is convex, we can apply Taylor's theorem and get that $P_T^{-1}$ is Lipschitz on that set with constant $1/\zeta$.
We then have
\[1/\zeta \le 1 + (\kappa h)^2 \le 1 + \frac{64}{49} (\kappa \rad)^2,\]
because $\kappa h \le 1/2$ and $\rad = \zeta h \ge 7h/8$.
\end{proof}

\subsubsection{Volumes and uniform distributions}

Below is a result that quantifies how much the volume of a set changes when applying a Lipschitz map.
This is well-known in measure theory and we only provide a proof for completeness.

\begin{lem} \label{lem:Lip-vol}
Suppose $\Omega$ is a measurable subset of $\bbR^D$ and $f: \Omega \subset \bbR^D \to \bbR^D$ is $C$-Lipschitz.  Then for any measurable set $A \subset \Omega$ and real $d > 0$, $\vol_d(f(A)) \le C^d \vol_d(A)$.
\end{lem}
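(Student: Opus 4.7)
The plan is to work directly from the definition of $d$-dimensional Hausdorff measure. Recall that, up to a normalizing constant, one has
\[
\vol_d(A) = \lim_{\delta \downarrow 0} \vol_d^\delta(A), \qquad \vol_d^\delta(A) = \inf \Big\{ \sum_i (\diam U_i)^d : A \subset \bigcup_i U_i,\ \diam U_i \le \delta \Big\}.
\]
The strategy is to transport an efficient cover of $A$ to a cover of $f(A)$ via $f$, using the Lipschitz bound to control diameters, and then pass to the limit.

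Concretely, first I would fix $\delta > 0$ and consider an arbitrary countable cover $\{U_i\}$ of $A$ by sets of diameter at most $\delta$ (we may replace $U_i$ by $U_i \cap A \subset \Omega$ without increasing the diameter). Then $\{f(U_i)\}$ is a countable cover of $f(A)$. Because $f$ is $C$-Lipschitz on $\Omega$, for any $i$ and any two points $\bx,\by \in U_i \cap \Omega$ we have $\|f(\bx)-f(\by)\| \le C \|\bx-\by\|$, so $\diam f(U_i) \le C\, \diam U_i \le C\delta$. Therefore $\{f(U_i)\}$ is admissible in the definition of $\vol_d^{C\delta}(f(A))$, and
\[
\vol_d^{C\delta}(f(A)) \le \sum_i (\diam f(U_i))^d \le C^d \sum_i (\diam U_i)^d.
\]

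Taking the infimum over all such covers $\{U_i\}$ yields $\vol_d^{C\delta}(f(A)) \le C^d \vol_d^\delta(A)$. Now let $\delta \downarrow 0$: since $C\delta \downarrow 0$ as well and $\delta \mapsto \vol_d^\delta$ is monotone, both sides converge to the Hausdorff measure, giving $\vol_d(f(A)) \le C^d \vol_d(A)$. There is no real obstacle here; the only subtlety is making sure the cover is taken inside $\Omega$ (so the Lipschitz bound applies) and that we use the definition that is monotone as the mesh shrinks. If one prefers the equivalent definition restricted to convex or open covers, the same argument goes through since $f(U_i)$ can be enlarged to an open set of the same diameter if needed.
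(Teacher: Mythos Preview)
Your proof is correct and follows essentially the same route as the paper's own proof: both work directly from the definition of the Hausdorff measure, push a $\delta$-cover of $A$ forward by $f$ to obtain a $C\delta$-cover of $f(A)$, use the Lipschitz bound on diameters, take the infimum over covers, and let $\delta \to 0$. Your remark about replacing $U_i$ with $U_i \cap A$ to ensure the Lipschitz bound applies is a small clarification the paper leaves implicit.
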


\begin{proof}
By definition,
\[\vol_d(A) = \lim_{t \to 0} \ V_d^t(A), \qquad V_d^t(A) : = \inf_{(R_i) \in \cR^t(A)} \ \sum_{i \in \bbN} \diam(R_i)^d,\]
where $\cR^t(A)$ is the class of countable sequences $(R_i : i \in \bbN)$ of subsets of $\bbR^D$ such that $A \subset \bigcup_i R_i$ and $\diam(R_i) < t$ for all $i$.
Since $f$ is $C$-Lipschitz, $\diam(f(R)) \le C \diam(R)$ for any $R \subset \Omega$.
Hence, for any $(R_i) \in \cR^t(A)$, $(f(R_i)) \in \cR^{C t}(f(A))$.
This implies that
\[V_d^{Ct}(f(A)) \le \sum_{i \in \bbN} \diam(f(R_i))^d \le C^d \sum_{i \in \bbN} \diam(R_i)^d.\]
Taking the infimum over $(R_i) \in \cR^t(A)$, we get  $V_d^{Ct}(f(A)) \le C^d V_d^{t}(A)$, and we conclude by taking the limit as $t \to 0$, noticing that $V_d^{Ct}(f(A)) \to \vol(f(A))$.
\end{proof}

We compare below two uniform distributions.
For two Borel probability measures $P$ and $Q$ on $\bbR^D$, ${\rm TV}(P, Q)$ denotes their total variation distance, meaning,
\[
{\rm TV}(P, Q) = \sup\{|P(A) - Q(A)| : A \text{ Borel set}\}.
\]
Remember that for a Borel set $A$, $\lambda_A$ denotes the uniform distribution on $A$.

\begin{lem} \label{lem:2unif}
Suppose $A$ and $B$ are two Borel subsets of $\bbR^D$.
Then
\[
{\rm TV}(\lambda_A, \lambda_B) \le 4 \ \frac{\vol(A \symd B)}{\vol(A \cup B)}.
\]
\end{lem}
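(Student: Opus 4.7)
The plan is to express the total variation distance via densities and integrate explicitly, then split into two cases depending on how much the symmetric difference dominates the union. Recall that for probability measures with densities $f,g$ with respect to a common dominating measure $\mu$, one has $2\,{\rm TV}(P,Q) = \int |f-g|\,d\mu$. Both $\lambda_A$ and $\lambda_B$ are absolutely continuous with respect to the Hausdorff measure of their common dimension, with densities $\mathbf{1}_A/\vol(A)$ and $\mathbf{1}_B/\vol(B)$. Splitting the integral over the three disjoint regions $A\setminus B$, $B\setminus A$, and $A\cap B$ and writing $u=\vol(A)$, $v=\vol(B)$, $p=\vol(A\cap B)$, $a=\vol(A\setminus B)$, $b=\vol(B\setminus A)$, the identity
\[
2\,{\rm TV}(\lambda_A,\lambda_B) \;=\; \frac{a}{u} \;+\; \frac{b}{v} \;+\; p\cdot\frac{|v-u|}{uv}
\]
follows immediately. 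Note that $|v-u| = |b-a| \le a+b = \vol(A\,\triangle\, B)$ and $\vol(A\cup B) = p+a+b$.

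Next I would separate two regimes. If $a+b \ge p$, i.e.\ $\vol(A\,\triangle\, B) \ge \vol(A\cup B)/2$, then the claimed right-hand side already exceeds $2$, and since ${\rm TV}\le 1$ always, the bound holds trivially. Otherwise $p > a+b$, which gives the uniform lower bound
\[
u \wedge v \;\ge\; p \;>\; \tfrac12(p+a+b) \;=\; \tfrac12 \vol(A\cup B).
\]
Substituting this into the three terms, I would bound $a/u \le 2a/\vol(A\cup B)$, $b/v \le 2b/\vol(A\cup B)$, and $p|v-u|/(uv) \le (a+b)/(u\vee v) \le 2(a+b)/\vol(A\cup B)$, using $p\le u\wedge v$ in the last step. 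Summing yields $2\,{\rm TV} \le 4\vol(A\,\triangle\, B)/\vol(A\cup B)$ (in fact, the sharper bound with constant $2$ on the right), which is what is wanted.

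I don't expect any serious obstacle: the argument is essentially a computation followed by a case split. The only mild subtlety is the case split itself, which is needed because one cannot simply replace $\vol(A)$ or $\vol(B)$ in the denominator by $\vol(A\cup B)$ unconditionally — that substitution goes the wrong way. Handling the ``bad'' case by triviality (since ${\rm TV}\le 1$) circumvents this neatly. A minor assumption implicit in the statement is that $A$ and $B$ share the same Hausdorff dimension, so that a common dominating measure exists; this is clear from context and requires no comment in the proof.
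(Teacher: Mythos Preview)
Your argument is correct for the equal-dimension case and in fact yields the sharper inequality ${\rm TV}(\lambda_A,\lambda_B)\le 2\,\vol(A\symd B)/\vol(A\cup B)$. The paper's route is slightly different: rather than computing the $L^1$ distance of the densities exactly and then splitting into cases, it bounds $|\lambda_A(U)-\lambda_B(U)|$ directly for an arbitrary Borel set $U$ by writing
\[
|\lambda_A(U)-\lambda_B(U)| \le \frac{|\vol(A\cap U)-\vol(B\cap U)|}{\vol(A)} + \vol(B\cap U)\,\Big|\frac{1}{\vol(A)}-\frac{1}{\vol(B)}\Big|
\]
(assuming WLOG $\vol(A)\ge\vol(B)$), bounding each piece by $\vol(A\symd B)/\vol(A)$, and finishing with $\vol(A\cup B)\le 2\vol(A)$. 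This avoids any case split at the cost of the extra factor of~2 in the final constant. Your approach trades that factor for a mild two-regime argument; both are perfectly clean.

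One small point: you dismiss the unequal-dimension case as ``clear from context and requires no comment,'' but the lemma as stated does cover it, and the paper treats it explicitly. With $\dim(A)>\dim(B)$ one has ${\rm TV}(\lambda_A,\lambda_B)=1$ (since $\lambda_A(B)=0$ while $\lambda_B(B)=1$), and because the paper's convention is $\vol(\cdot)=\vol_{\dim(\cdot)}(\cdot)$, both $\vol(A\symd B)$ and $\vol(A\cup B)$ equal $\vol_{\dim(A)}(A)$, so the right-hand side is~$4$. It is a one-line check, but it belongs in the proof rather than being waved away.
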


\begin{proof}
If $A$ and $B$ are not of same dimension, say $\dim(A) > \dim(B)$, then ${\rm TV}(\lambda_A, \lambda_B) = 1$ since $\lambda_A(B) = 0$ while $\lambda_B(B) = 1$.
And we also have
\[\vol(A \symd B) = \vol_{\dim(A)}(A \symd B) = \vol_{\dim(A)}(A) = \vol(A),\]
and
\[\vol(A \cup B) = \vol_{\dim(A)}(A \cup B) = \vol_{\dim(A)}(A) = \vol(A),\]
in both cases because $\vol_{\dim(A)}(B) = 0$.
So the result works in that case.

Therefore assume that $A$ and $B$ are of same dimension.
Assume WLOG that $\vol(A) \ge \vol(B)$.
For any Borel set $U$,
\[\lambda_A(U) - \lambda_B(U) = \frac{\vol(A \cap U)}{\vol(A)} -  \frac{\vol(B \cap U)}{\vol(B)},\]
so that
\beqn
|\lambda_A(U) - \lambda_B(U)|
&\le& \frac{|\vol(A \cap U) - \vol(B \cap U)|}{\vol(A)} + \vol(B \cap U) \left|\frac1{\vol(A)} - \frac1{\vol(B)}\right| \\
&\le&  \frac{\vol(A \symd B)}{\vol(A)} + \frac{\vol(B \cap U)}{\vol(B)} \frac{|\vol(A) - \vol(B)|}{\vol(A)} \\
&\le&  \frac{2 \vol(A \symd B)}{\vol(A)},
\eeqn
and we conclude with the fact that $\vol(A \cup B) \le \vol(A) + \vol(B) \le 2 \vol(A)$.
\end{proof}

We now look at the projection of the uniform distribution on a neighborhood of a surface onto a tangent subspace.
For a Borel probability measure $P$ and measurable function $f: \bbR^D \to \bbR^D$, $P^f$ denotes the push-forward (Borel) measure defined by $P^f(A) = P(f^{-1}(A))$.

\begin{lem} \label{lem:TV-map}
Suppose $A \subset \bbR^D$ is Borel and $f: A \to \bbR^D$ is invertible on $f(A)$, and that both $f$ and $f^{-1}$ are $C$-Lipschitz.  Then
\[
{\rm TV}(\lambda_A^f, \lambda_{f(A)}) \le 8 (C^{\dim(A)} -1).
\]
\end{lem}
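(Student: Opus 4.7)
\medskip

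The plan is to work directly from the definition of total variation distance. The first observation is that both $\lambda_A^f$ and $\lambda_{f(A)}$ are supported on $f(A)$, so for any Borel $U \subset \bbR^D$ we have $\lambda_A^f(U) = \lambda_A^f(U \cap f(A))$ and likewise for $\lambda_{f(A)}$. It therefore suffices to bound $|\lambda_A^f(U) - \lambda_{f(A)}(U)|$ over Borel $U \subset f(A)$.

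The second step is to extract two-sided volume comparisons by applying \lemref{Lip-vol} to both $f$ and $f^{-1}$, which share the Lipschitz constant $C$. Setting $d = \dim(A)$, this gives $\vol(V)/C^{d} \le \vol(f(V)) \le C^{d}\vol(V)$ for every measurable $V \subset A$. In particular, taking $V = A$ yields $\vol(A)/C^{d} \le \vol(f(A)) \le C^{d}\vol(A)$, and taking $V = f^{-1}(U)$ yields $\vol(U)/C^{d} \le \vol(f^{-1}(U)) \le C^{d}\vol(U)$, so the ratios $\vol(f^{-1}(U))/\vol(U)$ and $\vol(A)/\vol(f(A))$ both lie in $[1/C^{d}, C^{d}]$.

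The third step is a short algebraic manipulation. Write $a = \vol(A)$, $b = \vol(f(A))$, $x = \vol(U)$, $y = \vol(f^{-1}(U))$, so $\lambda_A^f(U) = y/a$ and $\lambda_{f(A)}(U) = x/b$. Using the splitting $yb - xa = (y-x)b + x(b-a)$, I get
\[
\left|\frac{y}{a} - \frac{x}{b}\right| \le \frac{|y-x|}{a} + \frac{x}{b}\cdot\frac{|b-a|}{a}.
\]
Since $x/b \le 1$, $|y-x| \le (C^{d}-1)\, y \le (C^{d}-1)\, a$ (using $y \le a$ together with the bound on $y/x$), and $|b-a| \le (C^{d}-1)\, a$ by the same argument applied to $a$ and $b$, one obtains $|\lambda_A^f(U) - \lambda_{f(A)}(U)| \le 2(C^{d}-1)$, which is comfortably within $8(C^{d}-1)$.

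There is no real obstacle here: the argument is a routine computation once one has \lemref{Lip-vol} in both directions. The constant $8$ in the stated bound is generous, presumably chosen to match the factor in \lemref{2unif} so that the two lemmas combine cleanly in the sequel; my plan yields the tighter constant $2$, but I would state the result as written.
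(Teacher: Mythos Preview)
Your proof is correct and follows essentially the same route as the paper's: reduce to $U \subset f(A)$, apply \lemref{Lip-vol} in both directions to sandwich the volume ratios in $[C^{-d},C^{d}]$, and use the same additive splitting of $\frac{y}{a}-\frac{x}{b}$ to arrive at $2(C^{d}-1)$. The paper's own argument also reaches $2(C^{d}-1)$ and then closes with a reference to \lemref{2unif}, which is apparently the origin of the stated constant $8$; your remark about the constant being generous is on the mark.
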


\begin{proof}
First, note that $A$ and $f(A)$ are both of same dimension, and that $C \ge 1$ necessarily.
Let $d$ be short for $\dim(A)$.
Take $U \subset f(A)$ Borel and let $V = f^{-1}(U)$.
Then
\[\lambda_A^f(U) = \frac{\vol( A \cap V)}{\vol(A)}, \qquad \lambda_{f(A)}(U) = \frac{\vol( f(A) \cap U)}{\vol(f(A))},\]
\[
|\lambda_A^f(U) - \lambda_{f(A)}(U)|
\le \frac{|\vol(A \cap V) - \vol(f(A) \cap U)|}{\vol(A)} + \frac{|\vol(A) - \vol(f(A))|}{\vol(A)}.
\]
$f$ being invertible, we have $f(A \cap V) = f(A) \cap U$ and $f^{-1}(f(A) \cap U) = A \cap V$.
Therefore, applying \lemref{Lip-vol}, we get
\[
C^{-d} \le \frac{\vol(f(A) \cap U)}{\vol(A \cap V)} \le C^d,
\]
so that
\[|\vol(A \cap V) - \vol(f(A) \cap U)| \le (C^{d} -1) \vol(A \cap V) \le (C^{d} -1) \vol(A).\]
Similarly,
\[ |\vol(A) - \vol(f(A))| \le (C^d -1) \vol(A).\]
We then conclude with \lemref{2unif}.
\end{proof}

Now comes a technical result on the intersection of a smooth surface and a ball.

\begin{lem} \label{lem:TV}
There is a constant $\Clref{TV} \ge 3$ depending only on $d$ such that the following is true.
Take $S \in \cS_d(\kappa)$, $\rad < \frac1{\Clref{TV} \kappa}$ and $\bx \in \bbR^D$ such that $\dist(\bx, S) < \rad$.
Let $\bs = P_S(\bx)$ and $T = T_S(\bs)$.
Then
\[\vol\big(P_T(S \cap B(\bx, \rad)) \symd (T \cap B(\bx, \rad))\big) \le \Clref{TV} (\|\bx -\bs\| + \rad^2) \, \vol(T \cap B(\bx, \rad)).\]
\end{lem}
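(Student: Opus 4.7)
My plan is to work entirely in the tangent subspace $T = T_S(\bs)$, exploiting the fact that because $\bs = P_S(\bx)$, the vector $\bx - \bs$ is perpendicular to $T$. This means $P_T(\bx) = \bs$, so $B := T \cap B(\bx, \rad)$ is a Euclidean $d$-ball in $T$ centered at $\bs$ of radius $R_0 := \sqrt{\rad^2 - \delta^2}$, where $\delta := \|\bx - \bs\|$. Setting $A := P_T(S \cap B(\bx, \rad))$, the strategy is to describe $A$ as a mild perturbation of $B$ inside $T$ and then bound the resulting annular symmetric difference. Taking $\Clref{TV}$ large enough (depending only on $d$) ensures $\rad + \delta < 2\rad \le 1/(2\kappa)$, so \lemref{proj} applies and $P_T$ is injective on $S \cap B(\bs, \rad + \delta)$. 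Every relevant $\bt \in T$ then has a unique preimage $\bs' = P_T^{-1}(\bt) \in S$, and $\bn := \bs' - \bt$ lies perpendicular to $T$ with $\|\bn\| \le \kappa \|\bt - \bs\|^2$ by~\eqref{S-approx2}.

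Writing $\bv := \bx - \bs$ (so $\bv \perp T$ and $\|\bv\| = \delta$) and $\rho := \|\bt - \bs\|$, I decompose $\bs' - \bx$ into its $T$-component $\bt - \bs$ and its $T^\perp$-component $\bn - \bv$. By Pythagoras, $\|\bs' - \bx\|^2 = \rho^2 + \|\bn - \bv\|^2$, so $\bt \in A$ iff $\rho^2 + \|\bn - \bv\|^2 < \rad^2$, whereas $\bt \in B$ iff $\rho^2 + \delta^2 < \rad^2$. The two membership criteria differ only by $\|\bn - \bv\|^2 - \delta^2 = \|\bn\|^2 - 2\bn \cdot \bv$, whose magnitude is bounded by $\|\bn\|^2 + 2\delta \|\bn\| \le \kappa^2 \rho^4 + 2 \kappa \delta \rho^2$. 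Hence
\[
A \symd B \ \subseteq \ \bigl\{\bt \in T : |\rho^2 - R_0^2| \le \kappa^2 \rho^4 + 2 \kappa \delta \rho^2\bigr\}.
\]

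What remains is a volume estimate for this annulus in $T$. Plugging the defining constraint into itself once and using $\rad < 1/(\Clref{TV} \kappa)$ with $\Clref{TV}$ sufficiently large, a short bootstrap yields $\rho \le \sqrt{2}\, R_0$, which then confines the annulus to $\{R_0 \sqrt{1-u} \le \rho \le R_0 \sqrt{1+u}\}$ with $u = O(\kappa \delta + \kappa^2 \rad^2) \le 1/2$. Direct computation then gives
\[
\vol(A \symd B) \le \omega_d R_0^d \bigl[(1+u)^{d/2} - (1-u)^{d/2}\bigr] \le C_d \, u \, \vol(B),
\]
which is of the claimed form once the $\kappa$ factors are absorbed into $\Clref{TV}$ via $\kappa \rad \le 1/\Clref{TV}$.

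The main obstacle, in my view, is the regime where $\delta$ is close to $\rad$ so that $R_0$ is tiny: both $A$ and $B$ shrink, and one must verify that $\vol(A \symd B)$ shrinks at the same rate as $\vol(B) = \omega_d R_0^d$. What saves us is that the annular width in $\rho^2$ is proportional to $R_0^2$ rather than being a uniform constant, precisely because $\|\bn\|$ decays quadratically in $\rho$ via~\eqref{S-approx2}; this is also what justifies the bootstrap step and keeps the dimensionless excess $u$ bounded regardless of $R_0$.
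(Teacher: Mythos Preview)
Your proof is correct and follows essentially the same approach as the paper: both arguments work in $T$, observe that $T\cap B(\bx,\rad)$ is the $d$-ball of radius $R_0=\sqrt{\rad^2-\delta^2}$ centered at $\bs$, use the quadratic normal bound $\|\bs'-P_T(\bs')\|\le\kappa\|P_T(\bs')-\bs\|^2$ from \eqref{S-approx2} together with Pythagoras, and trap the symmetric difference inside a thin annulus whose relative width is $O(\kappa\delta+\kappa^2\rad^2)$. The only organizational difference is that the paper extracts extremal outer and inner radii $\rad_1,\rad_2$ by analyzing two specific boundary points, whereas you write down the membership criterion $\rho^2+\|\bn-\bv\|^2<\rad^2$ uniformly and read off the annulus containment directly---a cleaner packaging of the same computation. (Your final remark about ``absorbing the $\kappa$ factors into $\Clref{TV}$'' does not quite match the stated dependence of $\Clref{TV}$ on $d$ alone, but the paper's own proof ends with the same $\kappa$-dependence, so this is an imprecision in the lemma statement rather than a flaw in your argument.)
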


\begin{proof}
Let $A_\rad = B(\bs, \rad)$, $B_\rad = B(\bx, \rad)$ and $g = P_{T}$ for short.
Note that $T \cap B_\rad = T \cap A_{\rad_0}$ where $\rad_0 := (\rad^2 - \delta^2)^{1/2}$ and $\delta := \|\bx -\bs\|$.
Take $\bs_1 \in S \cap B_\rad$ such that $g(\bs_1)$ is farthest from $\bs$, so that $g(S \cap B_\rad) \subset A_{\rad_1}$ where $\rad_1 := \|\bs - g(\bs_1)\|$ --- note that $\rad_1 \le \rad$.
Let $\ell_1 = \|\bs_1 - g(\bs_1)\|$ and $\by_1$ be the orthogonal projection of $\bs_1$ onto the line $(\bx, \bs)$.
By Pythagoras theorem, we have $ \|\bx - \bs_1\|^2 = \|\bx - \by_1\|^2 + \|\by_1 - \bs_1\|^2$.
We have $ \|\bx - \bs_1\| \le \rad$ and $ \|\by_1 - \bs_1\| = \|\bs - g(\bs_1)\| = \rad_1$.
And because $\ell_1 \le \kappa \rad_1^2 < \rad$ by \eqref{S-approx2}, either $\by_1$ is between $\bx$ and $\bs$, in which case $\|\bx - \by_1\| = \delta - \ell_1$, or $\bs$ is between $\bx$ and $\by_1$, in which case $\|\bx - \by_1\| = \delta + \ell_1$.
In any case, $\rad^2 \ge \rad_1^2 + (\delta - \ell_1)^2$, which together with $\ell_1 \le \kappa \rad_1^2$ implies $\rad_1^2 \le \rad^2 - \delta^2 + 2 \delta \ell_1 \le \rad_0^2 + 2\kappa\rad_1^2 \delta$, leading to $\rad_1 \le (1-2\kappa\delta)^{-1/2} \rad_0 \le (1 + 4\kappa \delta) \rad_0$ after noticing that $\delta \le \rad < 1/(3\kappa)$.
From $g(S \cap B_\rad) \subset T \cap A_{\rad_1}$, we get
\beqn
\vol\big(g(S \cap B_\rad) \setminus (T \cap B_\rad)\big) &\le& \vol(T \cap A_{\rad_1}) - \vol(T \cap A_{\rad_0}) \\ &=& ((\rad_1/\rad_0)^d - 1) \vol(T \cap A_{\rad_0}).
\eeqn

We follow similar arguments to get a sort of reverse relationship.
Take $\bs_2 \in S \cap B_\rad$ such that $g(S \cap B_\rad) \supset T \cap A_{\rad_2}$, where $\rad_2 := \|\bs - g(\bs_2)\|$ is largest.
Assuming $\rad$ is small enough, by \lemref{proj}, $g^{-1}$ is well-defined on $T \cap A_\rad$, so that necessarily $\bs_2 \in \partial B_\rad$.
Let $\ell_2 = \|\bs_2 - g(\bs_2)\|$ and $\by_2$ be the orthogonal projection of $\bs_2$ onto the line $(\bx, \bs)$.
By Pythagoras theorem, we have $ \|\bx - \bs_2\|^2 = \|\bx - \by_2\|^2 + \|\by_2 - \bs_2\|^2$.
We have $ \|\bx - \bs_2\| = \rad$ and $ \|\by_2 - \bs_2\| = \|\bs - g(\bs_2)\| = \rad_2$.
And by the triangle inequality, $\|\bx - \by_2\| \le \|\bx - \bs\| + \|\by_2 - \bs\| = \delta + \ell_2$.
Hence, $\rad^2 \le \rad_2^2 + (\delta + \ell_2)^2$, which together with $\ell_2 \le \kappa \rad_2^2$ by \eqref{S-approx2}, implies $\rad_2^2 \ge \rad^2 - \delta^2 - 2 \delta \ell_2 - \ell_2^2 \ge \rad_0^2 - (2 \delta + \kappa \rad^2) \kappa\rad_2^2$, leading to $\rad_2 \ge (1+2\kappa\delta+\kappa^2 \rad^2)^{-1/2} \rad_0 \ge (1 - 2 \kappa \delta - \kappa^2 \rad^2) \rad_0$.
From $g(S \cap B_\rad) \supset T \cap A_{\rad_2}$, we get
\beqn
\vol\big((T \cap B_\rad) \setminus g(S \cap B_\rad)\big) &\le& \vol(T \cap A_{\rad_0}) - \vol(T \cap A_{\rad_2}) \\ &=& (1 - (\rad_2/\rad_0)^d) \vol(T \cap A_{\rad_0}).
\eeqn

All together, we have
\beqn
\vol\big(g(S \cap B_\rad) \symd (T \cap B_\rad)\big)
&\le& \big((\rad_1/\rad_0)^d - (\rad_2/\rad_0)^d\big) \  \vol(T \cap A_{\rad_0}) \\
&\le& \big( (1 + 4\kappa \delta)^d - (1 - 2 \kappa \delta - \kappa^2 \rad^2)^d\big) \vol((T \cap B_\rad)),
\eeqn
with $(1 + 4\kappa \rad)^d - (1 - 4\kappa \rad)^d \le C (\delta + \rad^2)$ when $\delta \le \rad \le 1/(3 \kappa)$, for a constant $C$ depending only on $d$ and $\kappa$.
The result follows from this.
\end{proof}

We bound below the $d$-volume of a the intersection of a ball with a smooth surface.
Though it could be obtained as a special case of \lemref{TV}, we provide a direct proof because this result is at the cornerstone of many results in the literature on sampling points uniformly on a smooth surface.

\begin{lem} \label{lem:ball-vol}
Suppose $S \in \cS_d(\kappa)$.  Then for any $\bs \in S$ and $\rad < \frac1{(d \vee 3)\kappa}$, we have
\[1 - 2d\kappa\rad \le \frac{\vol(S \cap B(\bs, \rad))}{\vol(T \cap B(\bs, \rad))} \le 1 + 2d\kappa\rad,\]
where $T := T_S(\bs)$ is the tangent subspace of $S$ at $\bs$.
\end{lem}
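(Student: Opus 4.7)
The natural approach is to compare $S\cap B(\bs,\rad)$ with $T\cap B(\bs,\rad)$, where $T:=T_S(\bs)$, by pushing through the orthogonal projection $g:=P_T$. This map fixes $\bs$ and is $1$-Lipschitz on all of $\bbR^D$, so $g(S\cap B(\bs,\rad))\subseteq T\cap B(\bs,\rad)$. The hypothesis $\rad<1/((d\vee 3)\kappa)$ lies comfortably below both $1/(2\kappa)$ and $7/(16\kappa)$, so \lemref{proj} applies and yields three facts: (i) the restriction of $g$ to $S\cap B(\bs,\rad)$ is injective; (ii) its image contains $T\cap B(\bs,\rad')$ with $\rad':=(1-\tfrac12(\kappa\rad)^2)\rad$; and (iii) the inverse $g^{-1}$ is well-defined on $T\cap B(\bs,\rad)$ with Lipschitz constant at most $L:=1+\tfrac{64}{49}(\kappa\rad)^2$.

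For the upper bound, I use (i) to write $S\cap B(\bs,\rad)=g^{-1}(g(S\cap B(\bs,\rad)))$ and apply \lemref{Lip-vol} to the $L$-Lipschitz map $g^{-1}$:
\[
\vol(S\cap B(\bs,\rad))\le L^d\,\vol(g(S\cap B(\bs,\rad)))\le L^d\,\vol(T\cap B(\bs,\rad)).
\]
For the lower bound, \lemref{Lip-vol} applied to the $1$-Lipschitz map $g$ gives $\vol(g(S\cap B(\bs,\rad)))\le\vol(S\cap B(\bs,\rad))$, and combining with the inclusion from (ii) yields
\[
\vol(S\cap B(\bs,\rad))\ge\vol(g(S\cap B(\bs,\rad)))\ge\vol(T\cap B(\bs,\rad'))=(\rad'/\rad)^d\,\vol(T\cap B(\bs,\rad)).
\]

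What remains is a routine numerical check that $L^d\le 1+2d\kappa\rad$ and $(\rad'/\rad)^d\ge 1-2d\kappa\rad$ under the stated restriction. Since $\rad<1/((d\vee 3)\kappa)$ implies $(\kappa\rad)^2\le\kappa\rad/(d\vee 3)$, the Bernoulli inequality $(1-x)^d\ge 1-dx$ gives $(\rad'/\rad)^d\ge 1-\tfrac12 d(\kappa\rad)^2\ge 1-2d\kappa\rad$, and $d\cdot\tfrac{64}{49}(\kappa\rad)^2$ stays safely below $\ln 2$, so $L^d\le e^{d(L-1)}\le 1+2d(L-1)\le 1+\tfrac{128\,d}{147}\,\kappa\rad\le 1+2d\kappa\rad$. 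I foresee no substantive obstacle: this lemma is essentially a volumetric corollary of \lemref{proj}, and the only real care needed is to verify that the chosen range for $\rad$ meets the hypotheses of all three parts of \lemref{proj} simultaneously, which the condition $\rad<1/((d\vee 3)\kappa)$ arranges with room to spare.
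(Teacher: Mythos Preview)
Your proposal is correct and follows essentially the same route as the paper: project onto the tangent space via $g=P_T$, invoke \lemref{proj} for bi-Lipschitz control, push volumes through via \lemref{Lip-vol}, and finish with elementary numerical inequalities. The only cosmetic difference is that the paper relaxes the Lipschitz constant from \lemref{proj} to the simpler $1+\kappa\rad$ (valid since $\tfrac{64}{49}(\kappa\rad)^2\le\kappa\rad$ here), deduces $\rad'=\rad/(1+\kappa\rad)$ from that, and closes with the single estimate $(1+x)^d\le 1+2dx$ for $x\in[0,1/d]$; you instead keep the sharper constants from \lemref{proj} and do a slightly longer but equally valid numerical check.
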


\begin{proof}
Let $T = T_S(\bs)$, $B_\rad = B(\bs, \rad)$ and $g = P_{T}$ for short.
By \lemref{proj}, $g$ is bi-Lipschitz with constants $(1+\kappa\rad)^{-1}$ and 1 on $S \cap B_\rad$, so by \lemref{Lip-vol} we have
\[
(1+\kappa\rad)^{-d} \le \frac{\vol(g(S \cap B_\rad))}{\vol(S \cap B_\rad)} \le 1.
\]
That $g^{-1}$ is Lipschitz with constant $1 + \kappa \rad$ on $g(S \cap B_\rad)$ also implies that $g(S \cap B_\rad)$ contains $T \cap B_{\rad'}$ where $\rad' := \rad/(1+\kappa\rad)$.  From this, and the fact that $g(S \cap B_\rad) \subset T \cap B_\rad$, we get
\beq \label{vol1}
1 \le \frac{\vol(T \cap B_\rad)}{\vol(g(S \cap B_\rad))} \le \frac{\vol(T \cap B_\rad)}{\vol(T \cap B_{\rad'})} = \frac{\rad^d}{{\rad'}^d} = (1+\kappa\rad)^{d}. 
\eeq
We therefore have
\[
\vol(S \cap B_\rad) \ge \vol(g(S \cap B_\rad)) \ge (1+\kappa \rad)^{-d} \vol(T \cap B_\rad),
\]
and
\[
\vol(S \cap B_\rad) \le (1+\kappa\rad)^{d} \vol(g(S \cap B_\rad)) \le (1+\kappa\rad)^{d} \vol(T \cap B_\rad).
\]
And we conclude with the inequality $(1+x)^d \le 1 + 2dx$ valid for any $x \in [0,1/d]$ and any $d \ge 1$.
\end{proof}

We now look at the density of a sample from the uniform on a smooth, compact surface.

\begin{lem} \label{lem:N-size}
There is a constant $\Clref{N-size}>0$ such that the following is true.  If $S \in \cS_d(\kappa)$ and we sample $n$ points $\bs_1, \dots, \bs_n$ independently and uniformly at random from $S$, and if $0 < \rad < 1/(\Clref{N-size} \kappa)$, then with probability at least $1 - \Clref{N-size} \rad^{-d} \exp(- n \rad^d/\Clref{N-size})$, any ball of radius $\rad$ with center on $S$ has between $n \rad^d/\Clref{N-size}$ and $\Clref{N-size} n \rad^d$ sample points.
\end{lem}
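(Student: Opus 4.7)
The strategy is a standard net argument combined with Chernoff bounds, with \lemref{ball-vol} providing the underlying volume comparison.

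First, I would fix $\rad < 1/((d\vee 3)\kappa)$ so that \lemref{ball-vol} applies and yields, for every $\bs \in S$ and every $\rad' \le \rad$,
\[
\tfrac{1}{2} \omega_d {\rad'}^d \le \vol(S \cap B(\bs, \rad')) \le 2 \omega_d {\rad'}^d,
\]
where $\omega_d$ is the $d$-volume of the unit Euclidean ball. Dividing by $\vol(S)$, which is finite since $S$ is compact, shows that for a single uniformly sampled point $\bs_i$ the probability of landing in $B(\bs, \rad')$ is of order $\rad'^d/\vol(S)$, uniformly in $\bs \in S$. I would also record an upper bound $\vol(S) \le V$ for some finite $V$ depending only on $S$ (which may be absorbed into $\Clref{N-size}$).

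Next, build a maximal $(\rad/4)$-packing $\{\bs^{(1)},\dots,\bs^{(N)}\}$ of $S$, which is automatically an $(\rad/2)$-net. Since the balls $B(\bs^{(k)}, \rad/8) \cap S$ are pairwise disjoint and each has $d$-volume at least $c \rad^d$ by \lemref{ball-vol}, one gets $N \le \Clref{N-size}' \rad^{-d}$ for a constant depending only on $d$ and $\vol(S)$. For each net point $\bs^{(k)}$ I would consider the random counts
\[
N_k^- := \#\{i : \bs_i \in B(\bs^{(k)}, \rad/2)\}, \qquad N_k^+ := \#\{i : \bs_i \in B(\bs^{(k)}, 2\rad)\},
\]
which are binomial with means of the form $c_k^\pm n \rad^d$, with $c_k^\pm$ bounded above and below by positive constants depending only on $d$, $\kappa$ and $\vol(S)$. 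A multiplicative Chernoff bound applied to each $N_k^\pm$ gives
\[
\Pr\bigl(|N_k^\pm - \E N_k^\pm| > \tfrac12 \E N_k^\pm \bigr) \le 2 \exp(-n\rad^d/C')
\]
for a suitable $C'$. A union bound over the $N \lesssim \rad^{-d}$ net points yields the claimed failure probability $\Clref{N-size} \rad^{-d} \exp(-n\rad^d/\Clref{N-size})$.

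Finally, I would pass from the net to arbitrary balls. Given any $\bs \in S$, pick the nearest net point $\bs^{(k)}$, so $\|\bs - \bs^{(k)}\| \le \rad/2$, which gives the sandwich
\[
B(\bs^{(k)}, \rad/2) \subset B(\bs, \rad) \subset B(\bs^{(k)}, 2\rad).
\]
Consequently the count in $B(\bs, \rad)$ lies between $N_k^-$ and $N_k^+$, both of which are of order $n\rad^d$ on the good event above. Adjusting $\Clref{N-size}$ to absorb the multiplicative constants finishes the proof. The only mildly delicate step is checking that the Chernoff failure probability is $\exp(-n\rad^d/C)$ rather than something weaker; this is guaranteed because $\E N_k^\pm$ itself is $\asymp n\rad^d$, so a fixed multiplicative deviation costs exactly an exponent of that order.
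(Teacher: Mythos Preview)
Your proposal is correct and follows essentially the same approach as the paper: build a packing/net of $S$, use \lemref{ball-vol} to control volumes (hence binomial means and the net cardinality), apply a binomial tail bound at each net point, and union-bound. The paper uses an $(\rad/2)$-packing with Bernstein's inequality and only writes out the lower bound explicitly (saying the upper bound is similar), whereas you carry both bounds through via the sandwich $B(\bs^{(k)},\rad/2)\subset B(\bs,\rad)\subset B(\bs^{(k)},2\rad)$; these are cosmetic differences. One minor slip: a maximal $(\rad/4)$-packing is already an $(\rad/4)$-net, not merely an $(\rad/2)$-net, so your sandwich is a bit looser than necessary, but this only affects constants.
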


\begin{proof}
For a set $R$, let $N(R)$ denote the number of sample points in $R$.
For any $R$ measurable, $N(R) \sim \Bin(n, p_R),$ where $p_R := \vol(R \cap S)/\vol(S)$.
Let $\bx_1, \dots, \bx_m$ be an $(\rad/2)$-packing of $S$, and let $B_i = B(\bx_j, \rad/4) \cap S$.
For any $\bs \in S$, there is $j$ such that $\|\bs - \bx_j\| \le \rad/2$, which implies $B_i \subset B(\bs, \rad)$ by the triangle inequality.
Hence, $\min_{\bs \in S} N(B(\bs, \rad)) \ge \min_i N(B_i)$.

By the fact that $B_i \cap B_j = \emptyset$ for $i \ne j$,
\[
\vol(S) \ge \sum_{i=1}^m \vol(B_i) \ge m \min_i \vol(B_i),
\]
and assuming that $\rad$ is small enough that we may apply \lemref{ball-vol}, we have
\[
\min_i \vol(B_i) \ge \frac{\omega_d}2 (\rad/4)^d,
\]
where $\omega_d$ is the volume of the $d$-dimensional unit ball.
This leads to $m \le C \rad^{-d}$ and $p := \min_i \, p_{B_i} \ge \rad^d/C$, where $C > 0$ depends only on $S$.

Now, applying Bernstein's inequality to the binomial distribution, we get
\beq \label{Ni}
\pr{N(B_i) \le n p/2} \le \pr{N(B_i) \le n p_{B_i}/2} \le e^{-(3/32) n p_{B_i}} \le e^{-(3/32) n p}.
\eeq
We follow this with the union bound, to get
\[
\pr{\min_{\bs \in S} N(B(\bs, \rad)) \le n \rad^d/(2C)} \le m e^{-(3/32) np} \le\ C \rad^{-d} e^{-\frac{3}{32 C} n \rad^d}.
\]
From this the lower bound follows.
The proof of the upper bound is similar.
\end{proof}

Next, we bound the volume of the symmetric difference between two balls.
\begin{lem} \label{lem:2ball-vol}
Take $\bx, \by \in \bbR^d$ and $0 < \delta \le 1$.  Then
\[
\frac{\vol(B(\bx, \delta) \symd B(\by, 1))}{2 \vol(B(0, 1))} \le 1- (1 - \|\bx - \by\|)_+^d \wedge \delta^d.
\]
\end{lem}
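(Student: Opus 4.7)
My plan is to convert the symmetric-difference bound into a lower bound on the volume of the \emph{intersection} of the two balls, and then exhibit a concrete ball sitting inside that intersection.

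Write $r = \|\bx - \by\|$, $A = B(\bx,\delta)$, $B = B(\by,1)$, and $\omega_d = \vol(B(0,1))$, so that $\vol(A) = \omega_d \delta^d$ and $\vol(B) = \omega_d$. The identity $\vol(A \symd B) = \vol(A) + \vol(B) - 2\vol(A \cap B)$ reduces the claim to showing
\[
\vol(A \cap B) \ge \omega_d \bigl(\delta^d \wedge (1-r)_+^d\bigr) - \tfrac{\omega_d}{2}\bigl(1 - \delta^d\bigr),
\]
and since $\delta \le 1$ implies $\tfrac12(1-\delta^d) \ge 0$, it will suffice to prove the stronger bound
\[
\vol(A \cap B) \;\ge\; \omega_d \bigl(\delta^d \wedge (1-r)_+^d\bigr).
\]

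The key geometric step is the containment
\[
B\bigl(\bx,\, \delta \wedge (1-r)_+\bigr) \;\subset\; A \cap B.
\]
If $r \ge 1$ this is vacuous. Otherwise, for any $\bz$ with $\|\bz-\bx\| < \delta \wedge (1-r)$, the triangle inequality gives $\|\bz-\by\| \le \|\bz-\bx\| + r < (1-r) + r = 1$, so $\bz \in B$, while $\bz \in A$ is automatic. Taking volumes on both sides yields the lower bound on $\vol(A \cap B)$.

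Substituting back, I get
\[
\frac{\vol(A \symd B)}{2 \omega_d} \;\le\; \frac{\delta^d + 1}{2} - \bigl(\delta^d \wedge (1-r)_+^d\bigr),
\]
and the final bookkeeping step uses $\delta \le 1$, so $\delta^d \le 1$ and $(\delta^d+1)/2 \le 1$, giving the claimed inequality. There is no real obstacle here: the only subtlety is remembering that the hypothesis $\delta \le 1$ is used only at the very end to turn $(\delta^d+1)/2$ into $1$, and that without this hypothesis the estimate would be strictly weaker.
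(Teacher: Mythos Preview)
Your argument is correct and follows essentially the same route as the paper's proof: both reduce to the containment $B(\bx,\gamma)\subset B(\bx,\delta)\cap B(\by,1)$ with $\gamma=(1-\|\bx-\by\|)_+\wedge\delta$, and then use $\vol(A\symd B)=\vol(A)+\vol(B)-2\vol(A\cap B)$ together with $\delta\le1$. The only cosmetic difference is that the paper bounds $\vol(A)+\vol(B)\le 2\vol(B)$ directly, whereas you carry the intermediate term $(\delta^d+1)/2$ and bound it by $1$ at the end.
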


\begin{proof}
It suffices to prove the result when $\|\bx - \by\| < 1$.
In that case, with $\gamma := (1 - \|\bx - \by\|) \wedge \delta$, we have $B(\bx, \gamma) \subset B(\bx, \delta) \cap B(\by, 1)$, so that
\beqn
\vol(B(\bx, \delta) \symd B(\by, 1))
&=& \vol(B(\bx, \delta)) + \vol(B(\by, 1)) - 2 \vol(B(\bx, \delta) \cap B(\by, 1)) \\
&\le& 2 \vol(B(\by, 1)) - 2 \vol(B(\bx, \gamma)) \\
&=& 2 \vol(B(\by,1)) (1 - \gamma^d).
\eeqn
\end{proof}

\subsubsection{Covariances}

The result below describes explicitly the covariance matrix of the uniform distribution over the unit ball of a subspace.
\begin{lem} \label{lem:unif-cov}
Let $T$ be a subspace of dimension $d$.  Then the covariance matrix of the uniform distribution on $T \cap B(0,1)$ (seen as a linear map) is equal to $c P_T$, where $c:=\frac1{d+2}$.
\end{lem}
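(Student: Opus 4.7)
The plan is a straightforward symmetry-plus-one-integral computation. Let $X$ be uniformly distributed on $T \cap B(0,1)$, which is a $d$-dimensional ball.

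First, I would reduce to the case where $T$ is the standard coordinate subspace $\mathbb{R}^d \times \{0\}^{D-d}$ by choosing an orthonormal basis of $\mathbb{R}^D$ whose first $d$ vectors span $T$; in this basis, $P_T = \operatorname{diag}(I_d, 0)$. The covariance matrix of $X$ is then block-diagonal: the last $D-d$ coordinates of $X$ vanish almost surely, so the covariance is supported on the upper-left $d \times d$ block. Thus it suffices to show that the covariance of the uniform distribution on the $d$-dimensional unit ball equals $\frac{1}{d+2} I_d$.

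Next, I invoke rotational symmetry. Since the uniform law on the $d$-ball is invariant under all orthogonal transformations of $T$, its covariance matrix $\Sigma$ commutes with every orthogonal $d \times d$ matrix, and hence $\Sigma = c I_d$ for some scalar $c \ge 0$. Taking traces gives $d \, c = \operatorname{tr}(\Sigma) = \mathbb{E}\|X\|^2$.

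Finally, I compute $\mathbb{E}\|X\|^2$ by passing to polar coordinates on the $d$-ball. The radial density (after integrating out the angular part) is proportional to $r^{d-1}$ on $[0,1]$, so
\[
\mathbb{E}\|X\|^2 \;=\; \frac{\int_0^1 r^2 \cdot r^{d-1}\, dr}{\int_0^1 r^{d-1}\, dr} \;=\; \frac{1/(d+2)}{1/d} \;=\; \frac{d}{d+2}.
\]
Dividing by $d$ yields $c = \frac{1}{d+2}$, which gives the claim. There is no real obstacle here; the only things to be careful about are: (i) writing the identification between the intrinsic covariance on $T$ and the ambient covariance viewed as a linear map on $\mathbb{R}^D$, and (ii) the mild but standard observation that the covariance on $T^\perp$ is zero because $X \in T$ almost surely.
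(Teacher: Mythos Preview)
Your proof is correct and essentially identical to the paper's: both reduce to $T=\bbR^d\times\{0\}$, use rotational symmetry to conclude the covariance is a scalar multiple of $I_d$ on $T$, and compute that scalar via $\E\|X\|^2$ using the radial density $\propto r^{d-1}$. The only cosmetic difference is that the paper computes $\E(X_1^2)$ directly rather than passing through the trace, but the integral is the same.
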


\begin{proof}
Assume WLOG that $T = \bbR^d \times \{0\}$.
Let $X$ be distributed according to the uniform distribution on $T \cap B(0,1)$ and let $R = \|X\|$.
Note that
\[\pr{R \le r} = \frac{\vol(T \cap B(0,r))}{\vol(T \cap B(0,1))} = r^d, \quad \forall r \in [0,1].\]
By symmetry, $\E(X_i X_j) = 0$ if $i \ne j$, while
\[
\E(X_1^2) = \frac1d \E(X_1^2 + \dots + X_d^2) = \frac1d \E(R^2) = \frac1d \int_0^1 r^2 \cdot d r^{d-1} {\rm d}r = \frac1{d+2}.
\]
This is exactly the representation of $\frac1{d+2} P_T$ in the canonical basis of $\bbR^D$.
\end{proof}

We now show that a bound on the total variation distance between two compactly supported distributions implies a bound on the difference between their covariance matrices.
For a measure $P$ on $\bbR^D$ and an integrable function $f$, let $P(f)$ denote the integral of $f$ with respect to $P$, that is,
\[
P(f) = \int f(x) P(dx),
\]
and let $\E(P) = P(\bx)$ and $\Cov(P) = P(\bx \bx^\top) - P(\bx) P(\bx)^\top$ denote the mean and covariance matrix of $P$, respectively.

\begin{lem} \label{lem:TV-cov}
Suppose $\lambda$ and $\nu$ are two Borel probability measures on $\bbR^d$ supported on $B(0,1)$.  Then
\[
\|\E(\lambda) - \E(\nu)\| \le \sqrt{d} \, {\rm TV}(\lambda, \nu), \qquad \|\Cov(\lambda) - \Cov(\nu)\| \le 3 d \, {\rm TV}(\lambda, \nu).
\]
\end{lem}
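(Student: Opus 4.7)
The plan is to reduce both bounds to the duality between total variation and bounded measurable functions, then bound the relevant moments using only that the support is contained in $B(0,1)$.

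\textbf{Step 1 (Mean bound).} I would start from the identity $\E(\lambda) - \E(\nu) = \int \bx \, (d\lambda - d\nu)(\bx)$, so that componentwise $\E(\lambda)_i - \E(\nu)_i = \lambda(f_i) - \nu(f_i)$ for the test function $f_i(\bx) = x_i$. The key general fact (essentially the Jordan decomposition of the signed measure $\lambda - \nu$) is that for any bounded measurable $f$, $|\lambda(f) - \nu(f)| \le (\sup f - \inf f)\, {\rm TV}(\lambda, \nu)$. Since $|x_i| \le 1$ on $B(0,1)$, each coordinate difference is $O({\rm TV}(\lambda, \nu))$, and summing the squared coordinate bounds produces the stated $\sqrt{d}\,{\rm TV}(\lambda,\nu)$ estimate (possibly up to an absolute constant that can be absorbed, or sharpened by testing against the unit vector $\bv = (\E(\lambda) - \E(\nu))/\|\E(\lambda) - \E(\nu)\|$).

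\textbf{Step 2 (Covariance, decomposition).} I would write
\[
\Cov(\lambda) - \Cov(\nu) = \big[\lambda(\bx \bx^\top) - \nu(\bx \bx^\top)\big] - \big[\E(\lambda)\E(\lambda)^\top - \E(\nu)\E(\nu)^\top\big],
\]
and bound the two pieces separately in spectral norm via the triangle inequality. For the first piece, I would work either entry-wise, using that $|x_i x_j| \le \tfrac12(x_i^2 + x_j^2) \le 1$ on $B(0,1)$ so that each entry is $O({\rm TV})$ by the duality above, and pass to spectral norm through the Frobenius norm (which produces the factor of $d$). For the second piece, I would telescope
\[
\E(\lambda)\E(\lambda)^\top - \E(\nu)\E(\nu)^\top = (\E(\lambda) - \E(\nu))\, \E(\lambda)^\top + \E(\nu)\, (\E(\lambda) - \E(\nu))^\top,
\]
whose spectral norm is at most $2\,\|\E(\lambda) - \E(\nu)\|$ since both means lie in $B(0,1)$, and then invoke Step 1.

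\textbf{Combination and obstacle.} Adding the two contributions and tracking constants gives the claim, the dominant term being the $d\,{\rm TV}$ arising from the entry-wise bound on the second-moment difference. The only real bookkeeping issue is to match the explicit constants $\sqrt{d}$ and $3d$ in the statement: the ideas are completely standard, but one has to be mildly careful, for instance by noting that $|x_i x_j| \le 1/2$ when $i \ne j$, and by bundling the two cross terms in the outer-product telescoping into the $\sqrt{d}$ factor absorbed by $3d$. No genuine obstacle beyond this careful accounting.
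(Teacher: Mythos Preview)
Your approach is essentially the same as the paper's: coordinate test functions $f_k(\bx)=x_k$ and $f_{k\ell}(\bx)=x_k x_\ell$ together with the dual characterization of total variation, followed by the same telescoping of the outer-product term. The only cosmetic difference is that the paper passes from entry-wise bounds to the spectral norm via $\|\bM\|\le d\max_{k,\ell}|M_{k\ell}|$ rather than through the Frobenius norm, which yields the same factor of $d$.
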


\begin{proof}
Let $f_k(\bt) = t_k$ when $\bt = (t_1, \dots, t_d)$, and note that $|f_k(\bt)| \le 1$ for all $k$ and all $\bt \in B(0,1)$.
By the fact that \[
{\rm TV}(\lambda, \nu) = \sup\{\lambda(f) - \nu(f) : f: \bbR^d \to \bbR \text{ measurable with } |f| \le 1\},
\]
we have
\[|\lambda(f_k) - \nu(f_k)| \le {\rm TV}(\lambda, \nu), \quad \forall k = 1, \dots, d.\]
Therefore,
\[
\|\E(\lambda) - \E(\nu)\|^2 = \sum_{k=1}^d (\lambda(f_k) - \nu(f_k))^2 \le d \, {\rm TV}(\lambda, \nu)^2,
\]
which proves the first part.

Similarly, let $f_{k\ell}(\bt) = t_k t_\ell$.  Since $|f_{k\ell}(\bt)| \le 1$ for all $k, \ell$ and all $\bt \in B(0,1)$, we have
\[|\lambda(f_{k\ell}) - \nu(f_{k\ell})| \le {\rm TV}(\lambda, \nu), \quad \forall k, \ell = 1, \dots, d.\]
Since for any probability measure $\mu$ on $\bbR^d$,
\[\Cov(\mu) = \big(\mu(f_{k\ell}) - \mu(f_k)\mu(f_\ell) : k, \ell = 1, \dots, d\big),\]
we have
\beqn
\|\Cov(\lambda) - \Cov(\nu)\|
&\le& d \, \max_{k, \ell} \big( |\lambda(f_{k\ell}) - \nu(f_{k\ell})| + |\lambda(f_{k}) \lambda(f_{\ell}) - \nu(f_k)\nu(f_\ell)| \big) \\
&\le& d \max_{k, \ell} \big(|\lambda(f_{k\ell}) - \nu(f_{k\ell})| + |\lambda(f_{k})| |\lambda(f_{\ell}) - \nu(f_\ell)| + |\nu(f_{\ell})| |\lambda(f_{k}) - \nu(f_k)|\big) \\
&\le& 3 d \, {\rm TV}(\lambda, \nu),
\eeqn
using the fact that $|\lambda(f_k)| \le 1$ and $|\nu(f_k)| \le 1$ for all $k$.
\end{proof}

Next we compare the covariance matrix of the uniform distribution on a small piece of smooth surface with that of the uniform distribution on the projection of that piece onto a nearby tangent subspace.

\begin{lem} \label{lem:U-approx}
There is a constant $\Clref{U-approx}>0$ depending only on $d$ such that the following is true.
Take $S \in \cS_d(\kappa)$, $\rad < \frac1{\Clref{U-approx} \kappa}$ and $\bx \in \bbR^D$ such that $\dist(\bx, S) \le \rad$.
Let $\bs = P_S(\bx)$ and $T = T_S(\bs)$.
If $\bzeta$ and $\bxi$ are the means, and $\bM$ and $\bN$ are the covariance matrices of the uniform distributions on $S \cap B(\bx, \rad)$ and $T \cap B(\bx, \rad)$ respectively, then
\[\|\bzeta - \bxi\| \le \Clref{U-approx} \kappa \rad^{2},\qquad \|\bM - \bN\| \le \Clref{U-approx} \kappa \rad^{3}.\]
\end{lem}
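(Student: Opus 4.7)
The plan is to introduce an intermediate distribution $\mu := \lambda^{P_T}$, the pushforward of $\lambda := \lambda_{S \cap B(\bx,r)}$ under the orthogonal projection onto the affine tangent subspace $T$. This $\mu$ is close to $\lambda$ because the $T^\perp$-component of any point of $S \cap B(\bx,r)$ has size $O(\kappa r^2)$ by \eqref{S-approx1}, and close to $\nu := \lambda_{T \cap B(\bx,r)}$ because $P_T$ is nearly an isometry on a small cap of $S$ by \lemref{proj}. The mean/covariance of $\lambda$ are then compared to those of $\nu$ through $\mu$, using the estimates already developed in the paper.

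First I would show ${\rm TV}(\mu,\nu) = O(\kappa r)$. Setting $A := S \cap B(\bx,r) \subset S \cap B(\bs,2r)$, \lemref{proj} applied at radius $2r$ makes $P_T$ restricted to $A$ a bi-Lipschitz bijection onto $P_T(A)$ with constant $1 + O((\kappa r)^2)$, so \lemref{TV-map} gives ${\rm TV}(\mu, \lambda_{P_T(A)}) = O((\kappa r)^2)$. The remaining comparison comes from \lemref{2unif} fed with \lemref{TV}: inspecting the proof of \lemref{TV} one sees the symmetric-difference bound is in fact $O(\kappa\|\bx-\bs\| + \kappa^2 r^2)\cdot \vol(T \cap B(\bx,r))$, so dividing by $\vol(T \cap B(\bx,r))$ yields ${\rm TV}(\lambda_{P_T(A)}, \nu) = O(\kappa r)$. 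The triangle inequality combines these into ${\rm TV}(\mu,\nu) = O(\kappa r)$. Identifying $T$ with $\bbR^d$ via an isometry centered at $\bs$, both $\mu$ and $\nu$ become measures on $\bbR^d$ supported in $B(0,r)$; rescaling by $1/r$ and invoking \lemref{TV-cov} yields $\|\E(\mu) - \E(\nu)\| \le \sqrt{d}\, r\, {\rm TV}(\mu,\nu) = O(\kappa r^2)$ and $\|\Cov(\mu) - \Cov(\nu)\| \le 3d\, r^2\, {\rm TV}(\mu,\nu) = O(\kappa r^3)$. These estimates transport back to $\bbR^D$ since the ambient norms coincide with those in the isometric chart.

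To finish, I would bound $\|\bzeta - \E(\mu)\|$ and $\|\bM - \Cov(\mu)\|$ directly. Since $P_T$ is affine, $\E(\mu) = P_T(\bzeta)$, and for each $\bs' \in A$ we have $\|\bs' - \bs\| \le 2r$, so \eqref{S-approx1} gives $\|\bs' - P_T(\bs')\| \le 2\kappa r^2$; averaging yields $\|\bzeta - P_T(\bzeta)\| \le 2\kappa r^2$, and the triangle inequality with the previous paragraph gives $\|\bzeta - \bxi\| = O(\kappa r^2)$. For the covariance, write $\bs' - \bzeta = (\ba - \bar\ba) + (\bb - \bar\bb)$ with $\ba := P_T(\bs') - \bs$ and $\bb := \bs' - P_T(\bs')$; expanding the outer product, the $\ba\ba^\top$-term averages to $\Cov(\mu)$, each of the two cross terms has spectral norm at most $\|\ba - \bar\ba\|\,\|\bb - \bar\bb\| \le (2r)(4\kappa r^2) = O(\kappa r^3)$, and the $\bb\bb^\top$-term is at most $(4\kappa r^2)^2 = O(\kappa^2 r^4) = O(\kappa r^3)$ under the assumed range $\kappa r \le 1/C$. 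Hence $\|\bM - \Cov(\mu)\| = O(\kappa r^3)$, and a triangle inequality concludes $\|\bM - \bN\| = O(\kappa r^3)$.

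The main technical subtlety is extracting the explicit $\kappa$-dependence from \lemref{TV}: its displayed constant is said to depend only on $d$, but dimensional consistency forces a curvature factor, and the proof indeed produces $O(\kappa\|\bx - \bs\| + \kappa^2 r^2)$, which is precisely the $O(\kappa r)$ factor driving the final $\kappa r^2$ and $\kappa r^3$ rates. Without carefully tracking this factor, one would be left with only $\|\bzeta - \bxi\| = O(r^2)$ and $\|\bM - \bN\| = O(r^3)$ and lose the correct curvature scaling.
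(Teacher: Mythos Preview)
Your proposal is correct and follows essentially the same route as the paper: introduce the pushforward $\mu=\lambda_A^{P_T}$, compare $\lambda$ to $\mu$ via the pointwise bound $\|\bs'-P_T(\bs')\|\le O(\kappa r^2)$ from \eqref{S-approx1}, then compare $\mu$ to $\lambda_{P_T(A)}$ via \lemref{TV-map} and $\lambda_{P_T(A)}$ to $\nu$ via \lemref{2unif} and \lemref{TV}, finishing with \lemref{TV-cov} after rescaling. The only cosmetic difference is that the paper packages the $\lambda$-to-$\mu$ covariance comparison via the polarization identity $\Cov(X)-\Cov(Y)=\tfrac12\big(\Cov(X-Y,X+Y)+\Cov(X+Y,X-Y)\big)$ and Cauchy--Schwarz, whereas you expand the outer product of $(\ba-\bar\ba)+(\bb-\bar\bb)$ directly; your observation about the hidden $\kappa$-dependence in the constant of \lemref{TV} is spot on and matches what the paper's own proof of \lemref{U-approx} actually uses.
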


\begin{proof}
We focus on proving the bound on the covariances, and leave the bound on the means --- whose proof is both similar and simpler --- as an exercise to the reader.
Let $T = T_S(\bs)$, $B_\rad = B(\bx, \rad)$ and $g = P_{T}$ for short.
Let $A = S \cap B_\rad$ and $A' = T \cap B_\rad$.
Let $X \sim \lambda_A$ and define $Y = g(X)$ with distribution denoted $\lambda_A^g$.  We have
\[\Cov(X) - \Cov(Y) = \frac12 \big( \Cov(X-Y, X+Y) + \Cov(X+Y, X-Y)\big),\]
where $\Cov(U, V) = \E((U-\bmu_U) (V -\bmu_V)^T)$ is the cross-covariance of random vectors $U$ and $V$ with respective means $\bmu_U$ and $\bmu_V$.  Note that by Jensen's inequality, the fact $\|\bu \bv^T\| = \|\bu\| \|\bv\|$ for any pair of vectors $\bu, \bv$, and then the Cauchy-Schwarz inequality
\[
\|\Cov(U, V)\| \le \E(\|U-\bmu_U\| \cdot \|V -\bmu_V\|) \le \E(\|U-\bmu_U\|^2)^{1/2} \cdot \E(\|V-\bmu_V\|^2)^{1/2}.
\]
Hence, letting $\bmu_X = \E X$ and $\bmu_Y = \E Y$, we have
\begin{eqnarray}
\|\Cov(\lambda_A) - \Cov(\lambda_A^g)\|
&\le& \|\Cov(X-Y, X+Y)\| \notag \\
&\le& \E\big[\|X - Y - \bmu_X + \bmu_Y\|^2\big]^{1/2} \E\big[\|X+Y - \bmu_X -\bmu_Y\|^2\big]^{1/2} \notag \\
&\le& \E\big[\|X - Y\|^2\big]^{1/2}  \left(\E\big[\|X - \bs\|^2\big]^{1/2}  + \E\big[\|Y - \bs\|^2\big]^{1/2} \right) \label{CovXY} \\
&\le& \frac{\kappa}2 \rad^2 \big(\rad + \rad) = \kappa \rad^3, \notag
\end{eqnarray}
where the third inequality is due to the triangle inequality and fact that the mean minimizes the mean-squared error, and the third to the fact that $X, Y \in B_\rad$ and \eqref{S-approx1}.

Assume $r < 1/((\Clref{TV} \vee d) \kappa)$.
Let $\lambda_{g(A)}$ denote the uniform distribution on $g(A)$.
$\lambda_A^g$ and $\lambda_{g(A)}$ are both supported on $B_\rad$, so that applying \lemref{TV-cov} with proper scaling, we get
\[\|\Cov(\lambda_A^g) - \Cov(\lambda_{g(A)})\| \le 3d \rad^2\, {\rm TV}(\lambda_A^g, \lambda_{g(A)}).\]
We know that $g$ is 1-Lipschitz, and by \lemref{proj} --- which is applicable since $\Clref{TV} \ge 3$ --- $g^{-1}$ is well-defined and is $(1 + \kappa \rad)$-Lipschitz on $B_\rad$.
Hence, by \lemref{TV-map} and the fact that $\dim(A) = d$, we have
\[
{\rm TV}(\lambda_A^g, \lambda_{g(A)}) \le 8 ((1 + \kappa \rad)^d - 1) \le 16 d \kappa \rad,
\]
using the inequality $(1+x)^d \le 1 + 2dx$, valid for any $x \in [0,1/d]$ and any $d \ge 1$.

Noting that $\lambda_{A'}$ is also supported on $B_\rad$, applying \lemref{TV-cov} with proper scaling, we get
\[\|\Cov(\lambda_{g(A)}) - \Cov(\lambda_{A'})\| \le 3d \rad^2\, {\rm TV}(\lambda_{g(A)}, \lambda_{A'}),\]
with
\[
{\rm TV}(\lambda_{g(A)}, \lambda_{A'}) \le 4 \frac{\vol(A \symd A')}{\vol(A')} \le C \kappa \rad,
\]
by \lemref{2unif} and \lemref{TV}, where $C$ depends only on $d,\kappa$.

By the triangle inequality,
\beqn
\|\bM - \bN\| &=& \|\Cov(\lambda_{A}) - \Cov(\lambda_{A'})\| \\
&\le& \|\Cov(\lambda_{A}) - \Cov(\lambda_{A}^g)\| + \|\Cov(\lambda_{A}^g) - \Cov(\lambda_{g(A)})\| + \|\Cov(\lambda_{g(A)}) - \Cov(\lambda_{A'})\| \\
&\le& \kappa \rad^3 + 48 d^2 \kappa \rad^3 + C \rad^{3}.
\eeqn
From this, we conclude.
\end{proof}

Next is a lemma on the estimation of a covariance matrix.  The result is a simple consequence of the matrix Hoeffding inequality of \cite{tropp}.  Note that simply bounding the operator norm by the Frobenius norm, and then applying the classical Hoeffding inequality \citep{hoeffding} would yield a bound sufficient for our purposes, but this is a good opportunity to use a more recent and sophisticated result.
\begin{lem}
\label{lem:cov}
Let $\bC_m$ denote the empirical covariance matrix based on an i.i.d.~sample of size $m$ from a distribution on the unit ball of $\bbR^d$ with covariance $\bSigma$.  Then 
$$\pr{\|\bC_m - \bSigma\| > t} \leq 4 d \exp\left(-\frac{mt}{16} \min\big(\frac{t}{32}, \frac{m}d\big) \right).$$
\end{lem}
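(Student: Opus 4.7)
The plan is to split $\bC_m - \bSigma$ into a centered i.i.d.\ matrix sum and a mean-correction term, apply Tropp's matrix Hoeffding inequality to the former and a scalar concentration bound to the latter, then recombine via the triangle inequality. Let $X_1,\dots,X_m$ denote the sample, $\bmu := \E X_1$, and $\bar X := \tfrac{1}{m}\sum_{i=1}^m X_i$. Expanding the empirical covariance yields the decomposition
\[
\bC_m - \bSigma \;=\; \Bigl(\tfrac{1}{m}\sum_{i=1}^m X_iX_i^\top - \E[X_1X_1^\top]\Bigr) \;-\; \bigl(\bar X\bar X^\top - \bmu\bmu^\top\bigr),
\]
which I control piece by piece.

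For the first piece, the summands $W_i := X_iX_i^\top - \E[X_1X_1^\top]$ are independent, centered, self-adjoint $d\times d$ matrices. Since $\|X_i\|\le 1$, one has $\|W_i\|\le 2$ and $W_i^2 \preceq 4 I$ almost surely. Tropp's matrix Hoeffding inequality then yields a bound of the form $\pr{\|\tfrac{1}{m}\sum_i W_i\| > s} \le 2d\exp(-c_1\, m s^2)$ for an explicit absolute constant $c_1$.

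For the second piece, the identity $\bar X\bar X^\top - \bmu\bmu^\top = (\bar X-\bmu)\bar X^\top + \bmu(\bar X-\bmu)^\top$, combined with $\|\bar X\|,\|\bmu\|\le 1$, gives $\|\bar X\bar X^\top - \bmu\bmu^\top\|\le 2\|\bar X-\bmu\|$. Scalar Hoeffding applied to each coordinate $(X_i)_k \in [-1,1]$, together with a union bound over the $d$ coordinates and the inequality $\|v\|\le\sqrt{d}\,\|v\|_\infty$, then gives $\pr{\|\bar X-\bmu\|>u} \le 2d\exp(-c_2\, m u^2/d)$.

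Finally, setting $s=t/2$ and $u=t/4$, the triangle inequality and a union bound over the two failure events produce
\[
\pr{\|\bC_m - \bSigma\| > t} \;\le\; 2d\exp\!\bigl(-c_1 m t^2/4\bigr) \;+\; 2d\exp\!\bigl(-c_2 m t^2/(16d)\bigr),
\]
which is bounded above by $4d$ times the larger summand; adjusting Tropp's constants so that the two exponential rates fit together under the form $-\tfrac{mt}{16}\min(\tfrac{t}{32},\tfrac{m}{d})$ yields the stated inequality. The only nontrivial step is the bookkeeping needed to match constants and to package the combined rate into the asserted $\min$ form; I do not anticipate any substantive mathematical obstacle.
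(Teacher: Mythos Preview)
Your overall plan---matrix Hoeffding on the second-moment sum plus coordinate-wise scalar Hoeffding on the mean correction---matches the paper's. But there is a genuine gap in the mean-correction step that is not mere bookkeeping. Bounding $\|\bar X\bar X^\top - \bmu\bmu^\top\|\le 2\|\bar X-\bmu\|$ and taking $u=t/4$ gives your second exponential a rate of order $mt^2/d$. The stated lemma, however, needs this piece to contribute rate $m^2t/(16d)$ (the $m/d$ branch of the $\min$). These differ by a factor of $m/t$ and cannot be reconciled by adjusting constants: for instance, with fixed $t<1$ and $m>d/32$, the target exponent is $mt^2/512$ while yours is at most $c_2\,mt^2/d$, which is strictly smaller once $d>512c_2$. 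So the final ``packaging'' step fails.

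The missing idea, used in the paper, is to recenter first. Since the empirical covariance is translation-invariant, one may assume $\bmu=0$ without loss of generality (the support then sits in $B(0,2)$). The mean-correction term collapses to $\bar X\bar X^\top$, whose norm is $\|\bar X\|^2$---a \emph{quadratic} rather than linear functional of $\|\bar X\|$. In the paper's decomposition this term carries an additional factor of $1/m$, so the failure event becomes $\{\|\bar X\|>\sqrt{mt/2}\}$, and Hoeffding at this much larger threshold yields the $m^2t/(16d)$ rate. Without recentering, the cross term $\bmu(\bar X-\bmu)^\top$ forces the linear bound $2\|\bar X-\bmu\|$ and the weaker $mt^2/d$ rate, so your approach as written cannot reach the asserted inequality.
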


\begin{proof}
Without loss of generality, we assume that the distribution has zero mean and is now supported on $B(0,2)$.  Let $\bx_1, \dots, \bx_m$ denote the sample, with $\bx_i = (x_{i,1}, \dots, x_{i,d})$.  We have
\[
\bC_m = \bC^\star_m - \frac1m \bar{\bx} \bar{\bx}^T,
\]
where
\[
\bC^\star_m := \frac1m \sum_{i=1}^m \bx_i \bx_i^T, \quad \bar{\bx} := \frac1m \sum_{i=1}^m \bx_i.
\]
Note that
\[
\|\bC_m - \bSigma\| \le \|\bC^\star_m - \bSigma\| + \frac1m \|\bar{\bx}\|^2.
\]
Applying the union bound and then Hoeffding's inequality to each coordinate --- which is in $[-2, 2]$ --- we get
\[
\P(\|\bar{\bx}\| > t) \le \sum_{j=1}^d \P(|\bar{x}_j| > t/\sqrt{d}) \le 2 d \exp\left(-\frac{m t^2}{8 d}\right).
\]
Noting that $\frac1m (\bx_i \bx_i^T - \bSigma), i=1,\dots,m,$ are independent, zero-mean, self-adjoint matrices with spectral norm bounded by $4/m$, we may apply the matrix Hoeffding inequality \citep[Th.~1.3]{tropp}, we get
\[
\pr{\|\bC^\star_m - \bSigma\| > t} \le 2 d \exp\left(-\frac{t^2}{8 \sigma^2}\right), \quad \sigma^2 := m (4/m)^2 = 16/m.
\]

Applying the union bound and using the previous inequalities, we arrive at
\beqn
\pr{\|\bC_m - \bSigma\| > t}
&\le& \pr{\|\bC^\star_m - \bSigma\| > t/2} + \pr{\|\bar{\bx}\| > \sqrt{m t/2}} \\
&\le& 2 d \exp\left(-\frac{m t^2}{512}\right) + 2 d \exp\left(-\frac{m^2 t}{16 d}\right) \\
&\le& 4 d \exp\left(-\frac{mt}{16} \min\big(\frac{t}{32}, \frac{m}d\big) \right).
\eeqn
\end{proof}

\subsubsection{Projections}
We relate below the difference of two orthogonal projections with the largest principal angle between the corresponding subspaces.
\begin{lem}
\label{lem:P-diff}
For two affine non-null subspaces $T,T'$,
\[\|P_{T} - P_{T'}\| = \begin{cases}
\sin \thetamax(T, T'), & \text{if } \dim(T) = \dim(T'), \\
1, & \text{otherwise}.
\end{cases}
\]
\end{lem}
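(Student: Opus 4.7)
The plan is to use the decomposition
\[
P_T - P_{T'} = P_T(I - P_{T'}) - (I - P_T)P_{T'},
\]
and exploit that the two summands have ranges in $T$ and $T^\perp$, respectively, hence are everywhere orthogonal. By Pythagoras,
\[
\|(P_T - P_{T'})v\|^2 = \|P_T(I - P_{T'})v\|^2 + \|(I - P_T)P_{T'}v\|^2.
\]
Writing $v = v_1 + v_2$ with $v_1 = P_{T'}v$ and $v_2 = (I - P_{T'})v$ (so $v_1 \perp v_2$ and $\|v\|^2 = \|v_1\|^2 + \|v_2\|^2$), this becomes
\[
\|(P_T - P_{T'})v\|^2 = \|P_T v_2\|^2 + \|(I - P_T)v_1\|^2.
\]
Bounding each term trivially by $\|v_2\|^2$ and $\|v_1\|^2$ gives the universal estimate $\|P_T - P_{T'}\| \le 1$.

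For the unequal-dimension case, assume without loss of generality $\dim(T) > \dim(T')$. By dimension counting, $\dim(T) + \dim((T')^\perp) > D$, so $T \cap (T')^\perp$ contains a unit vector $v$. Then $P_T v = v$ and $P_{T'} v = 0$, so $\|(P_T - P_{T'})v\| = 1$, matching the upper bound.

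For the equal-dimension case, I sharpen the two bounds on the summands. For $v_1 \in T'$, the definition of the largest principal angle gives $\|(I-P_T)v_1\| = \|v_1\|\sin\angle(v_1,T) \le \|v_1\|\sin\thetamax(T',T)$. For $v_2 \in (T')^\perp$, the identity $\|P_T v_2\|^2 = \|v_2\|^2 - \|(I-P_T)v_2\|^2 = \|v_2\|^2\sin^2\angle(v_2,T^\perp)$ gives $\|P_T v_2\| \le \|v_2\|\sin\thetamax((T')^\perp,T^\perp)$. When $\dim T = \dim T'$, the nonzero principal angles between $T$ and $T'$ coincide with those between $T^\perp$ and $(T')^\perp$ (this is the symmetry one gets, for instance, from the CS decomposition), so both suprema equal $s := \sin\thetamax(T,T')$. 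Plugging in yields $\|(P_T - P_{T'})v\|^2 \le s^2\|v\|^2$. Equality is attained by taking $v = v_1 \in T'$ at which the bound $\|(I-P_T)v_1\| = s\|v_1\|$ is saturated, since then $v_2 = 0$ and $\|(P_T - P_{T'})v\| = s\|v\|$.

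The only non-mechanical step is the symmetry $\sin\thetamax((T')^\perp,T^\perp) = \sin\thetamax(T,T')$ needed in the equal-dimension case. If one wants to avoid invoking the CS decomposition, this can be proved directly by exhibiting, for a pair of principal vectors $(u, u') \in T \times T'$ realizing the angle $\theta$, the dual pair of principal vectors between $T^\perp$ and $(T')^\perp$ obtained by projecting $u - (\cos\theta) u'$ appropriately; once matched up they produce identical $\sin$-values. All other steps are bookkeeping with Pythagoras and the definition of $\thetamax$.
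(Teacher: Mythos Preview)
Your proof is correct and takes a genuinely different route from the paper's. For the equal-dimension case, the paper simply cites \cite[Th.~I.5.5]{MR1061154} (Stewart--Sun), which states directly that the singular values of $P_T - P_{T'}$ are the sines of the principal angles, so the operator norm is $\sin\thetamax(T,T')$ in one line. You instead give a self-contained argument via the decomposition $P_T - P_{T'} = P_T(I-P_{T'}) - (I-P_T)P_{T'}$ and Pythagoras, at the cost of needing the symmetry $\thetamax(T^\perp,(T')^\perp) = \thetamax(T,T')$ under orthogonal complements. That symmetry is standard (CS decomposition, or the direct dual-vector construction you sketch), so the trade-off is an external citation versus an elementary but longer computation. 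For the unequal-dimension case the two arguments are essentially the same: the paper produces a unit vector $\bu \in T$ orthogonal to $P_T(T')$ and checks $P_{T'}\bu = 0$, while you take $\bu \in T \cap (T')^\perp$ directly by dimension counting --- your version is slightly more direct.
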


\begin{proof}
For two affine subspaces $T, T' \subset \bbR^D$ of same dimension, let
$
\frac\pi2 \ge \theta_1 \ge \cdots \ge \theta_D \ge 0,
$
denote the principal angles between them.
By \citep[Th.~I.5.5]{MR1061154}, the singular values of $P_{T} - P_{T'}$ are $\{\sin \theta_j : j = 1, \dots, q\}$, so that $\|P_{T} - P_{T'}\| = \max_j \sin \theta_j = \sin \theta_1 = \sin \thetamax(T, T')$.
Suppose now that $T$ and $T'$ are of different dimension, say $\dim(T) > \dim(T')$.
We have $\|P_{T} - P_{T'}\| \le \|P_{T}\| \vee \|P_{T'}\| = 1$, since $P_T$ and $P_{T'}$ are orthogonal projections and therefore positive semidefinite with operator norm equal to 1.
Let $L = P_T(T')$.
Since $\dim(L) \le \dim(T') < \dim(T)$, there is $\bu \in T \cap L^\perp$ with $\bu \ne 0$.
Then $\bv^\top \bu = P_T(\bv)^\top \bu = 0$ for all $\bv \in T'$, implying that $P_{T'}(\bu) = 0$ and consequently $(P_{T} - P_{T'}) \bu = \bu$, so that $\|P_{T} - P_{T'}\| \ge 1$.
\end{proof}

The lemma below is a perturbation result for eigenspaces and widely known as the $\sin \Theta$ Theorem of \cite{MR0264450}.  See also \citep[Th.~7]{1288832} or \citep[Th.~V.3.6]{MR1061154}.
\begin{lem}[Davis and Kahan] \label{lem:davis}
Let $\bM$ be positive semi-definite with eigenvalues $\beta_1 \geq \beta_2 \geq \cdots$.  Suppose that $\Delta_d := \beta_d -\beta_{d+1} > 0$.  Then for any other positive semi-definite matrix $\bN$, 
\[
\|P^{(d)}_\bN -P^{(d)}_\bM\| \leq \frac{\sqrt{2} \|\bN -\bM\|}{\Delta_d},
\]
where $P^{(d)}_\bM$ and $P^{(d)}_\bM$ denote the orthogonal projections onto the top $d$ eigenvectors of $\bM$ and $\bN$, respectively.
\end{lem}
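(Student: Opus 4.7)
The plan is to apply the classical Davis-Kahan $\sin\Theta$ theorem, whose core is a Sylvester equation coming from the commutation of each matrix with its own spectral projection.

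Set $P := P^{(d)}_{\bM}$, $Q := P^{(d)}_{\bN}$, and $\bE := \bN - \bM$. First reduce to the nontrivial regime: if $\sqrt{2}\|\bE\|/\Delta_d \ge 1$ the inequality follows from $\|P - Q\| \le 1$ (orthogonal projections), so I assume $\|\bE\| < \Delta_d/\sqrt{2}$. Let $Y := (I-P)Q$. Since $P$ commutes with $\bM$ and $Q$ commutes with $\bN$, a short calculation gives
\[
\bM Y - Y\bN \;=\; (I-P)\bM Q - (I-P)\bN Q \;=\; -(I-P)\,\bE\, Q.
\]
Viewing $Y$ as a linear map from the range of $Q$ into the range of $I-P$, the operator $\bM$ restricted to the range of $I-P$ is symmetric with spectrum in $[0,\beta_{d+1}]$, whereas $\bN$ restricted to the range of $Q$ is symmetric with smallest eigenvalue $\gamma_d \ge \beta_d - \|\bE\|$ by Weyl's inequality. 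Hence the two spectra are separated by at least $\gamma_d - \beta_{d+1} \ge \Delta_d - \|\bE\| > 0$.

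The standard operator-norm estimate for symmetric Sylvester equations then yields
\[
\|Y\| \;\le\; \frac{\|(I-P)\bE Q\|}{\Delta_d - \|\bE\|} \;\le\; \frac{\|\bE\|}{\Delta_d - \|\bE\|},
\]
and because $P$ and $Q$ have the same rank $d$, Lemma~\ref{lem:P-diff} identifies $\|P - Q\|$ with $\sin\theta_{\max}$, which in turn equals $\|(I-P)Q\| = \|Y\|$. This already delivers the conclusion up to a universal factor.

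The main subtle point is obtaining the precise constant $\sqrt{2}$. The direct Sylvester bound combined with the trivial estimate $\|P - Q\| \le 1$ immediately gives $\|P - Q\| \le 2\|\bE\|/\Delta_d$; the sharper $\sqrt{2}$ requires more care in the intermediate regime $(1 - 1/\sqrt{2})\Delta_d < \|\bE\| < \Delta_d/\sqrt{2}$, where neither the plain Sylvester estimate nor the trivial bound is tight enough. I would close this gap by invoking the companion $\sin 2\Theta$ refinement of Davis-Kahan together with the trigonometric identity $\sin\theta = \sin(2\theta)/(2\cos\theta)$ valid for $\theta \le \pi/4$. This final bookkeeping is the only delicate part of an otherwise routine Sylvester computation.
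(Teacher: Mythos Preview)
The paper does not prove this lemma at all: it is stated as a citation of the classical $\sin\Theta$ theorem of Davis and Kahan, with pointers to \cite{MR0264450}, \cite[Th.~7]{1288832}, and \cite[Th.~V.3.6]{MR1061154}. So there is no ``paper's own proof'' to compare against; you have supplied an argument where the authors simply invoked the literature.

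Your Sylvester-equation route is the standard one and the core of it is correct: the identity $\bM Y - Y\bN = -(I-P)\bE Q$ for $Y=(I-P)Q$, the spectral separation via Weyl, and the identification $\|P-Q\|=\|(I-P)Q\|=\sin\theta_{\max}$ for equal-rank projections (your appeal to \lemref{P-diff}) are all fine. This already gives $\|P-Q\|\le \|\bE\|/(\Delta_d-\|\bE\|)$ and hence the bound with constant $2$.

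The only soft spot is the last paragraph, where you wave at the $\sin 2\Theta$ refinement to upgrade the constant from $2$ to $\sqrt{2}$. As written this is a sketch rather than a proof: you would need to state precisely which form of the $\sin 2\Theta$ bound you are invoking (with the one-sided gap $\Delta_d$ of $\bM$ alone), and check that the case split between $\theta_{\max}\le\pi/4$ and $\theta_{\max}>\pi/4$ actually closes, since in the latter case the identity $\sin\theta=\sin(2\theta)/(2\cos\theta)$ is not directly usable. This gap is real but minor, and in any case the paper only uses the lemma qualitatively (any fixed constant would do throughout \secref{proofs}), so your argument is more than adequate for the purpose at hand.
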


\subsubsection{Intersections}

We start with an elementary result on points near the intersection of two affine subspaces.

\begin{lem} \label{lem:angle}
Take any two linear subspaces $T_1, T_2 \subset \bbR^D$.  For any point $\bt_1 \in T_1 \setminus T_2$, we have
\[
\dist(\bt_1, T_2) \ge \dist(\bt_1, T_1 \cap T_2) \, \sin \thetamin(T_1, T_2).
\]
\end{lem}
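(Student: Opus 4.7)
The plan is to reduce to a vector-versus-subspace angle computation by separating the component of $\bt_1$ lying in the intersection from the component orthogonal to it. Set $L = T_1 \cap T_2$ and decompose $T_1 = L \oplus L^\perp$ orthogonally (where $L^\perp$ is meant inside $T_1$). Write $\bt_1 = \bl + \bu$ with $\bl \in L$ and $\bu \in T_1 \cap L^\perp$. Since $\bt_1 \notin T_2$, we have $\bu \neq 0$, and since $\bu \perp L$,
\[
\dist(\bt_1, L) = \|\bu\|.
\]

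Next, I would compute $\dist(\bt_1, T_2)$ by noting that $\bl \in L \subset T_2$, hence $P_{T_2}(\bt_1) = \bl + P_{T_2}(\bu)$. Therefore
\[
\dist(\bt_1, T_2) = \|\bu - P_{T_2}(\bu)\| = \dist(\bu, T_2) = \|\bu\| \sin \angle(\bu, T_2),
\]
using the identity \eqref{vT}. So the inequality to prove reduces to
\[
\sin \angle(\bu, T_2) \ge \sin \thetamin(T_1, T_2) \quad \text{for every nonzero } \bu \in T_1 \cap L^\perp.
\]

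The last step, which I expect to be the main obstacle (though mild), is a direct consequence of the definition of principal angles. The singular values of the linear map $P_{T_2}\colon T_1 \to T_2$ are $\cos\theta_1 \le \cdots \le \cos\theta_p$, where $\theta_1 \le \cdots \le \theta_p$ are the principal angles between $T_1$ and $T_2$; the zero principal angles correspond exactly to right-singular vectors spanning $L$, while the right-singular vectors spanning $T_1 \cap L^\perp$ correspond to the nonzero principal angles. Consequently, for any $\bu \in T_1 \cap L^\perp$,
\[
\|P_{T_2}(\bu)\| \le \|\bu\| \cos \thetamin(T_1, T_2),
\]
which gives $\sin \angle(\bu,T_2) \ge \sin\thetamin(T_1, T_2)$ via $\sin^2 \angle(\bu,T_2) = 1 - \|P_{T_2}(\bu)\|^2/\|\bu\|^2$. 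Combining with the earlier displays yields the claim.
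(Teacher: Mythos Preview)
Your proof is correct and follows essentially the same strategy as the paper: split off the component of $\bt_1$ in $L = T_1 \cap T_2$ and reduce the inequality to the statement that any nonzero $\bu \in T_1 \cap L^\perp$ makes angle at least $\thetamin(T_1,T_2)$ with $T_2$. The only difference is that the paper obtains this last bound by citing \citep[Th.~10.1]{MR0094880}, whereas you prove it directly from the SVD description of principal angles; your version is therefore slightly more self-contained.
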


\begin{proof}
We may reduce the problem to the case where $T_1 \cap T_2 = \{0\}$.  Indeed, let $\tilde{T}_1 = T_1 \cap T_2^\perp$, $\tilde{T}_2 = T_1^\perp \cap T_2$ and $\tilde{\bt}_1 = \bt_1 - P_{T_1 \cap T_2}(\bt_1)$.  Then
\[
\|\bt_1 - P_{T_2}(\bt_1)\| = \|\tilde{\bt}_1 - P_{\tilde{T}_2}(\tilde{\bt}_1)\|, \quad \|\bt_1 - P_{T_1 \cap T_2}(\bt_1)\| = \|\tilde{\bt}_1\|, \quad \sin \thetamin(T_1, T_2) = \sin \thetamin(\tilde{T}_1, \tilde{T}_2).
\]
So assume that $T_1 \cap T_2 = \{0\}$.  By \citep[Th.~10.1]{MR0094880}, the angle formed by $\bt_1$ and $P_{T_2}(\bt_1)$ is at least as large as the smallest principal angle between $T_1$ and $T_2$, which is $\thetamin(T_1, T_2)$ since $T_1 \cap T_2 = \{0\}$.     From this the result follows immediately.
\end{proof}

The following result says that a point cannot be close to two compact and smooth surfaces intersecting at a positive angle without being close to their intersection.  Note that the constant there cannot be solely characterized by $\kappa$, as it also depends on the separation between the surfaces away from their intersection.
\begin{lem} \label{lem:sep}
Suppose $S_1, S_2 \in \cS_d(\kappa)$ intersect at a strictly positive angle and that $\reach(S_1 \cap S_2) \ge 1/\kappa$.  Then there is a constant $\Clref{sep}$ such that 
\beq \label{sep}
\dist(\bx, S_1 \cap S_2) \le \Clref{sep} \max\big\{\dist(\bx, S_1), \dist(\bx, S_2) \big\}, \quad \forall \bx \in \R^D.
\eeq
\end{lem}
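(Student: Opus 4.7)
The plan is to combine a global compactness step (for points $\bx$ that are not unusually close to the surfaces) with a local linearization near the intersection, glued together by a uniform-continuity argument. Write $d_k = \dist(\bx, S_k)$ for $k = 1, 2$ and $d_0 = \dist(\bx, S_1 \cap S_2)$; the goal is $d_0 \le \Clref{sep} \max(d_1, d_2)$. For the global step, pick any fixed $\bp_0 \in S_1 \cap S_2$: since $S_1 \cap S_2 \subset S_1$, one has $d_0 \le \|\bx - \bp_0\| \le d_1 + \diam(S_1)$, so whenever $\max(d_1, d_2) \ge \delta_0$ for some threshold $\delta_0 > 0$ to be chosen, the bound holds with constant $1 + \diam(S_1)/\delta_0$. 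Thus we may assume $d_1, d_2 < \delta_0$ with $\delta_0$ small.

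Let $\bs_k \in S_k$ realize $\dist(\bx, S_k) = d_k$, so $\|\bs_1 - \bs_2\| \le d_1 + d_2 < 2 \delta_0$. The continuous map $(\by_1, \by_2) \mapsto \|\by_1 - \by_2\|$ on the compact set $S_1 \times S_2$ vanishes exactly on the diagonal over $S_1 \cap S_2$, so by uniform continuity, for any $\rho_0 > 0$ we may pick $\delta_0$ small enough that $\|\bs_1 - \bs_2\| < 2\delta_0$ forces $\dist(\bs_1, S_1 \cap S_2) < \rho_0$. Choose $\rho_0$ small compared to $1/\kappa$, and let $\bs^\star \in S_1 \cap S_2$ be the point nearest $\bs_1$, with $\rho_1 := \|\bs_1 - \bs^\star\| < \rho_0$. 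Writing $T_k = T_{S_k}(\bs^\star)$ and $M = T_{S_1 \cap S_2}(\bs^\star)$, under transversality ($\thetamin(T_1, T_2) \ge \theta_* > 0$) together with $\reach(S_1 \cap S_2) \ge 1/\kappa$, the constant-rank theorem parametrizes $S_1 \cap S_2$ near $\bs^\star$ as a smooth submanifold with $M = T_1 \cap T_2$. Now let $\bt_1 = P_{T_1}(\bs_1)$. By \lemref{S-approx}, $\|\bs_1 - \bt_1\| \le (\kappa/2) \rho_1^2$, and applying \lemref{S-approx} to $S_1 \cap S_2$ gives $\dist(\bt_1, M) \ge \rho_1 - O(\rho_1^2)$. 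Applying \lemref{angle} inside $T_1$ yields $\dist(\bt_1, T_2) \ge \dist(\bt_1, T_1 \cap T_2) \sin \theta_* = \dist(\bt_1, M) \sin \theta_*$. On the other hand, using that $T_2$ approximates $S_2$ to quadratic order near $\bs^\star$ (another appeal to \lemref{S-approx}), $\dist(\bt_1, T_2) \le \dist(\bs_1, S_2) + O(\rho_1^2) \le \|\bs_1 - \bs_2\| + O(\rho_1^2) \le (d_1 + d_2) + O(\rho_1^2)$. Combining and choosing $\rho_0$ small enough to absorb the quadratic errors gives $\rho_1 \le (4/\sin \theta_*) \max(d_1, d_2)$, and so $d_0 \le d_1 + \rho_1 \le (1 + 4/\sin \theta_*) \max(d_1, d_2)$, as required.

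The main obstacle is the identification $M = T_1 \cap T_2$: the inclusion $M \subseteq T_1 \cap T_2$ is essentially trivial, since any smooth curve in $S_1 \cap S_2$ lies in both $S_1$ and $S_2$ and its velocity at $\bs^\star$ therefore lies in $T_1 \cap T_2$, but the reverse inclusion requires transversality and an implicit function theorem argument, with the hypothesis $\reach(S_1 \cap S_2) \ge 1/\kappa$ serving to guarantee that a well-defined tangent space exists at every point of the intersection. A secondary nuisance is the bookkeeping of $\Clref{sep}$: it depends on $\kappa$, $\theta_*$, the diameters of the $S_k$, and the modulus of uniform continuity of $\|\by_1 - \by_2\|$ on $S_1 \times S_2$, the last item encoding the global separation of the two surfaces away from their intersection and therefore, as the authors warn, not expressible purely in terms of $\kappa$ and $\theta_*$.
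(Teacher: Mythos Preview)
Your argument is correct and follows essentially the same geometric route as the paper: reduce via compactness to points near the intersection, linearize at a nearby point $\bs^\star \in S_1 \cap S_2$, and apply \lemref{angle} together with the quadratic tangent-space approximations from \lemref{S-approx}. The paper packages this as a proof by contradiction (assuming a sequence $(\bx_n)$ with ratio $\to \infty$ and deriving a contradiction from the angle bound), whereas you give the same estimate directly; your version is cleaner and makes the dependence of $\Clref{sep}$ on the global separation modulus explicit. One point worth noting: the identification $T_{S_1 \cap S_2}(\bs^\star) = T_1 \cap T_2$, which you correctly flag as the main obstacle, is also used without comment in the paper's proof (``noting that $T_{S_1 \cap S_2}(\bs_n^\ddag) = T_1 \cap T_2$''), so your proof is no less complete than theirs on this count.
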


\begin{proof}
Assume the result is not true, so there is a sequence $(\bx_n) \subset \R^D$ such that $\dist(\bx_n, S_1 \cap S_2) > n \max_k \dist(\bx_n, S_k)$.  Because the surfaces are bounded, we may assume WLOG that the sequence is bounded.  Then $\dist(\bx_n, S_1 \cap S_2)$ is bounded, which implies $\max_k \dist(\bx_n, S_k) = O(1/n)$.  This also forces
$\dist(\bx_n, S_1 \cap S_2) \to 0$.  Indeed, otherwise there is a constant $C > 0$ and a subsequence $(\bx_{n'})$ such that $\dist(\bx_{n'}, S_1 \cap S_2) \ge C$.  Since $(\bx_{n'})$ is bounded, there is a subsequence $(\bx_{n''})$ that converges, and by the fact that $\max_k \dist(\bx_{n''}, S_k) = o(1)$, and by compactness of $S_k$, the limit is necessarily in $S_1 \cap S_2$, which is a contradiction.  So we have $\dist(\bx_n, S_1 \cap S_2) = o(1)$, implying $\max_k \dist(\bx_n, S_k) = o(1/n)$.

Assume $n$ is large enough that $\dist(\bx_n, S_1 \cap S_2) < 1/\kappa$ and let $\bs_n^k$ be the projection of $\bx_n$ onto $S_k$, and $\bs_n^\ddag$ the projection of $\bx_n$ onto $S_1 \cap S_2$.  Let $T_k = T_{S_k}(\bs_n^\ddag)$ and note that $\thetamin(T_1, T_2) \ge \theta$, where $\theta > 0$ is the minimum intersection angle between $S_1$ and $S_2$ defined in \eqref{theta}.  Let $\bt_n^k$ be the projection of $\bs_n^k$ onto $T_k$.  Assume WLOG that $\|\bt_n^1 - \bs_n^1\| \ge \|\bt_n^2 - \bs_n^2\|$.  Let $\bt_n$ denote the projection of $\bt_n^1$ onto $T_1 \cap T_2$, and then let $\bs_n = P_{S_1 \cap S_2}(\bt_n)$.  

By assumption, we have
\beq \label{sep1}
n \max_k \|\bx_n - \bs_n^k\| \le \|\bx_n - \bs_n^\ddag\| = o(1).
\eeq
We start with the RHS:
\beq \label{sep1-1}
\|\bx_n - \bs_n^\ddag\|
= \min_{\bs \in S_1 \cap S_2} \|\bx_n - \bs\|
\le \|\bx_n - \bs_n\|,
\eeq
and first show that $\|\bx_n - \bs_n\| = o(1)$ too.
We use the triangle inequality multiple times in what follows.  We have
\beq \label{sep2}
\|\bx_n - \bs_n\| \le \|\bx_n - \bs_n^1\| + \|\bs_n^1 - \bt_n^1\| + \|\bt_n^1 - \bt_n\| + \|\bt_n - \bs_n\|.
\eeq
From \eqref{sep1}, $ \|\bx_n - \bs_n^1\| = o(1)$ and $ \|\bx_n - \bs_n^\ddag\| = o(1)$, and so that by \eqref{S-approx1},
\beq \label{sep3}
\|\bs_n^1 -\bt_n^1\| \le \kappa \|\bs_n^1 - \bs_n^\ddag\|^2 \le 2 \kappa ( \|\bs_n^1 - \bx_n\|^2 + \|\bx_n - \bs_n^\ddag\|^2) = o(1).
\eeq
We also have
\beq \label{sep3-1}
\|\bt_n^1 - \bt_n\| = \min_{\bt \in T_1 \cap T_2} \|\bt_n^1 - \bt\| \le \|\bt_n^1 - \bs_n^\ddag\| \le \|\bt_n^1 -\bs_n^1\| + \|\bs_n^1 -\bx_n\| + \|\bx_n -\bs_n^\ddag\| = o(1),
\eeq
where the first inequality comes from $\bs_n^\ddag \in T_1 \cap T_2$.
Finally,
\[\|\bt_n - \bs_n\| = \min_{\bs \in S_1 \cap S_2} \|\bt_n - \bs\| \le \|\bt_n - \bs_n^\ddag\| \le \|\bt_n - \bt_n^1\| + \|\bt_n^1 -\bs_n^\ddag\| = o(1),\]
where the first inequality comes from $\bs_n^\ddag \in S_1 \cap S_2$.

We now proceed.  The last upper bound is rather crude.  Indeed, we use  \eqref{S-approx3} for $S = S_1 \cap S_2$ and $\bs = \bs_n^\ddag$, noting that $T_{S_1 \cap S_2}(\bs_n^\ddag) = T_1 \cap T_2$ and $\|\bt_n - \bs_n^\ddag\|  = o(1)$, and get
\[\|\bt_n - \bs_n\| \le \kappa \|\bt_n - \bs_n^\ddag\|^2 \le \kappa (\|\bt_n - \bs_n\| + \|\bs_n - \bx_n\| + \|\bx_n - \bs_n^\ddag\|)^2.
\]
We have $\|\bx_n - \bs_n^\ddag\| = \|\bx_n - P_{S_1 \cap S_2}(\bx_n)\| \le \|\bx_n - \bs_n\|$ because $\bs_n \in T_1 \cap T_2$.
This leads to
\beq \label{sep4}
\|\bt_n - \bs_n\| \le \kappa (\|\bt_n - \bs_n\| + 2\|\bs_n - \bx_n\|)^2 \le 4 \kappa \|\bx_n - \bs_n\|^2,
\eeq
eventually, since $\|\bt_n - \bs_n\| = o(1)$.

Combining \eqref{sep2}, \eqref{sep3} and \eqref{sep4}, we get
\[\|\bx_n - \bs_n\| \le \|\bx_n - \bs_n^1\| + O(\|\bx_n - \bs_n^1\|^2 + \|\bx_n - \bs_n\|^2) + \|\bt_n^1 - \bt_n\| + O(\|\bx_n - \bs_n\|^2),
\]
which leads to
\beq \label{sep5}
\|\bx_n - \bs_n\| \le 2 \|\bx_n - \bs_n^1\| + 2 \|\bt_n^1 - \bt_n\|,
\eeq
when $n$ is large enough.
Using this bound in \eqref{sep1} combined with \eqref{sep1-1}, we get
\[\|\bt_n^1 - \bt_n\| \ge \frac{n-2}2 \max_k \|\bx_n - \bs_n^k\|.\]
We then have
\beqn
\max_k \|\bx_n - \bs_n^k\| &\ge& \frac12 \|\bs_n^1 - \bs_n^2\| \\
&\ge& \frac12 (\|\bt_n^1 - \bt_n^2\| - \|\bs_n^1 - \bt_n^1\| - \|\bs_n^2 - \bt_n^2\|) \\
&\ge& \frac12 \dist(\bt_n^1, T_2) - \|\bs_n^1 - \bt_n^1\|,
\eeqn
with
\[\|\bs_n^1 - \bt_n^1\| = O(\|\bx_n - \bs_n^1\|^2 + \|\bx_n - \bs_n^\ddag\|^2) = O(\|\bx_n - \bs_n\|^2) = O(\|\bt_n^1 - \bt_n\|^2),\]
due (in the same order) to \eqref{sep3}, \eqref{sep1}-\eqref{sep1-1}, and \eqref{sep5}.
Recalling that $\|\bt_n^1 - \bt_n\| = \dist(\bt_n^1, T_1 \cap T_2)$, we conclude that
\[
\dist(\bt_n^1, T_2) = O(1/n) \dist(\bt_n^1, T_1 \cap T_2) + O(1) \dist(\bt_n^1, T_1 \cap T_2)^2.
\]
However, by \lemref{angle}, $\dist(\bt_n^1, T_2) \ge (\sin \theta) \dist(\bt_n^1, T_1 \cap T_2)$, so that dividing by $\dist(\bt_n^1, T_2)$ above leads to $1 = O(1/n) + O(1) \dist(\bt_n^1, T_2)$, which is in contradiction with the fact that $\dist(\bt_n^1, T_2) \le \|\bt_n^1 - \bt_n\| = o(1)$, established in \eqref{sep3-1}.
\end{proof}

\subsubsection{Covariances near an intersection}

We look at covariance matrices near an intersection.
We start with a continuity result.

\begin{lem} \label{lem:cov-cont}
Let $T_1$ and $T_2$ be two linear subspaces of same dimension $d$.
For $\bx \in T_1$, denote by $\bSigma(\bx)$ the covariance matrix of the uniform distribution over $B(\bx, 1) \cap (T_1 \cup T_2)$.
Then, for all $\bx, \by \in T_1$,
\[
\|\bSigma(\bx) - \bSigma(\by)\| \le
\begin{cases}
5 d \, \|\bx - \by\|, & \text{if } d \ge 2, \\
\sqrt{6 \|\bx - \by\|}, & \text{if } d = 1.
\end{cases}
\]
\end{lem}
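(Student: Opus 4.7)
The plan is to use the explicit form of $\bSigma(\bx)$ obtained by writing $\mu_\bx := \lambda_{B(\bx,1) \cap (T_1 \cup T_2)}$ as a finite mixture of uniforms on the two slices $A_\bx^{(k)} := B(\bx,1) \cap T_k$ and invoking the ANOVA variance decomposition together with \lemref{unif-cov}. With $r_\bx := \sqrt{(1-\dist(\bx,T_2)^2)_+}$ and $w_\bx := \bx - P_{T_2}(\bx)$, the slice $A_\bx^{(1)}$ is a unit $d$-ball in $T_1$ centered at $\bx$ while $A_\bx^{(2)}$ is a $d$-ball in $T_2$ of radius $r_\bx$ centered at $P_{T_2}(\bx)$; their volumes give mixture weight $\alpha_\bx = 1/(1+r_\bx^d)$, and the within-component covariances and means combine via the ANOVA identity to yield
\[
\bSigma(\bx) \;=\; c_1(r_\bx)\,P_{T_1} + c_2(r_\bx)\,P_{T_2} + c_3(r_\bx)\, w_\bx w_\bx^\top,
\]
where $c_1(r) = \tfrac{1}{(d+2)(1+r^d)}$, $c_2(r) = \tfrac{r^{d+2}}{(d+2)(1+r^d)}$, and $c_3(r) = \tfrac{r^d}{(1+r^d)^2}$. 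Thus $\bSigma(\bx)$ depends on $\bx$ only through the scalar $r_\bx \in [0,1]$ and the vector $w_\bx$, which dramatically simplifies the comparison of $\bSigma(\bx)$ and $\bSigma(\by)$.

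Since $\bz \mapsto \dist(\bz, T_2)$ is $1$-Lipschitz, $|r_\bx^2 - r_\by^2| \le 2\delta$ where $\delta := \|\bx - \by\|$, and $\|w_\bx - w_\by\| \le \delta$. For $d \ge 2$, the factorization $|r_\bx^d - r_\by^d| \le d\max(r_\bx,r_\by)^{d-1}|r_\bx - r_\by|$ combined with $|r_\bx - r_\by| \le 2\delta/(r_\bx+r_\by)$ and $r \le 1$ gives $|r_\bx^d - r_\by^d| \le 2d\delta$, which feeds into $O(d\delta)$ bounds on $|c_i(r_\bx) - c_i(r_\by)|$ via the explicit formulas. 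For $d = 1$ this factorization degenerates as $r_\bx + r_\by \to 0$, so I instead use $|r_\bx - r_\by| \le \sqrt{|r_\bx^2 - r_\by^2|} \le \sqrt{2\delta}$ directly, which propagates to the $\sqrt{\delta}$ rate through the Lipschitz $c_i$ on $[0,1]$.

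Subtracting, the triangle inequality yields four terms: three scalar-difference terms $(c_i(r_\bx) - c_i(r_\by))$ multiplied by a low-rank PSD matrix of operator norm at most $1$, plus the outer-product difference $c_3(r_\bx)(w_\bx w_\bx^\top - w_\by w_\by^\top)$. The last is controlled by $\|w_\bx w_\bx^\top - w_\by w_\by^\top\| \le 2\max(\|w_\bx\|,\|w_\by\|)\|w_\bx - w_\by\| \le 2\delta$ (using $\|w\| \le 1$ when the corresponding $r > 0$) together with $c_3 \le 1/4$. The one subtlety is the edge case where $\|w_\by\| \ge 1$ (so $r_\by = 0$ and the formula collapses to $\bSigma(\by) = \tfrac{1}{d+2}P_{T_1}$): then $\|w_\bx\| \ge 1-\delta$, forcing $r_\bx^2 \le 2\delta$, and $\|c_3(r_\bx) w_\bx w_\bx^\top\| \le c_3(r_\bx) \le (2\delta)^{d/2}$, which is $O(\delta)$ for $d \ge 2$ and $O(\sqrt{\delta})$ for $d = 1$, matching the main rate in each case.

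The main obstacle is precisely this dichotomy at $d = 1$: the scalar $r_\bx$ is only H\"older continuous of exponent $1/2$ in $\bx$ near the transition $\dist(\bx, T_2) = 1$, and this square-root behavior is what forces the $\sqrt{\delta}$ rate in the $d=1$ case and the constant $\sqrt{6}$ in the statement. Assembling the three scalar-difference contributions, the outer-product contribution, and the edge-case contribution, and tracking constants using the explicit expressions for $c_i$ completes the proof.
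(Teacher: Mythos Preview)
Your argument is correct and takes a genuinely different route from the paper's proof. The paper never uses the explicit mixture formula for $\bSigma(\bx)$ in this lemma; instead it bounds $\|\bSigma(\bx)-\bSigma(\by)\|$ by the total variation distance between $\lambda_{B(\bx,1)\cap(T_1\cup T_2)}$ and $\lambda_{B(\by,1)\cap(T_1\cup T_2)}$ via \lemref{TV-cov}, then controls that TV distance by the symmetric difference of the supports via \lemref{2unif} and \lemref{2ball-vol}. The $\sqrt{\delta}$ rate for $d=1$ emerges in the paper from a case analysis of the quantity $\eta^d - (\eta - t_2)_+^d \wedge \delta^d$, which for $d=1$ reduces to $\sqrt{1-s}-\sqrt{1-t}\le\sqrt{t-s}$; in your argument it comes from $|r_\bx - r_\by|\le\sqrt{|r_\bx^2-r_\by^2|}$, which is essentially the same phenomenon seen one level earlier.

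Your approach is more direct and self-contained: it uses the closed form for $\bSigma(\bx)$ (which the paper itself derives later as \eqref{cov-mixture}--\eqref{Sigma} in the proofs of \lemref{intersect} and \lemref{eigen-inter}) and so avoids the auxiliary Lemmas \ref{lem:2unif}, \ref{lem:2ball-vol}, \ref{lem:TV-cov}. The paper's TV route is more modular and would transfer to other distributions where no closed form is available. One small caution: to hit the exact constant $\sqrt{6}$ in the $d=1$ case you will need to be a bit more careful than the crude triangle-inequality split you describe (for instance, exploit that the third ``scalar-difference'' term carries the factor $\|w_\by\|^2=1-r_\by^2$, which is small exactly when $|c_3(r_\bx)-c_3(r_\by)|$ is near its worst case); as written, summing the Lipschitz constants of $c_1,c_2,c_3$ on $[0,1]$ times $\sqrt{2\delta}$ gives a leading coefficient just over $\sqrt{6}$. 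The order and the edge-case treatment are correct.
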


\begin{proof}
Since, by \lemref{unif-cov}, $\bSigma(\bx) = c P_{T_1}$ for all $\bx \in T_1$ such that $\dist(\bx, T_2) \ge 1$, we may assume that $\dist(\bx, T_1) < 1$ and $\dist(\by, T_1) < 1$.
Let $d = \dim(T_1) = \dim(T_2)$ and $A^j_\bx = B(\bx, 1) \cap T_j$ for any $\bx$ and $j = 1,2$.
By \lemref{TV-cov} and then \lemref{2unif}, we have
\beqn
\|\bSigma(\bx) - \bSigma(\by)\|
&=& \|\Cov(\lambda_{A^1_\bx \cup A^2_\bx}) - \Cov(\lambda_{A^1_\by \cup A^2_\by})\| \\
&\le& {\rm TV}(\lambda_{A^1_\bx \cup A^2_\bx}, \lambda_{A^1_\by \cup A^2_\by}) \\
&\le& 4 \frac{\vol\big((A^1_\bx \cup A^2_\bx) \symd (A^1_\by \cup A^2_\by)\big)}{\vol\big((A^1_\bx \cup A^2_\bx) \cup (A^1_\by \cup A^2_\by)\big)} \\
&\le& 4 \frac{\vol(A^1_\bx \symd A^1_\by) + \vol(A^2_\bx \symd A^2_\by)}{\vol(A^1_\bx)}.
\eeqn

Note that $A^1_\bx$ is the unit-radius ball of $T_1$ centered at $\bx$, while $A^2_\bx$ is the ball of $T_2$ centered at $\bx_2 := P_{T_2}(\bx)$ and of radius $\eta := \sqrt{1 - \|\bx - \bx_2\|^2}$.
Similarly, $A^1_\by$ is the unit-radius ball of $T_1$ centered at $\by$, while $A^2_\by$ is the ball of $T_2$ centered at $\by_2 := P_{T_2}(\by)$ and of radius $\delta := \sqrt{1 - \|\by - \by_2\|^2}$.
Therefore, applying \lemref{2ball-vol}, we get
\[
\frac{\vol(A^1_\bx \symd A^1_\by)}{2 \vol(A^1_\bx)} \le 1 - (1 - t)_+^d,
\]
and assuming WLOG that $\delta \le \eta$, and after proper scaling, we get
\[\frac{\vol(A^2_\bx \symd A^2_\by)}{2 \vol(A^1_\bx)}
\le \zeta := \eta^d - (\eta - t_2)_+^d \wedge \delta^d,\]
where $t := \|\bx - \by\|$ and $t_2 := \|\bx_2 - \by_2\|$ --- note that $t_2 \le t$ by the fact that $P_{T_2}$ is 1-Lipschitz.

We have $1 - (1 - t)_+^d \le dt$.
This is obvious when $t \ge 1$, while when $t \le 1$ it is obtained using the fact that, for any $0 \le s < t \le 1$,
\beq \label{diff_d}
t^d - s^d = (t-s)(t^{d-1} + st^{d-2} + \cdots + s^{d-2}t + s^{d-1}) \le d t^{d-1} (t-s) \le d (t - s).
\eeq
For the second ratio, we consider several cases.
\bitem
\item When $\eta \le t_2$, then $\zeta = \eta^d \le \eta \le t_2 \le t$.
\item When $t_2 < \eta \le t_2 + \delta$, then $\zeta = \eta^d - (\eta - t_2)^d \le d t_2 \le d t$.
\item When $\eta \ge t_2 + \delta$ and $d \ge 2$, we have
\beqn
\zeta &=& \eta^d - \delta^d \le d \eta (\eta - \delta) \le d (\eta^2 - \delta^2) \\
&=& d (\|\by - \by_2\|^2 - \|\bx-\bx_2\|^2) \\
&=& d (\|\by - \by_2\|+ \|\bx-\bx_2\|)(\|\by - \by_2\|- \|\bx-\bx_2\|) \\
&\le& 2 d (t + t_2) \le 4d t,
\eeqn
where the triangle inequality was applied in the last inequality, in the form of
\[
\|\by - \by_2\| \le \|\by - \bx\| + \|\bx-\bx_2\| + \|\bx_2 - \bx\| = \|\bx-\bx_2\| + t + t_2.
\]
\item When $\eta \ge t_2 + \delta$ and $d = 1$, we have
\[
\zeta = \eta - \delta \le \sqrt{\|\by - \by_2\|- \|\bx-\bx_2\|} \le \sqrt{t + t_2} \le \sqrt{2t},
\]
using the same triangle inequality and the fact that, for any $0 \le s < t \le 1$,
\[
0 \le \sqrt{1-s} - \sqrt{1-t} = \frac{t-s}{\sqrt{1-s} + \sqrt{1-t}} \le \frac{t-s}{\sqrt{1-t + t-s}} \le \frac{t-s}{\sqrt{t-s}} = \sqrt{t-s}.
\]
\eitem
When $d\ge 2$, we can therefore bound $\|\bSigma(\bx) - \bSigma(\by)\|$ by $dt + 4dt = 5dt$, and when $d=1$, we bound that by $t + \sqrt{2t} \le \sqrt{6t}$.
\end{proof}

The following is in some sense a converse to \lemref{cov-cont}, in that we lower-bound the distance between covariance matrices near an intersection of linear subspaces.
Note that  the covariance matrix does not change when moving parallel to the intersection; however, it does when moving perpendicular to the intersection.

\begin{lem} \label{lem:intersect}
Let $T_1$ and $T_2$ be two linear subspaces of same dimension with $\thetamin(T_1,T_2) \ge \theta_0 > 0$.
Fix a unit norm vector $\bv \in T_1 \cap (T_1 \cap T_2)^\perp$.
With $\bSigma(h \bv)$ denoting the covariance of the uniform distribution over $B(h \bv, 1) \cap (T_1 \cup T_2)$, we have
\[
\inf_{h} \ \sup_{\ell} \|\bSigma(h \bv) - \bSigma(\ell \bv)\| \ge 1/\Clref{intersect},
\]
where the infimum is over $0 < h < 1/\sin \theta_0$ and the supremum over $\max(0, h -1/2) \le \ell \le \min(1/\sin \theta_0, h +1/2)$, and $\Clref{intersect} >0$ depends only on $d$ and $\theta_0$.
\end{lem}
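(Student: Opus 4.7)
The plan is to derive an explicit formula for $\bSigma(h\bv)$ as a two-component mixture, establish a quantitative gap between its endpoint values $\bSigma(0)$ and $\bSigma(H\bv)$ with $H := 1/\sin\theta_0$, and then convert this global gap into the claimed pointwise local bound by exploiting monotonicity of the mixing weights together with \lemref{cov-cont}.

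First, I would express the uniform distribution on $B(h\bv, 1) \cap (T_1 \cup T_2)$ as a mixture supported on the two pieces. Setting $\theta := \angle(\bv, T_2) \ge \theta_0$, $\delta(h) := \sqrt{(1 - h^2 \sin^2\theta)_+}$, and $\bw := \bv - P_{T_2}\bv$ (so $\|\bw\| = \sin\theta$), the component in $T_i$ is uniform on a $d$-ball of radius $1$ when $i=1$ and $\delta(h)$ when $i=2$, and the mixing weights are $p_1(h) = 1/(1+\delta(h)^d)$ and $p_2(h) = \delta(h)^d/(1+\delta(h)^d)$. Applying \lemref{unif-cov} and the law of total covariance should yield
\begin{equation*}
\bSigma(h\bv) \;=\; c\, p_1(h)\, P_{T_1} \;+\; c\, p_2(h)\, \delta(h)^2\, P_{T_2} \;+\; p_1(h)\, p_2(h)\, h^2\, \bw\bw^T, \qquad c := 1/(d+2).
\end{equation*}
Substituting $h = 0$ (giving $\delta = 1$, $p_1 = p_2 = 1/2$, hence $\bSigma(0) = (c/2)(P_{T_1} + P_{T_2})$) and $h = H$ (for which, in the case $\theta = \theta_0$, $p_2 = 0$ and $\bSigma(H\bv) = c P_{T_1}$), I obtain via \lemref{P-diff}
\begin{equation*}
\|\bSigma(0) - \bSigma(H\bv)\| \;=\; (c/2)\,\|P_{T_2} - P_{T_1}\| \;=\; (c/2)\sin\thetamax(T_1,T_2) \;\ge\; (c/2)\sin\theta_0.
\end{equation*}

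Next I would promote this global gap to a pointwise bound. Since $\delta(h)$ is strictly decreasing on $[0, H]$, the weight $p_1(h)$ is strictly increasing, and hence so is the coefficient of $P_{T_1}$ in the explicit expression; meanwhile \lemref{cov-cont} provides a Lipschitz (for $d\ge 2$) or H\"older-$1/2$ (for $d = 1$) bound on how fast the other two terms can move. For each $h_0 \in (0, H)$, the valid window $I_{h_0} := [\max(0, h_0 - 1/2), \min(H, h_0 + 1/2)]$ has length at least $1/2$. The plan is to split into cases based on where $h_0$ sits in $[0,H]$, and argue in each case that some fixed scalar functional of $\bSigma$ (such as $\tr(\bSigma(\cdot)(I - P_{T_2}))$ or $\bv^T \bSigma(\cdot)\bv$) differs between $h_0$ and some $\ell \in I_{h_0}$ by at least a constant depending only on $d$ and $\theta_0$.

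The hard part will be this last step: converting the endpoint gap into a uniform pointwise lower bound on local oscillation. The difficulty is that $p_1'(0) = 0$, so the monotone coefficient varies only to second order near the left endpoint, and the bulge term $p_1 p_2 h^2 \bw\bw^T$ can in principle partially cancel the variation elsewhere (my back-of-envelope computations in the $d=1$, $\theta_0 = \pi/2$ case even show that $\bv^T \bSigma(h\bv)\bv$ is non-monotone in $h$). Overcoming this will either require a careful case analysis using the explicit formulas for $p_1, p_2, \delta$ --- quantifying a second-order change near $h=0$ and a first-order one in the bulk --- or a compactness-plus-analyticity argument exploiting that $h \mapsto \bSigma(h\bv)$ is real-analytic on $(0, 1/\sin\theta)$ with a transition to a constant regime past $1/\sin\theta$.
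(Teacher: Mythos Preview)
Your explicit formula for $\bSigma(h\bv)$ matches the paper's, and your endpoint calculation is fine. But your primary plan---convert the global gap $\|\bSigma(0)-\bSigma(H\bv)\|\ge (c/2)\sin\theta_0$ into a uniform local oscillation bound via monotonicity of $p_1$ plus \lemref{cov-cont}---does not close. The matrix-valued function $h\mapsto\bSigma(h\bv)$ is not monotone in any useful operator sense (as you note, even $\bv^\top\bSigma(h\bv)\bv$ can be non-monotone), so a telescoping argument over a chain of overlapping windows does not control $\|\bSigma(0)-\bSigma(H\bv)\|$ from below by the sum of local oscillations. You correctly flag this as the hard part, and your proposed case analysis (second-order near $h=0$, first-order in the bulk) would have to simultaneously quantify variation over \emph{all} admissible $(T_1,T_2,\bv)$ with $\thetamin\ge\theta_0$, not just a fixed pair, since the constant $\Clref{intersect}$ must depend only on $d$ and $\theta_0$. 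That uniformity is exactly what is missing.

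Your fallback---compactness plus real-analyticity---is essentially what the paper does, and is the right route. The paper argues by contradiction (with an implicit compactness step over configurations $(T_1,T_2,\bv,h)$ satisfying $\thetamin\ge\theta_0$) to reduce to a fixed configuration in which $\bSigma(\ell\bv)$ is constant over a nondegenerate interval $I_h$. The key concrete step you are missing is how to dispatch this: rather than invoking analyticity abstractly, the paper evaluates the scalar $\bv_1^\top\bSigma(h\bv)\bv_1$, where $\bv_1\in T_1$ is a unit vector realizing $\thetamax(T_1,T_2)$. In the variable $t=(1-h^2\sin^2\theta)^{1/2}$ this is an explicit rational function
\[
f(t)=\frac{c}{1+t^d}+\frac{c\,t^{d+2}\cos^2\theta_1}{1+t^d}+\frac{t^d(1-t^2)(\bu^\top\bv_1)^2}{(1+t^d)^2},
\]
which is manifestly non-constant, hence cannot be constant on any interval of positive length. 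This choice of test vector is the clean idea that replaces your open-ended case analysis.
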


\begin{proof}
If the statement of the lemma is not true, there are subspaces $T_1$ and $T_2$ of same dimension $d$, a unit length vector $\bv \in T_1  \cap (T_1 \cap T_2)^\perp$ and $0 \le h \le 1/\sin \theta_0$, such that
\beq \label{hyp-intersect}
\text{$\bSigma(\ell \bv) = \bSigma(h \bv)$ for all $\max(0, h -1/2) \le \ell \le \min(1/\sin \theta_0, h +1/2)$.}
\eeq
By projecting onto $(T_1 \cap T_2)^\perp$, we may assume that $T_1 \cap T_2 = 0$ without loss of generality.
Let $\theta = \angle(\bv,T_2)$ and note that $\theta \ge \theta_0$ since $T_1 \cap T_2 = 0$.
Define $\bu = (\bv - P_{T_2} \bv)/\sin\theta$ and also $\bw = P_{T_2} \bv/\cos\theta$ when $\theta < \pi/2$, and $\bw \in T_2$ is any vector perpendicular to $\bv$ when $\theta = \pi/2$.
$B(h \bv, 1) \cap T_1$ is the $d$-dimensional ball of $T_1$ of radius 1 and center $h \bv$, while --- using Pythagoras theorem --- $B(h \bv, 1) \cap T_2$ is the $d$-dimensional ball of $T_2$ of radius $t := (1 - (h \sin \theta)^2)^{1/2}$ and center $(h \cos \theta) \bw$.
Let $X$ be drawn from the uniform distribution over $B(h \bv, 1) \cap (T_1 \cup T_2)$, while $X_0$ and $X_0'$ are independently drawn from the uniform distributions over the unit balls of $T_1$ and $T_2$, respectively.
By \lemref{unif-cov}, $\Cov(X_0) = c P_{T_1}$ and $\Cov(X_0') = c P_{T_2}$ where $c := 1/(d+2)$.
Also, let $\xi$ be Bernoulli with parameter $\alpha$, where
\[
\alpha := \frac{\vol(B(h \bv, 1) \cap T_1)}{\vol(B(h \bv, 1) \cap (T_1 \cup T_2))} = \frac{\vol(B(h \bv, 1) \cap T_1)}{\vol(B(h \bv, 1) \cap T_1) + \vol(B((h \cos \theta) \bw, t) \cap T_2)} = \frac{1}{1 + t^d}.
\]
We have
\[
X \sim \xi \big(h \bv + X_0\big) + (1-\xi ) \big((h \cos \theta) \bw + t X_0'\big).
\]
A straightforward calculation, or an application of the law of total covariance, leads to
\beq \label{cov-mixture}
\Cov(X) = \E(\xi) \Cov(X_0) + \E(1-\xi) t^2 \Cov(X_0') + \Var(\xi) h^2 (\bv - (\cos \theta)\bw)(\bv - (\cos \theta)\bw)^\top,
\eeq
which simplifies to
\[
\bSigma(h \bv) = c \alpha P_{T_1} + c (1-\alpha) t^2 P_{T_2} + \alpha (1-\alpha ) (1 - t^2) \bu \bu^\top,
\]
using the fact that $\bv - (\cos \theta)\bw = (\sin\theta) \bu$ and the definition of $t$.
Let $\theta_1 = \thetamax(T_1,T_2)$ and let $\bv_1 \in T_1$ be of unit length and such that $\angle(\bv_1, T_2) = \theta_1$.
Then for any $0 \le h, \ell \le 1/\sin\theta_0$, we have
\beq \label{hyp-intersect2}
\|\bSigma(h \bv) - \bSigma(\ell \bv)\| \ge |\bv_1^\top \bSigma(h \bv)\bv_1 - \bv_1^\top \bSigma(\ell \bv)\bv_1| = |f(t_h) - f(t_\ell)|,
\eeq
where $t_h := (1 - (h\sin \theta)^2)^{1/2}$ and
\beqn
f(t)
&=& \frac{c}{1+t^d} + \frac{c t^{d+2} (\cos \theta_1)^2}{1+t^d} + \frac{t^d(1 - t^{2}) (\bu^\top \bv_1)^2}{(1+t^d)^2}.
\eeqn
It is easy to see that the interval
\[I_h = \{t_\ell : (h -1/2)_+ \le \ell \le (1/\sin \theta_0) \wedge (h +1/2) \}\]
is non empty.
Because of \eqref{hyp-intersect} and \eqref{hyp-intersect2}, $f(t)$ is constant over $t \in I_h$, but this is not possible since $f$ is a rational function not equal to a constant and therefore cannot be constant over an interval of positive length.
\end{proof}

We now look at the eigenvalues of the covariance matrix.
\begin{lem} \label{lem:eigen-inter}
Let $T_1$ and $T_2$ be two linear subspaces of same dimension $d$.
For $\bx \in T_1$, denote by $\bSigma(\bx)$ the covariance matrix of the uniform distribution over $B(\bx, 1) \cap (T_1 \cup T_2)$.
Then, for all $\bx \in T_1$,
\beq
c \big(1 - (1 - \delta^2(\bx))_+^{d/2}\big) \le \beta_d(\bSigma(\bx)), \qquad \beta_1(\bSigma(\bx)) \le c + \delta(\bx) (1 - \delta^2(\bx))_+^{d/2},  \label{eigen-inter1}
\eeq
\beq
\frac{c}{8} (1 - \cos \thetamax(T_1, T_2))^2 (1 - \delta^2(\bx))_+^{d/2+1} \le \beta_{d+1}(\bSigma(\bx)) \le (c + \delta^2(\bx)) (1 - \delta^2(\bx))_+^{d/2},  \label{eigen-inter2}
\eeq
where $c := 1/(d+2)$ and $\delta(\bx) := \dist(\bx, T_2)$.
\end{lem}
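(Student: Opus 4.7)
The plan is to start from an explicit mixture decomposition of $\bSigma(\bx)$ and then derive the four bounds in turn. Since $\lambda_{B(\bx,1) \cap (T_1 \cup T_2)}$ arises as a mixture of $\lambda_{B(\bx,1) \cap T_1}$ with probability $\alpha := 1/(1 + t^d)$ and $\lambda_{B(\bx,1) \cap T_2}$ with probability $1 - \alpha$, where $t := (1-\delta^2)_+^{1/2}$ is the radius of the disc $B(\bx,1) \cap T_2$ seen as a ball in $T_2$ centered at $P_{T_2}\bx$, the law of total covariance combined with \lemref{unif-cov} yields
\[
\bSigma(\bx) \;=\; c\alpha\, P_{T_1} \;+\; c(1-\alpha) t^2\, P_{T_2} \;+\; \alpha(1-\alpha)\delta^2\, \bu \bu^\top,
\]
with $\bu := (\bx - P_{T_2}\bx)/\delta$ when $\delta > 0$. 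This is exactly the decomposition derived in the course of proving \lemref{intersect}, and all four bounds will come from it.

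Three of the four bounds then reduce to short algebra. For the upper bound on $\beta_1(\bSigma(\bx))$, the triangle inequality for the operator norm gives $\|\bSigma(\bx)\| \le c\alpha + c(1-\alpha) t^2 + \alpha(1-\alpha)\delta^2$, and the first two terms equal $c(1+t^{d+2})/(1+t^d) \le c$ while the rank-one contribution is at most $t^d \delta^2/(1+t^d)^2 \le \delta t^d$. For the lower bound on $\beta_d(\bSigma(\bx))$, Cauchy interlacing applied to the compression of $\bSigma(\bx)$ onto $T_1$ gives $\beta_d(\bSigma(\bx)) \ge \beta_d(P_{T_1}\bSigma(\bx)P_{T_1}|_{T_1}) \ge c\alpha$ (the other two summands compress to psd operators on $T_1$), and then $\alpha = 1/(1+t^d) \ge 1-t^d$ gives $c\alpha \ge c(1-t^d)$. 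For the upper bound on $\beta_{d+1}(\bSigma(\bx))$, Courant--Fischer with test subspace $T_1$ kills the $P_{T_1}$ summand, and crude bounds on the remaining two contributions give $c(1-\alpha)t^2 + \alpha(1-\alpha)\delta^2 \le c t^d + \delta^2 t^d = (c + \delta^2) t^d$.

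The main work is the lower bound $\beta_{d+1}(\bSigma(\bx)) \ge \tfrac{c}{8}(1-\cos\thetamax)^2 t^{d+2}$. I would use the max-min characterization with the $(d+1)$-dimensional test subspace $V := T_1 \oplus \bbR \hat\bw$, where $\hat\bw \in T_1^\perp$ is the unit vector obtained by orthonormalizing $\bw' - P_{T_1}\bw'$ for a unit $\bw' \in T_2$ realizing $\angle(\bw',T_1) = \thetamax$; so $\bw' = \cos(\thetamax)\, \be + \sin(\thetamax)\, \hat\bw$ for some unit $\be \in T_1$. Writing an arbitrary unit $\bv \in V$ as $\bv = \bv_1 + s\hat\bw$ with $\bv_1 \in T_1$, $a := \|\bv_1\|$ and $u := \inner{\bv_1}{\be} \in [-a,a]$, dropping the nonnegative rank-one term, and using $\|P_{T_2}\bv\|^2 \ge \inner{\bv}{\bw'}^2 = (\cos(\thetamax)\, u + \sin(\thetamax)\, s)^2$, the Rayleigh quotient on $V$ is bounded below by $A a^2 + B(\cos(\thetamax)\, u + \sin(\thetamax)\, s)^2$ with $A := c\alpha$ and $B := c(1-\alpha)t^2$. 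Minimizing first over $u$ splits into two cases: if $a \ge |s|\tan\thetamax$, one can choose $u$ to kill the second term but then $a^2 \ge \sin^2\thetamax$ and the bound is $\ge A\sin^2\thetamax$; otherwise, setting $a = \sin\phi$ and $s = \cos\phi$ with $\phi \in [0,\thetamax]$ and taking the worst admissible $u$ reduces the Rayleigh quotient to
\[
A\sin^2\phi + B\sin^2(\thetamax - \phi) \;\ge\; \tfrac{1}{2}\min(A,B)\bigl(\sin\phi + \sin(\thetamax-\phi)\bigr)^2 \;\ge\; \tfrac{1}{2}\min(A,B)\sin^2\thetamax,
\]
via $x^2+y^2 \ge \tfrac12(x+y)^2$ and the elementary inequality $\sin\phi + \sin(\thetamax - \phi) \ge \sin\thetamax$ valid on $[0,\pi/2]$. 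Plugging in $\min(A,B) = B \ge ct^{d+2}/2$ and $\sin^2\thetamax \ge (1-\cos\thetamax)^2$ closes the argument. The main obstacle is precisely this last step: a naive bound restricted to a pure $T_1$ direction or to $\hat\bw$ alone is insufficient, since the Rayleigh quotient is minimized at an interior combination that partially cancels the $P_{T_2}$ contribution, and it is the inequality $\sin\phi + \sin(\thetamax - \phi) \ge \sin\thetamax$ that ultimately forces the $(1-\cos\thetamax)^2$ factor.
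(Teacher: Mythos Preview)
Your proof is correct and follows essentially the same strategy as the paper: the same mixture decomposition of $\bSigma(\bx)$, the same three easy bounds handled the same way, and the same $(d+1)$-dimensional test subspace $V=T_1+\bbR\bw_1$ (equivalently your $T_1\oplus\bbR\hat\bw$) for the lower bound on $\beta_{d+1}$ via Courant--Fischer.

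The only genuine difference is in how the minimum of the Rayleigh quotient over $V$ is computed. The paper parametrizes $\bu\in V$ in the (non-orthogonal) principal-vector frame $\bu=a\bv_1+\bv+b\bw_1$ and splits on whether $|a|\vee|b|\le 1/2$; you instead use the orthogonal frame $\bv=\bv_1+s\hat\bw$, optimize out the inner coordinate $u=\langle\bv_1,\be\rangle$ first, and then reduce Case~2 to the one-line trigonometric inequality $\sin\phi+\sin(\thetamax-\phi)\ge\sin\thetamax$. Your parametrization has the minor advantage that $\|\bv\|^2=a^2+s^2$ exactly, whereas the paper's assertion $\|\bu\|^2=a^2+\|\bv\|^2+b^2$ silently drops a cross term $2ab\cos\theta_1$; your route sidesteps this bookkeeping. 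Both computations land on the same bound $\beta_{d+1}\ge\tfrac{c}{4}(1-\alpha)t^2(1-\cos\thetamax)^2$, and your final step $\min(A,B)=B\ge ct^{d+2}/2$ together with $\sin^2\thetamax\ge(1-\cos\thetamax)^2$ recovers the stated constant $c/8$ with room to spare.
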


\begin{proof}
As in \eqref{cov-mixture}, we have
\beq \label{Sigma}
\bSigma(x) = \alpha c  P_{T_1} + (1-\alpha)  c t^2 P_{T_2} + \alpha (1-\alpha) (\bx - \bx_2) (\bx - \bx_2)^\top,
\eeq
where $\bx_2 := P_{T_2}(\bx)$ and $\alpha := (1+t^d)^{-1}$ with $t := (1 - \delta^2(\bx))_+^{1/2}$.
Because all the matrices in this display are positive semidefinite, we have
\[
\beta_d(\bSigma(\bx)) \ge \alpha c  \|P_{T_1}\| = \alpha c,
\]
with $\alpha \ge 1-t^d$.
And because of the triangle inequality, we have
\[
\beta_1(\bSigma(\bx)) \le  \alpha c  \|P_{T_1}\| + (1-\alpha)  c t^2 \|P_{T_2} \| + \alpha (1-\alpha) \|\bx - \bx_2\|^2 \le c + \alpha (1-\alpha) \delta^2(\bx),
\]
with $\alpha (1-\alpha) \le t^d$.
Hence, \eqref{eigen-inter1} is proved.

For the upper bound in \eqref{eigen-inter2}, by Weyl's inequality \citep[Cor.~IV.4.9]{MR1061154} and the fact that $\beta_{d+1}(P_{T_1}) = 0$, and then the triangle inequality, we get
\beqn
\beta_{d+1}(\bSigma(\bx)) &\le& \|\bSigma(\bx) - \alpha c P_{T_1}\| \\
&\le& c (1-\alpha) t^2 \|P_{T_2}\| + \alpha (1-\alpha) \delta^2(\bx) \\
&\le& (1-\alpha) ( c + \delta^2(\bx)),
\eeqn
and we then use the fact that $1 -\alpha \le t^d$.
For the lower bound, let $\theta_1 \ge \theta_2 \ge \cdots \ge \theta_d$ denote the principal angles between $T_1$ and $T_2$.
By definition of principal angles, there are orthonormal bases for $T_1$ and $T_2$, denoted $\bv_1, \dots, \bv_d$ and $\bw_1, \dots, \bw_d$, such that $\bv_j^\top \bw_k = \1_{j=k} \cdot \cos \theta_j$.
Take $\bu \in {\rm span}(\bv_1, \dots, \bv_d, \bw_1)$, that is, of the form $\bu = a \bv_1 + \bv + b \bw_1$, with $\bv \in {\rm span}(\bv_2, \dots, \bv_d)$.
Since $P_{T_1} = \bv_1 \bv_1^\top + \cdots + \bv_d \bv_d^\top$ and $P_{T_2} = \bw_1 \bw_1^\top + \cdots + \bw_d \bw_d^\top$, we have
\beqn
\frac1{c} \bu^\top \bSigma(\bx) \bu
&\ge& \alpha (a^2 + \|\bv\|^2 + 2 a b \cos \theta_1 + b^2 \cos^2 \theta_1 ) + (1-\alpha) t^2 (b^2 + 2 a b \cos \theta_1 + a^2 \cos^2 \theta_1) \\
&=& \alpha (a + b \cos \theta_1)^2 + (1-\alpha) t^2 (a \cos \theta_1 + b)^2 + \alpha (1 - a^2 - b^2),
\eeqn
assuming $\|\bu\|^2 = a^2 + \|\bv\|^2 + b^2 = 1$.
If $|a| \vee |b| \le 1/2$, then the RHS $\ge \alpha/2 \ge 1/4$.
Otherwise, the RHS $\ge (1-\alpha)t^2 (1-\cos\theta_1)^2/4$, using the fact that $\alpha \ge 1 - \alpha \ge (1-\alpha)t^2$.
Hence, by the Courant-Fischer theorem \citep[Cor.~IV.4.7]{MR1061154}, we have
\[
\beta_{d+1}(\bSigma(\bx)) \ge \frac{c}4 (1-\alpha)t^2(1-\cos\theta_1)^2,
\]
with $1-\alpha \ge t^d/2$.
This proves \eqref{eigen-inter2}.
\end{proof}

Below is a technical result on the covariance matrix of the uniform distribution on the intersection of a ball and the union of two smooth surfaces, near where the surfaces intersect.
It generalizes \lemref{U-approx}.

\begin{lem} \label{lem:cov-inter}
Let $S_1, S_2 \in \cS_d(\kappa)$ intersecting at a positive angle, with $\reach(S_1 \cap S_2) \ge 1/\kappa$.
Then there is a constant $\Clref{cov-inter} \ge 3$ such that the following holds.
Fix $\rad < 1/\Clref{cov-inter}$, and for $\bs \in S_1$ with $\dist(\bs, S_2) \le \rad$, let $\bC(\bs)$ and $\bSigma(\bs)$ denote the covariance matrices of the uniform distributions over $B(\bs, \rad) \cap (S_1 \cup S_2)$  and $B(\bs, \rad) \cap (T_1 \cup T_2)$, where $T_1 := T_{S_1}(\bs)$ and $T_2 := T_{S_2}(P_{S_2}(\bs))$.
Then
\beq \label{cov-inter}
\|\bC(\bs) - \bSigma(\bs)\| \le \Clref{cov-inter} \, \rad^{3}.
\eeq
\end{lem}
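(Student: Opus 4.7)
The plan is to decompose each of $\bC(\bs)$ and $\bSigma(\bs)$ as the covariance of a two-component mixture (over the two surfaces, respectively tangent subspaces), and then compare the resulting ingredients using the earlier lemmas. Since $\thetamin(T_{S_1}(\bs'), T_{S_2}(\bs')) > 0$ at every $\bs' \in S_1 \cap S_2$ and $\reach(S_1 \cap S_2) \ge 1/\kappa$, the intersection $S_1 \cap S_2$ is a submanifold of dimension strictly less than $d$, hence $\vol_d(S_1 \cap S_2 \cap B(\bs,\rad)) = 0$. Setting $\lambda_j := \lambda_{B(\bs,\rad) \cap S_j}$, $\nu_j := \lambda_{B(\bs,\rad) \cap T_j}$, together with weights
\[
p := \frac{\vol(B(\bs,\rad) \cap S_1)}{\vol(B(\bs,\rad) \cap S_1) + \vol(B(\bs,\rad) \cap S_2)}, \qquad q := \frac{\vol(B(\bs,\rad) \cap T_1)}{\vol(B(\bs,\rad) \cap T_1) + \vol(B(\bs,\rad) \cap T_2)},
\]
the uniform distribution underlying $\bC(\bs)$ equals $p\lambda_1 + (1-p)\lambda_2$ and similarly $\bSigma(\bs)$ corresponds to $q\nu_1 + (1-q)\nu_2$. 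Writing $\bmu_j, \bC_j$ for the mean and covariance of $\lambda_j$, and $\bxi_j, \bN_j$ for those of $\nu_j$, the law of total covariance yields
\begin{align*}
\bC(\bs) &= p\,\bC_1 + (1-p)\,\bC_2 + p(1-p)(\bmu_1 - \bmu_2)(\bmu_1 - \bmu_2)^\top, \\
\bSigma(\bs) &= q\,\bN_1 + (1-q)\,\bN_2 + q(1-q)(\bxi_1 - \bxi_2)(\bxi_1 - \bxi_2)^\top.
\end{align*}

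Next, I would apply \lemref{U-approx} twice: with $(S, \bx) = (S_1, \bs)$ --- where $\bs \in S_1$ forces $P_{S_1}(\bs) = \bs$, so the tangent subspace the lemma outputs is exactly $T_1$ --- and with $(S, \bx) = (S_2, \bs)$, which is valid because $\dist(\bs, S_2) \le \rad$ and the resulting tangent is $T_2 = T_{S_2}(P_{S_2}(\bs))$. This delivers, for $j \in \{1, 2\}$,
\[
\|\bmu_j - \bxi_j\| \le \Clref{U-approx}\, \kappa \rad^2, \qquad \|\bC_j - \bN_j\| \le \Clref{U-approx}\, \kappa \rad^3.
\]
To control $|p - q|$ I would establish $\vol(B(\bs,\rad) \cap S_j) = (1 + O(\rad))\, \vol(B(\bs,\rad) \cap T_j)$ for both $j$, which gives $|p - q| = O(\rad)$ (with the implicit constant depending on $d$ and $\kappa$). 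The case $j = 1$ follows directly from \lemref{ball-vol} since $\bs \in S_1$. The case $j = 2$ is the main technical point, because $\bs$ is generically off $S_2$; here I would invoke \lemref{proj} at $\bs_2 := P_{S_2}(\bs) \in S_2$ on a ball of radius $2\rad$ (large enough to contain $B(\bs, \rad) \cap S_2 \subset B(\bs_2, 2\rad)$, which imposes $\rad$ small enough relative to $1/\kappa$), use \lemref{Lip-vol} on the bi-Lipschitz map $P_{T_2}$ to compare $\vol(S_2 \cap B(\bs, \rad))$ with $\vol(P_{T_2}(S_2 \cap B(\bs, \rad)))$, and then use \lemref{TV} (noting $\|\bs - \bs_2\| \le \rad$) to compare the latter with $\vol(T_2 \cap B(\bs, \rad))$.

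Finally, I assemble the pieces via the triangle inequality. The ``linear'' part of $\bC(\bs) - \bSigma(\bs)$ rearranges as
\[
p(\bC_1 - \bN_1) + (1-p)(\bC_2 - \bN_2) + (p - q)(\bN_1 - \bN_2),
\]
which is $O(\rad^3)$ because $\|\bN_j\| \le \rad^2$, $\|\bC_j - \bN_j\| = O(\kappa \rad^3)$, and $|p - q| = O(\rad)$. For the rank-one contributions, the identity $\bu\bu^\top - \bv\bv^\top = (\bu-\bv)\bu^\top + \bv(\bu-\bv)^\top$ applied to $\bu := \bmu_1 - \bmu_2$ and $\bv := \bxi_1 - \bxi_2$, together with $\|\bu\|, \|\bv\| \le 2\rad$, $\|\bu - \bv\| \le 2\Clref{U-approx}\,\kappa \rad^2$, and $|p(1-p) - q(1-q)| \le |p - q| = O(\rad)$, yields another $O(\rad^3)$ contribution. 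Absorbing all $\kappa$-, $d$-, and geometry-dependent factors into the constant $\Clref{cov-inter}$ produces the claimed bound. The main obstacle is the off-surface weight comparison in the $j = 2$ case, where one has to carefully string \lemref{proj}, \lemref{Lip-vol}, and \lemref{TV} together in a slightly enlarged neighborhood to legitimately account for the fact that $\bs$ does not lie on $S_2$; every other step is a routine triangle-inequality bookkeeping on top of the earlier preliminaries.
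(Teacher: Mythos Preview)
Your proposal is correct and follows essentially the same route as the paper: decompose both covariances via the law of total covariance into a weighted sum of per-surface covariances plus a rank-one cross term, use \lemref{U-approx} on each piece, control the weight difference $|p-q|$ through \lemref{Lip-vol} and \lemref{TV}, and finish with triangle-inequality bookkeeping. The only cosmetic differences are that the paper writes the means on the tangent side explicitly as $\bs$ and $\bs_2$ (rather than abstract $\bxi_j$) and handles both $j=1,2$ volume ratios uniformly via \lemref{Lip-vol}+\lemref{TV} rather than invoking \lemref{ball-vol} for $j=1$; your flagging of the need to work in $B(\bs_2,2\rad)$ for the $j=2$ case is in fact a point the paper glosses over.
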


\begin{proof}
Below $C$ denotes a positive constant depending only on $S_1$ and $S_2$ that increases with each appearance.
We note that it is enough to prove the result when $\rad$ is small enough.
Take $\bs \in S_1$ such that $\delta := \dist(\bs, S_2) \le \rad$ and let $\bs_2 = P_{S_2}(\bs)$ --- note that $\|\bs - \bs_2\| = \delta$.
Let $B_\rad$ be short for $B(\bs, \rad)$ and define $A_k = B_\rad \cap S_k$, $\bmu_k = \E(\lambda_{A_k})$ and $\bD_k = \Cov(\lambda_{A_k})$, for $k=1,2$.
As in \eqref{cov-mixture}, we have
\[
\bC(\bs) = \alpha \bD_1 + (1-\alpha) \bD_2 + \alpha(1-\alpha) (\bmu_1 -\bmu_2)(\bmu_1 -\bmu_2)^\top,
\]
where
\[
\alpha := \frac{\vol(A_1)}{\vol(A_1) + \vol(A_2)}.
\]
Let $T_1 = T_{S_1}(\bs)$ and $T_2 = T_{S_2}(\bs_2)$, and define $A_k' = B_\rad \cap T_k$, so that $B_\rad \cap (T_1 \cup T_2) = A_1' \cup A_2'$.
Note that $\E(\lambda_{A_1'}) = \bs$ and $\E(\lambda_{A_2'}) = \bs_2$, and by \lemref{unif-cov}, $\bD_1' := \Cov(\lambda_{A_1'}) = c \rad^2 P_{T_1}$ and $\bD_2' := \Cov(\lambda_{A_2'}) = c (\rad^2 -\delta^2)P_{T_2}$, where $c := 1/(d+2)$.
As in \eqref{Sigma}, we have
\[
\bSigma(\bs) = \alpha' \bD_1' + (1- \alpha') \bD_2' + \alpha' (1-\alpha') (\bs -\bs_2)(\bs -\bs_2)^\top,
\]
where
\[
\alpha' := \frac{\vol(A_1')}{\vol(A_1') + \vol(A_2')}.
\]
Since $|\alpha' (1-\alpha') - \alpha (1-\alpha) | \le |\alpha' - \alpha|$, we have
\beqn
\|\bC(\bs) - \bSigma(\bs)\| &\le& |\alpha' - \alpha| \big(\|\bD_1'\| + \|\bD_2'\| + \|\bs -\bs_2\|^2 \big) \\
&& + \ \alpha \|\bD_1 - \bD_1'\| + (1- \alpha) \|\bD_2 - \bD_2'\| + \alpha (1-\alpha) 4 \rad \big(\|\bmu_1 - \bs\| + \|\bmu_2 - \bs_2\|\big) \\[.05in]
&\le& (2 c +1) \rad^2 |\alpha' - \alpha| \\
&& + \ \|\bD_1 - \bD_1'\| \vee \|\bD_2 - \bD_2'\| + 2 \rad \big(\|\bmu_1 - \bs\| \vee \|\bmu_2 - \bs_2\|\big),
\eeqn
using the triangle inequality multiple times, and in the first inequality we used the fact that
\[
\|\bv \bv^\top - \bw \bw^\top\| \le \|(\bv -\bw) \bv^\top\| + \|\bw (\bv - \bw)^\top\| \le (\|\bv\| + \|\bw\|) \|\bv - \bw\|,
\]
for any two vectors $\bv, \bw \in \bbR^D$.
Assuming that $\kappa \rad \le 1/\Clref{U-approx}$, by \lemref{U-approx}, we have $\|\bmu_1 - \bs\| \vee \|\bmu_2 - \bs_2\| \le \Clref{U-approx} \kappa \rad^{2}$ and $\|\bD_1 - \bD_1'\| \vee \|\bD_2 - \bD_2'\| \le \Clref{U-approx} \kappa \rad^{3}$.
Assuming that $\kappa \rad \le 1/3$, $P_{T_k}^{-1}$ is well-defined and $(1+\kappa \rad)$-Lipschitz on $S_k \cap B_\rad$.
And being an orthogonal projection, $P_{T_k}$ is 1-Lipschitz .
Hence, applying \lemref{Lip-vol}, we have
\[
1 \le \frac{\vol(A_k)}{\vol(P_{T_k}(A_k))} \le 1+\kappa \rad, \quad k = 1,2.
\]
Then by \lemref{TV},
\[
1 - \Clref{TV} \kappa \rad \le \frac{\vol(P_{T_k}(A_k))}{\vol(A_k')} \le 1 + \Clref{TV} \kappa \rad, \quad k = 1,2.
\]
So we get
\[
1 - C \rad \le \frac{\vol(A_k)}{\vol(A_k')} \le 1 + C \rad, \quad k = 1,2.
\]
Since for all $a,b,a',b' > 0$ we have
\begin{eqnarray}
\left| \frac{a}{a+b} - \frac{a'}{a'+b'} \right| &\le& \frac{|a - a'| \vee |b - b'|}{(a+b) \vee (a'+b')} \label{2frac} \\
&\le& \big|1 - a/a'| \vee |1 - b/b'|, \notag
\end{eqnarray}
we get
\[
|\alpha - \alpha'| \le C \rad.
\]
Hence,
\[\|\bC(\bs) - \bSigma(\bs)\| \le C \rad^{3},\]
so we are done with the proof.
\end{proof}

\subsection{Performance guarantees for \algref{cov}}

We deal with the case where there is no noise, that is, $\tau = 0$ in \eqref{data-point}, so that the data points are $\bs_1, \dots, \bs_N$, sampled exactly on $S_1 \cup S_2$ according to the uniform distribution.
We explain how things change when there is noise, meaning $\tau > 0$, in \secref{noise}.

Let $\Xi_i = \{j \neq i: \bs_j \in N_\rad(\bs_i)\}$, with (random) cardinality $N_i = |\Xi_i|$.
When there is no noise, $\bC_i$ is the sample covariance of $\{\bs_j : j \in \Xi_i\}$.
For $i \in [n]$, let $K_i = 1$ if $\bs_i \in S_1$ and $=2$ otherwise, and let $T_i = T_{S_{K_i}}(\bs_i)$, which is the tangent subspace associated with data point $\bs_i$.  Given $N_i$, $\{\bs_j : j \in \Xi_i\}$ are uniformly distributed on $S_{K_i} \cap B(\bs_i, \rad)$, and applying \lemref{cov} with rescaling, we get that for any $t > 0$
\[
\pr{\|\bC_i - \E \bC_i\| > \rad^2 t \, \big| \, N_i} \leq 4 d \exp\left(-\frac{N_i t}{\Clref{cov}} \min\big(t, \frac{N_i}d\big) \right),
\]
for an absolute constant $\Clref{cov} \ge 1$.
We may assume that $\rad < 1/(\Clref{N-size} \kappa)$ and let $n_\star := n \rad^d/\Clref{N-size}$.
We assume throughout that $\rad$ is large enough that $n_\star \ge d$, for otherwise the result is void since the probability lower bound stated in \thmref{main} is negative.
Using \lemref{N-size}, for any $t < 1$,
\beqn
\pr{\|\bC_i - \E \bC_i\| > \rad^2 t} &\leq& \pr{\|\bC_i - \E \bC_i\| > \rad^2 t \, \big| \, N_i \ge n_\star} + \pr{N_i < n_\star} \\
&\leq& 4 d \exp(- n_\star t^2/\Clref{cov}) + \Clref{N-size} n \exp(- n_\star) \\
&\le& (4 d + \Clref{N-size}) n \exp(- n_\star t^2/\Clref{cov}).
\eeqn

Define $\bSigma_i$ as the covariance of the uniform distribution on $T_i \cap B(\bs_i, \rad)$.  Let
\[
I_\star = \{i: K_j = K_i, \, \forall j \in \Xi_i\},
\]
or equivalently,
\[
I_\star^c = \{i: \exists j \text{ s.t. } K_j \ne K_i \text{ and } \|\bs_j - \bs_i\| \le \rad\}.
\]
By definition, $I_\star$ indexes the points whose neighborhoods do not contain points from the other cluster.
Applying \lemref{U-approx}, this leads to
\beq \label{cov-approx}
\|\E \bC_i - \bSigma_i\| \le \Clref{U-approx} \kappa \rad^3, \quad \forall i \in I_\star.
\eeq

Define the events
\[
\Omega_1 = \bigcup_{k = 1}^2 \big\{\forall \bs \in S_k: \, \# \{i: K_i = k \text{ and } \bs_i \in B(\bs, \rad/C_\Omega)\} > n_\star \big\},
\]
where $C_\Omega := 100 d^2 \Clref{intersect}^2$, and
\[
\Omega_2 = \left\{\|\bC_i - \E \bC_i\| \le \rad^2 t, \text{ for all } i \in [n]\right\},
\]
and their intersection $\Omega = \Omega_1 \cap \Omega_2$, where $t < 1$ will be determined later.  Note that, under $\Omega_1$, $N_i \ge n_\star$.
Applying the union bound,
\beqn
\P(\Omega^c) &\le& \P(\Omega_1^c) + \P(\Omega_2^c) \\
&\le& \Clref{N-size} n \exp(- n_\star) + n (4 d + \Clref{N-size}) \exp(- n_\star t^2/\Clref{cov}) \\
&\le& p_\Omega := (4 d + 2 \Clref{N-size}) n \exp(- n_\star t^2/\Clref{cov}).
\eeqn
Assuming that $\Omega$ holds, by the triangle inequality, \eqref{cov-noise} and \eqref{cov-approx}, we have
\beq \label{keybound}
\|\bC_i - \bSigma_i\| \le \|\bC_i - \E \bC_i\| + \|\E \bC_i - \bSigma_i\| \le \zeta \rad^2, \quad \forall i \in I_\star,
\eeq
where
\beq \label{zeta}
\zeta := t + \Clref{U-approx} \kappa \rad.
\eeq

The inequality \eqref{keybound} leads, via the triangle inequality, to the decisive bound
\beq \label{decisive}
\|\bC_i - \bC_j\| \le \|\bSigma_i - \bSigma_j\| + 2\zeta \rad^2, \qquad \forall i, j \in I_\star.
\eeq

Take $i, j \in I_\star$ such that $K_i = K_j$ and $\|\bs_i -\bs_j\| \le \eps$.  Then by \lemref{unif-cov} and \lemref{P-diff}, property \eqref{T-diff} and the fact that $\sin(2\theta) \le 2 \sin \theta$ for all $\theta$, and the triangle inequality, we have
\beq \label{eta-choice1}
\frac1{c \rad^2} \|\bSigma_i - \bSigma_j\| = \sin \thetamax(T_i, T_j) \le 2 \kappa \|\bs_i - \bs_j\| \le 2 \kappa\eps,
\eeq
where $c := 1/(d+2)$.
This implies that
\beq \label{eta-choice1S}
\frac1{\rad^2} \|\bC_i - \bC_j\| \le 2 c \kappa \eps + 2 \zeta.
\eeq
Therefore,  if $\eta > 2 c \kappa \eps + 2\zeta$, then any pair of points indexed by $i, j \in I_\star$ from the same cluster and within distance $\eps$ are direct neighbors in the graph built by \algref{cov}.

Take $i, j \in I_\star$ such that $K_i \ne K_j$ and $\|\bs_i -\bs_j\| \le \eps$.  By \lemref{sep},
\[
\max\big[ \dist(\bs_i, S_1 \cap S_2), \dist(\bs_j, S_1 \cap S_2) \big] \le \Clref{sep} \|\bs_i -\bs_j\|.
\]
Let $\bz$ be the mid-point of $\bs_i$ and $\bs_j$.  By convexity and the display above,
\[
\dist(\bz, S_1 \cap S_2) \le \frac12 \dist(\bs_i, S_1 \cap S_2) + \frac12 \dist(\bs_j, S_1 \cap S_2) \le \Clref{sep} \eps.
\]
Assuming $\Clref{sep} \eps < 1/\kappa$, let $\bs = P_{S_1 \cap S_2} (\bz)$.  Then, by the triangle inequality again,
\[
\max\big[ \|\bs - \bs_i\| , \|\bs - \bs_j\| \big] \le \dist(\bz, S_1 \cap S_2) + \frac12 \|\bs_i - \bs_j\| \le \Clref{sep} \eps + \frac12 \eps \le (\Clref{sep}+1) \eps.
\]
Let $T'_i$ denote the tangent subspace of $S_{K_i}$ at $\bs$ and let $\bSigma'_i$ be the covariance of the uniform distribution over $T'_i \cap B(\bs, \rad)$.  Define $T'_j$ and $\bSigma'_j$ similarly.  Then, as in \eqref{eta-choice1} we have
\[
\frac1{c \rad^2} \|\bSigma_i - \bSigma'_i\| \le \kappa \|\bs_i - \bs\| \le \kappa (\Clref{sep}+1) \eps,
\]
and similarly,
\[
\frac1{c \rad^2} \|\bSigma_j - \bSigma'_j\| \le \kappa (\Clref{sep}+1) \eps.
\]
\def\thetaS{\theta_{\scriptscriptstyle S}}
Moreover, by \lemref{unif-cov} and \lemref{P-diff},
\[
\frac1{c \rad^2} \|\bSigma'_i - \bSigma'_j\| = \sin \thetamax(T'_i, T'_j) \ge \sin \thetaS,
\]
where $\thetaS$ is short for $\theta(S_1, S_2)$.
Hence, by the triangle inequality,
\beq \label{eta-choice2}
\frac1{c \rad^2} \|\bSigma_i - \bSigma_j\| \ge \sin \thetaS - 2 \kappa (\Clref{sep}+1) \eps,
\eeq
and then
\beq \label{eta-choice2S}
\frac1{\rad^2} \|\bC_i - \bC_j\| \ge c \sin \thetaS - 2c \kappa (\Clref{sep}+1) \eps - 2 \zeta.
\eeq
Therefore, if $\eta < c \sin \thetaS - 2c \kappa (\Clref{sep}+1) \eps - 2 \zeta$, then any pair of points indexed by $i, j \in I_\star$ from different clusters are {\em not} direct neighbors in the graph built by \algref{cov}.

In summary, we would like to choose $\eta$ such that
\[
2 c \kappa \eps + 2\zeta < \eta < c\sin \thetaS - 2 c\kappa (\Clref{sep}+1) \eps - 2 \zeta.
\]
This holds when
\[
2 c \kappa \eps + 2\zeta < \eta < \frac{c \sin \thetaS}{\Clref{sep}+2},
\]
which is true when
\beq \label{choice}
\eps < \frac{(d+2) \eta}{6 \kappa}, \quad t \le \frac\eta6, \quad \rad \le \frac\eta{6\Clref{U-approx} \kappa}, \quad \eta < \frac{\sin \thetaS}{(\Clref{sep}+2)(d+2)},
\eeq
using the definition of $\zeta$ in \eqref{zeta} and that of $c = 1/(d+2)$.
We choose $t = \eta/6$ and get that $\P(\Omega^c) \le C n \exp(- n \rad^d \eta^2 / C)$, where $C$ depends only on $d$ and $\Clref{N-size}$.

\subsubsection{The different clusters are in different connected components} \label{sec:different-cc}
We show that Step~3 in \algref{cov} eliminates all points $i \notin I_\star$, implying by our choice of parameters in \eqref{choice} that after that step the two clusters are not connected to each other in the graph.
Hence, take $i \notin I_\star$ with $K_i = 1$ (say), so that $\dist(\bs_i, S_2) \le \rad$.
By \lemref{sep}, we have $\dist(\bs_i, S_1 \cap S_2) \le \Clref{sep} \rad < 1/\kappa$.
Assuming that $(\Clref{sep} + 1) \rad < 1/\kappa$, let $\bs^0 = P_{S_1 \cap S_2}(\bs_i)$ and  define $T^0_k = T_{S_k}(\bs^0)$

Below, $C > 0$ is a constant whose value increases with each appearance.
By \lemref{cov-inter} (and the notation there), for $\bs \in S_1$ such that $\dist(\bs, S_2) \le (\Clref{sep} + 1) \rad$,
\[\|\bC(\bs) - \bSigma(\bs)\| \le C \, \rad^{3}.\]
We now derive another approximation that involves $\bSigma^0(\bs)$, the covariance matrix of the uniform distribution on $B(P_{T_1^0}(\bs), \rad) \cap (T_1^0 \cup T_2^0)$.
For that, we continue with the notation used in the proof of \lemref{cov-inter} until \eqref{2nd-approx} below.
Define $\bt_1 = P_{T_1^0}(\bs)$ and $\bt_2 = P_{T^0_2}(\bt_1)$.
Let $\delta_0 = \|\bt_1 - \bt_2\|$, $\delta_1 = \|\bs - \bt_1\|$, $\delta_2 = \|\bs_2 - \bt_2\|$ and $A_k^0 = T_k^0 \cap B(\bt_1,\rad)$.
By \lemref{S-approx}, we have $\delta_1 \le C \rad^2$ and $\delta_2 \le C \rad^2$, because $\|\bs - \bs^0\| \le C \rad$ by \lemref{sep}, and
\[\|\bs_2 - \bs^0\| \le \|\bs_2 - \bs\| + \|\bs - \bs^0\|.\]
Hence, $|\delta_0 - \delta| \le \delta_1 + \delta_2 \le C \rad^2.$
We assume that $\rad$ is small enough that $C \rad^2 < \rad$, so that $A_1^0 \ne \emptyset$.
Note that $\E(\lambda_{A^0_k}) = \bt_k$ and $\bD_1^0 := \Cov(\lambda_{A_1^0}) = c \rad^2 P_{T_1^0}$, while $\bD_2^0 := \Cov(\lambda_{A_2^0}) = c (\rad^2 -\delta_0^2) P_{T_2^0}$  when $\delta_0 \le \rad$;  otherwise $A_2^0 = \emptyset$.
As in \eqref{Sigma}, we have
\beq \label{C0}
\bSigma^0(\bs) = \alpha^0 \bD_1^0 + (1- \alpha^0) \bD_2^0 + \alpha^0 (1-\alpha^0) (\bt_1 -\bt_2)(\bt_1 -\bt_2)^\top,
\eeq
where
\[
\alpha^0 := \frac{\vol(A_1^0)}{\vol(A_1^0) + \vol(A_2^0)}.
\]
This identity remains valid even when $A_2^0 = \emptyset$.
As in the proof of \lemref{cov-inter}, we have
\[\|\bSigma(\bs) - \bSigma^0(\bs)\| \le (2 c +1) \rad^2 |\alpha' - \alpha^0| \\
+ \ \|\bD_1' - \bD_1^0\| \vee \|\bD_2' - \bD_2^0\| + 2 \rad \big(\|\bt_1 - \bs\| \vee \|\bt_2 - \bs_2\|\big).\]
By the triangle inequality and the fact that $\|P_T\| \le 1$ for any subspace $T$,
\[
\|\bD_1' - \bD_1^0\| \le c \rad^2 \|P_{T_1} - P_{T^0_1}\|,
\]
and
\[
\|\bD_2' - \bD_2^0\| \le c \rad^2 \|P_{T_2} - P_{T_2^0}\| + c |\delta^2 - {\delta_0}^2|.
\]
By \lemref{T-diff} and \lemref{P-diff}, we have
\[
\|P_{T_1} - P_{T_1^0}\| \le \kappa \|\bs - \bs^0\| \le C \rad, \qquad \|P_{T_2} - P_{T_2^0}\| \le \kappa \|\bs_2 - \bs^0\| \le C \rad.
\]
And since $|\delta^2 - {\delta_0}^2| \le 2 \rad |\delta - {\delta_0}| \le C \rad^3$, we have
$\|\bD_k' - \bD_k^0\| \le C \rad^3$ for $k=1,2$.
Let $\omega_d$ denote the volume of the $d$-dimensional unit ball.
Then
\[
\vol(A_1') = \omega_d \rad^d, \quad \vol(A_2') = \omega_d (\rad^2 -\delta^2)^{d/2},
\quad \vol(A_1^0) = \omega_d \rad^2, \quad \vol(A_2^0) = \omega_d (\rad^2 -{\delta_0}^2)_+^{d/2},
\]
so that
\beqn
|\alpha' - \alpha^0|
&=& \big| \frac1{1 + (1 - \delta/\rad)^{d/2}} - \frac1{1 + (1 - \delta_0/\rad)_+^{d/2}}\big| \\
&\le& \big|(1 - \delta/\rad)^{d/2} - (1 - \delta_0/\rad)_+^{d/2}\big|.
\eeqn
Proceeding exactly as when we bounded $\zeta$ in the proof of \lemref{cov-cont}, we get
\[
|\alpha' - \alpha^0| \le d \sqrt{|\delta - \delta_0|/\rad} \le C \sqrt{\rad}.
\]
Hence, we proved that
\[\|\bSigma(\bs) - \bSigma^0(\bs)\| \le C\rad^{5/2}.\]
We conclude with the triangle inequality that
\beq \label{2nd-approx}
\|\bC(\bs) - \bSigma^0(\bs)\| \le \|\bC(\bs) - \bSigma(\bs)\| + \|\bSigma(\bs) - \bSigma^0(\bs)\| \le C \rad^{5/2}.
\eeq
Again, this holds for any $\bs \in S_1$ such that $\|\bs-\bs^0\| \le (\Clref{sep} + 1) \rad$.

Assuming that $\bs_i \ne \bs^0$ (which is true with probability one) and $\bs^0 = 0$, let $h = \|\bs_i - \bs^0\|$ and $\bv = (\bs_i - \bs^0)/h$.
Note that $\bs_i = h \bv$.
Because $\bv \perp T_1^0 \cap T_2^0$, and that $\thetamin(T_1^0, T_2^0) \ge \thetaS$, we apply \lemref{intersect} with scaling to find $\ell \in h \pm \rad/2$ such that $\|\bSigma^0(\ell \bv) - \bSigma^0(h \bv)\| \ge \rad^2 \Clref{intersect}$, where $\Clref{intersect} > 0$ depends only on $\thetaS$ and $d$.
Letting $\tilde{\bs} = \ell \bv$, we have $\|\tilde{\bs} - \bs_i\| = |h -\ell| \le \rad/2$, so that
\[\dist(\tilde{\bs}, S_1 \cap S_2) \le \dist(\bs_i, S_1 \cap S_2) + \rad/2 < (\Clref{sep} + 1/2) \rad < 1/\kappa,\]
and consequently, $P_{S_1 \cap S_2}(\tilde{\bs}) = \bs^0$, by \citep[Th 4.8(12)]{MR0110078}.
Hence, by the triangle inequality,
\beqn
\|\bC(\bs_i) - \bC(\tilde{\bs})\|
&\ge& \|\bSigma^0(\bs) - \bSigma^0(\tilde{\bs})\| - \|\bC(\bs_i) - \bSigma^0(\bs_i)\| - \|\bC(\tilde{\bs}) - \bSigma^0(\tilde{\bs})\| \\
&\ge& \rad^2/\Clref{intersect} - 2 C \rad^{5/2}.
\eeqn
We apply the same arguments but now coupled with \lemref{cov-cont} after applying a proper scaling, to get
\beqn
\|\bC(\bar{\bs}) - \bC(\tilde{\bs})\|
&\le& \|\bSigma^0(\bar{\bs}) - \bSigma^0(\tilde{\bs})\| + \|\bC(\bar{\bs}) - \bSigma^0(\bar{\bs})\| + \|\bC(\tilde{\bs}) - \bSigma^0(\tilde{\bs})\| \\
&\le& 5d \rad^{3/2} \sqrt{\|\bar{\bs} - \tilde{\bs}\|} + 2 C \rad^{5/2},
\eeqn
for any $\bar{\bs} \in S_1$ such that $\|\bar{\bs} - \tilde{\bs}\| \le \rad/2$, since this implies that $\|\bar{\bs}-\bs^0\| \le (\Clref{sep} + 1) \rad$.
Hence, when $\|\bar{\bs} - \tilde{\bs}\| \le \rad/C_\Omega$ and $\rad^{1/2} \le 1/(16 C \Clref{intersect})$, we have
\[
\|\bC(\bar{\bs}) - \bC(\bs_i)\| \ge \rad^2 \big(1/\Clref{intersect} - 5d \sqrt{1/C_\Omega} - 4 C \rad^{1/2}\big) \ge \rad^2/(4\Clref{intersect}).
\]
Now, under $\Omega$, there is $\bs_j \in S_1 \cap B(\bar{\bs}, \rad/C_\Omega)$, so that $\|\bC_j - \bC_i\| \ge \rad^2/(4\Clref{intersect})$.
Therefore, choosing $\eta$ such that $\eta < 1/(4\Clref{intersect})$, we see that $\|\bC_j - \bC_i\| > \eta$, even though $\|\bs_j - \bs_i\| \le \rad/2 + \rad/C_\Omega \le \rad$.

\subsubsection{Each cluster forms a connected component in the graph}  \label{sec:same-cc}
We show that the points that survive Step~3 and belong to the same cluster form a connected component in the graph, except for possible spurious points near the intersection.
Take, for example, the cluster generated from sampling $S_1$.  The danger is that Step~3 created a ``crevice" within this cluster wide enough to disconnect it.  We show this is not the case.  (Note that the crevice may be made of several disconnected pieces.)
Before we start, we recall that $I_\star^c$ was eliminated in Step~3, so that by our choice of $\eta$ in \eqref{choice}, to show that two points $\bs_i, \bs_j$ sampled from $S_1$ are neighbors it suffices to show that $\|\bs_i - \bs_j\| \le \eps$.

We first bound the width of the crevice.
Let $I_\circ = \{i \in I_\star: \Xi_i \subset I_\star\}$.
By our choice of parameters in \eqref{choice}, we see that $i \in I_\circ$ is neighbor with any $j \in \Xi_i$, so that $i$ survives Step~3.
Hence, the nodes removed at Step~3 are in $I_\ddag := \{i: \Xi_i \cap I_\star^c \ne \emptyset\}$, with the possibility that some nodes in $I_\ddag$ survive.
Now, for any $i \notin I_\star$, there is $j$ with $K_{j} \ne K_i$ such that $\|\bs_i - \bs_j\| \le \rad$, so by \lemref{sep},
\[
\dist(\bs_i, S_1 \cap S_2) \le \Clref{sep} \|\bs_i - \bs_j\| \le \Clref{sep} \rad.
\]
By the triangle inequality, this implies that $\dist(\bs_i, S_1 \cap S_2) \le \rad_1 := (\Clref{sep}+1) \rad$ for all $i \in I_\ddag$.
So the crevice is along $S_1 \cap S_2$ and of width bounded by $\rad_1$.
We will require that $\eps$ is sufficiently larger than $\rad_1$, which intuitive will ensure that the crevice is not wide enough to disconnect the subgraph corresponding to $I_\circ$.
Let $R := S_1 \setminus B(S_1 \cap S_2, \rad_2)$, $\rad_2 = \rad_1 + \rad = (\Clref{sep}+2) \rad$, so that any $\bs_i \in S_1$ such that $\dist(\bs_i, R) < \rad$ survives Step~3.

Take two adjacent connected components of $R$, denoted $R_1$ and $R_2$.
We show that there is at least one pair $j_1, j_2$ of direct neighbors in the graph such that $\bs_{j_m} \in R_m$.
Take $\bs$ on the connected component of $S_1 \cap S_2$ separating $R_1$ and $R_2$.
Let $T^k = T_{S_k}(\bs)$ and let $H$ be the affine subspace parallel to $(T^1 \cap T^2)^\perp$ passing through $\bs$.
Take $\bt^m \in P_{T^1}(R_m) \cap H \cap \partial B(\bs, \eps_1)$, where $\eps_1 := \eps/2$, and define $\bs^m = P_{T^1}^{-1}(\bt^m)$.
Note that here $\bt^1, \bt^2 \in T^1$ and $\bs^1, \bs^2 \in S_1$, and by \citep[Th 4.8(12)]{MR0110078}, $P_{S_1 \cap S_2}(\bt^m) = \bs$.
\lemref{proj} not only justifies this construction when $\kappa \eps_1 < 1/3$, it also says that $P_{T^1}^{-1}$ has Lipschitz constant bounded by $1+\kappa \eps_1$, which implies that
\[\|\bs^m - \bs\| \le (1 + \kappa \eps_1) \|\bt^m -\bs\| = (1 + \kappa \eps_1) \eps_1 \le \eps/3,\]
when $\eps$ is sufficiently small.
We also have
\beqn
\dist(\bs^m, S_1 \cap S_2) &\ge& \dist(\bt^m, S_1 \cap S_2) - \|\bs^m - \bt^m\| \\
&=& \|\bt^m - \bs\| - \|\bs^m - \bt^m\| \\
&\ge& \eps_1 - \frac\kappa2 \|\bs^m - \bs\|^2 \\
&\ge& \big(1 - \frac\kappa2(1 + \kappa \eps_1)^2 \eps_1\big) \eps_1 \\
&\ge& \eps/3,
\eeqn
when $\eps$ is sufficiently small.
We used \eqref{S-approx1} in the second inequality.
We assume $\rad/\eps$ is sufficiently small that $\eps/3 \ge \rad_2 + \rad$,
Then under $\Omega_1$, there are $j_1, j_2$ such that $\bs_{j_m} \in B(\bs^m, \rad) \cap S_1$.  By the triangle inequality, we then have that $\dist(\bs_{j_m}, S_1 \cap S_2) \ge \eps/3 - \rad \ge \rad_2$, so that $\bs_{j_m} \in R_m$, and
\beqn
\|\bs_{j_1} - \bs_{j_2}\| &\le& \|\bs_{j_1} - \bs^1\| + \|\bs^1 - \bs\| + \|\bs - \bs^2\| + \|\bs^2 - \bs_{j_2}\| \\
&\le& \rad + \eps/3 + \eps/3 + \rad \\
&=& \frac23 \eps + 2 \rad \le \eps,
\eeqn
because $6 \rad \le 3 (\rad + \rad_1) \le \eps$ by assumption.

Now, we show that the points sampled from $R_1$ form a connected subgraph.  ($R_1$ is any connected component of $R$.)
Take $\bs^1, \dots, \bs^M$ an $\rad$-packing of $R_1$, so that
\[\bigcup_m (R_1 \cap B(\bs^m, \rad/2)) \subset R_1 \subset \bigcup_m (R_1 \cap B(\bs^m, \rad)).\]
Because $R_1$ is connected, $\cup_m B(\bs^m, \rad)$ is necessarily connected.  Under $\Omega_1$, and $\Clref{cov-inter} \ge 2$, all the balls $B(\bs^m, \rad), m=1, \dots, M,$ contain at least one $\bs_i \in S_1$, and any such point survives Step~3 since $\dist(\bs_i, R_1) < \rad$ by the triangle inequality.
Two points $\bs_i$ and $\bs_j$ such that $\bs_i,\bs_j \in B(\bs^m, \rad)$ are connected, since $\|\bs_i - \bs_j\| \le 2 \rad \le \eps$.
And when $B(\bs^{m_1}, \rad) \cap B(\bs^{m_2}, \rad) \ne \emptyset$, $\bs_i \in B(\bs^{m_1}, \rad)$ and $\bs_j \in B(\bs^{m_2}, \rad)$ are such that $\|\bs_i - \bs_j\| \le 4 \rad \le \eps$.
Hence, the points sampled from $R_1$ are connected in the graph under $\Omega_1$.

%
We conclude that the nodes corresponding to $R$ that survive Step~3 are connected in the graph.

\subsubsection{Choice of parameters}
Aside from the constraints displayed in \eqref{choice}, we assumed in addition that
\[
\rad < \frac1{\Clref{N-size} \kappa}, \quad \rad < \frac1{\Clref{sep} \kappa + 1}, \quad \eta < \frac1{4\Clref{intersect}}, \quad 3 (\Clref{sep}+3) \rad \le \eps, \quad \eps < \frac7{8 \kappa},
\]
and that $\eps$ was sufficiently small.
Therefore, it suffices to choose the parameters as in \eqref{main} with a large-enough constant.


\subsection{Performance guarantees for \algref{proj}}

We keep the same notation and go a little faster here as the arguments are parallel.
Let $d_i$ denote the estimated dimensionality at point $\bs_i$, meaning the number of eigenvalues of $\bC_i$ exceeding $\sqrt{\eta} \, \|\bC_i\|$.
Recall that $\bQ_i$ denotes the orthogonal projection onto the top $d_i$ eigenvectors of $\bC_i$.
The arguments hinge on showing that, under $\Omega$, $d_i = d$ for all $i \in I_\star$ and that $d_i > d$ for $i$ such that $\dist(\bs_i, S_1 \cap S_2) \le \rad/C$, for some constant $C >0$.

Take $i\in I_\star$.
Under $\Omega$, \eqref{keybound} holds, and applying Weyl's inequality \citep[Cor.~IV.4.9]{MR1061154}, we have
\[
|\beta_m(\bC_i) - \beta_m(\bSigma_i) | \le \zeta \rad^2, \quad \forall m =1,\dots, D.
\]
By \lemref{unif-cov}, $\bSigma_i = c \rad^2 P_{T_i}$, so that $\beta_m(\bSigma_i) = c \rad^2$ when $m \le d$ and $\beta_m(\bSigma_i) = 0$ when $m > d$.
Hence,
\[
\beta_1(\bC_i) \le (c +\zeta) \rad^2, \qquad \beta_d(\bC_i) \ge (c -\zeta) \rad^2, \qquad \beta_{d+1}(\bC_i) \le \zeta \rad^2.
\]
This implies that
\[
\frac{\beta_d(\bC_i)}{\beta_1(\bC_i)} \ge \frac{c-\zeta}{c+\zeta} > \sqrt{\eta}, \qquad \frac{\beta_{d+1}(\bC_i)}{\beta_1(\bC_i)} \le \frac{\zeta}{c+\zeta} < \sqrt{\eta},
\]
when $\zeta \le \eta/2$ as in \eqref{choice} and $\eta$ is sufficiently small.
When this is so, $d_i = d$ by definition of $d_i$.

Note that the top $d$ eigenvectors of $\bSigma_i$ generate $T_i$.
Hence, applying the Davis-Kahan theorem, stated in \lemref{davis}, and \eqref{keybound} again, we get that
\[
\|\bQ_i - P_{T_i}\| \le \frac{\sqrt{2} \, \zeta \rad^2}{c \rad^2} = \zeta' := \sqrt{2} (d+2) \zeta, \quad \forall i \in I_\star.
\]
This is the equivalent of \eqref{keybound}, which leads to the equivalent of \eqref{decisive}:
\[
\|\bQ_i - \bQ_j\| \le \frac1{c\rad^2} \|\bSigma_i - \bSigma_j\| + 2\zeta', \qquad \forall i,j \in I_\star,
\]
using the fact that $\bSigma_i = c \rad^2 P_{T_i}$.
When $i,j \in I_\star$ are such that $K_i = K_j$, based on \eqref{eta-choice1}, we have
\[
\|\bQ_i - \bQ_j\| \le 2 \kappa \eps + 2\zeta'.
\]
Hence, when $\eta > 2 \kappa \eps + 2\zeta'$, two nodes $i, j \in I_\star$ such that $K_i = K_j$ and $\|\bs_i - \bs_j\| \le \eps$ are neighbors in the graph.
The arguments provided in \secref{same-cc} now apply in exactly the same way to show that nodes $i \in I_\star$ such that $K_i = 1$ belong to a single connected component in the graph, except for possible nodes near the intersection.
The same is true of nodes $i \in I_\star$ such that $K_i = 2$.

Therefore, it remains to show that these two sets of nodes are not connected.
When we take $i, j \in I_\star$ such that $K_i \ne K_j$, we have the equivalent of \eqref{eta-choice2S}, meaning,
\[
\|\bQ_i - \bQ_j\| \ge \sin \thetaS - 2 \kappa (\Clref{sep}+1) \eps - 2 \zeta'.
\]
We choose $\eta$ smaller than the RHS, so that these nodes are not neighbors in the graph.

We next prove that a node $i \in I_\star$ is not neighbor to a node near the intersection because of different estimates for the local dimension.
Take $\bs \in S_1$ such that $\delta(\bs) := \dist(\bs, S_2) < \rad$.
We apply \lemref{cov-inter} and use the notation there until \eqref{delta} below, with the exception that we use $\bs^2 = P_{S_2}(\bs)$ and $T_{2(1)}$ to denote $T_{S_2}(\bs^2)$.
Together with Weyl's inequality, we have
\[
\beta_{d+1}(\bC(\bs)) \ge \beta_{d+1}(\bSigma(\bs)) -  \Clref{cov-inter} \rad^{3}, \qquad \beta_{1}(\bC(\bs)) \le \beta_1(\bSigma(\bs)) +  \Clref{cov-inter} \rad^{3},
\]
which together with \lemref{eigen-inter} (and proper scaling), implies that
\[
\frac{\beta_{d+1}(\bC(\bs))}{\beta_1(\bC(\bs))} \ge \frac{\frac{c}{8} (1-\cos \thetamax(T_1,T_{2(1)}))^2 (1 - (\delta(\bs)/\rad)^2)_+^{d/2+1} -\Clref{cov-inter} \rad^{3}}{c + (\delta(\bs)/\rad) (1 - (\delta(\bs)/\rad)^2)_+^{d/2} + \Clref{cov-inter} \rad^{3}}.
\]
Define $\bs^0$, $T_1^0$ and $T_2^0$ as in \secref{different-cc}.
Then, by the triangle inequality,
\[
\thetamax(T_1,T_{2(1)}) \ge \thetamax(T_1^0,T_2^0) - \thetamax(T_1,T_1^0) -\thetamax(T_{2(1)},T_2^0).
\]
By definition, $\thetamax(T_1^0,T_2^0) \ge \thetaS$, and by \lemref{T-diff},
\[
\thetamax(T_1,T_1^0) \le 2 \asin\left(1 \wedge \frac\kappa2 \|\bs - \bs^0\|\right) \le C \rad,
\]
and similarly,
\[
\thetamax(T_{2(1)},T_2^0) \le 2 \asin\left(1 \wedge \frac\kappa2 \|\bs^2 - \bs^0\| \right) \le C \rad,
\]
because $\|\bs - \bs^0\| \le C \rad$ and $\|\bs^2 - \bs^0\| \le \rad + \|\bs - \bs^0\| \le C \rad$.
Hence, for $\rad$ small enough, $\thetamax(T_1,T_{2(1)}) \ge \thetaS/2$, and furthermore,
\beq \label{delta}
\frac{\beta_{d+1}(\bC(\bs))}{\beta_1(\bC(\bs))} \ge \sqrt{\eta} \quad \text{ when } \quad 1 - \frac{\delta(\bs)^2}{\rad^2} \ge \xi^{2/(d+2)},
\eeq
where
\[
\xi := \frac{9 (1 + 1/c) \sqrt{\eta}}{(1-\cos (\thetaS/2))^2},
\]
by the fact that $\eta \ge \rad$ in \eqref{choice}.
The same is true for points on $\bs \in S_2$ if we redefine $\delta(\bs) = \dist(\bs, S_1)$.
Hence, for $\bs_i$ close enough to the intersection that $\delta(\bs_i)$ satisfies \eqref{delta}, $d_i > d$.
Then, by \lemref{P-diff}, $\|\bQ_i - \bQ_j\| = 1$ for any $j \in I_\star$.
By our choice of $\eta < 1$, this means that $i$ and $j$ are not neighbors.

So the only way $\{i \in I_\star : K_i = 1\}$ and $\{i \in I_\star : K_i = 2\}$ are connected in the graph is if there are $\bs_i \in S_1$ and $\bs_j \in S_2$ such that $\|\bs_i - \bs_j\| \le \eps$ and both $\delta(\bs_i)$ and $\delta(\bs_j)$ fail to satisfy \eqref{delta}.
We now show this is not possible.
By \lemref{cov-inter}, we have
\[
\|\bC_i - \bSigma_i\| \le \Clref{cov-inter} \, \rad^{3}.
\]
By \eqref{Sigma} (and using the corresponding notation) and the triangle inequality
\beqn
\|\bSigma_i - \alpha_i c \rad^2 P_{T_i}\| &\le& c (1-\alpha_i) t_i^2 \rad^2 + \alpha_i (1-\alpha_i) \delta^2(\bs_i) \le 2 (1 - \alpha_i) \rad^2 \\
&\le& 2 (1 - (\delta(\bs_i)/\rad)^2)_+^{d/2} \rad^2  \le 2 \xi^{d/(d+2)} \rad^2 ,
\eeqn
where the very last inequality comes from $\delta(\bs_i)$ not satisfying \eqref{delta}.
Hence,
\[
\|\bC_i - \alpha_i c \rad^2 P_{T_i}\| \le 2 \xi^{d/(d+2)} \rad^2 + \Clref{cov-inter} \, \rad^{3},
\]
and since the $\beta_{d+1}(P_{T_i}) = 0$, by the Davis-Kahan theorem, we have
\[
\|\bQ_i - P_{T_i}\| \le \frac1{\alpha_i c \rad^2} \big[\xi^{d/(d+2)} \rad^2 + \Clref{cov-inter} \, \rad^{3} \big] \le C (\xi^{d/(d+2)} + \rad),
\]
and similarly,
\[
\|\bQ_j - P_{T_j}\| \le C (\xi^{d/(d+2)} + \rad).
\]
By \lemref{P-diff}, $\|P_{T_i} - P_{T_j}\| = \sin \thetamax(T_i, T_j)$.
Let $\bs^0 = P_{S_1 \cap S_2}(\bs_i)$, and define $T_1^0$ and $T_2^0$ as before.
We have
\[
\thetamax(T_i, T_j) \ge \thetamax(T_1^0, T_2^0) - \thetamax(T_i, T_1^0) - \thetamax(T_j, T_2^0) \ge \thetaS - C \eps,
\]
calling in \lemref{T-diff} as before, coupled with the fact that $\|\bs_i - \bs^0\| \le C \eps$ and $\|\bs_j - \bs^0\| \le C \eps$, since $\dist(\bs_i, S_2) \le \|\bs_i - \bs_j\| \le \eps$ and \lemref{sep} applies, and then $\|\bs_j - \bs^0\| \le \|\bs_i - \bs^0\| + \|\bs_j - \bs_i\|$.
Hence, assuming $\eps$ is small enough,
\beqn
\|\bQ_i - \bQ_j\|
&\ge& \|P_{T_i} - P_{T_j}\| -\|\bQ_i - P_{T_i}\| - \|\bQ_j - P_{T_j}\| \\
&\ge& \sin(\thetaS/2) -C (\xi^{d/(d+2)} + \rad) > \eta,
\eeqn
when $\rad$ and $\eta$ (and therefore $\xi$) are small enough.
Therefore $i$ and $j$ are not neighbors, as we needed to show.

We conclude by remarking that, by choosing $C$ large enough in \eqref{main}, the resulting choice of parameters fits all our (often implicit) requirements.

\subsection{Noisy case} \label{sec:noise}

So far we only dealt with the case where $\tau = 0$ in \eqref{data-point}.
When $\tau > 0$, a sample point $\bx_i$ is in general different than its corresponding point $\bs_i$ sampled from one of the surfaces.
However, when $\tau/\rad$ is small, this does not change things much.
For one thing, the points are close to each other, since we have $\|\bx_i - \bs_i\| \le \tau$ by assumption, and $\tau$ is small compared to $\rad$.
And the corresponding covariance matrices are also close to each other.
To see this, redefine $\Xi_i = \{j \neq i: \bx_j \in N_\rad(\bx_i)\}$ and $\bC_i$ as the sample covariance of $\{\bx_j: j \in \Xi_i\}$.
Let $\bD_i$ denote the sample covariance of $\{\bs_j: j \in \Xi_i\}$.
Let $X$ be uniform over $\{\bx_j : j \in \Xi_i\}$ and define $Y = \sum_j \bs_j \1_{\{X = \bx_j\}}$.  As in \eqref{CovXY}, we have
\begin{eqnarray}
\|\bD_i - \bC_i\| &=& \|\Cov(X) - \Cov(Y)\| \notag \\
&\le& \E\big[\|X - Y\|^2\big]^{1/2} \cdot \left(\E\big[\|X - \bx_i\|^2\big]^{1/2}  + \E\big[\|Y - \bx_i\|^2\big]^{1/2}\right) \notag \\
&\le& \tau \cdot (\rad + \rad + \tau) = \rad^2 \big(2 \tau/\rad + (\tau/\rad)^2\big),  \label{cov-noise}
\end{eqnarray}
which is small compared to $\rad^2$, which is the operating scale for covariance matrices in our setting.

Using these facts, the arguments are virtually the same, except for some additional terms due to triangle inequalities, for example, $\|\bs_i - \bs_j\| - 2 \tau \le \|\bx_i - \bx_j\| \le \|\bs_i - \bs_j\| + 2 \tau$.
In particular, this results in $\zeta$ in \eqref{zeta} being now redefined as $\zeta = \frac{3\tau}\rad + t + \Clref{U-approx} \kappa \rad$.
We omit further technical details.

\subsection*{Acknowledgements}

We would like to thank Jan Rataj for hints leading to \lemref{T-diff}, which is much sharper than what we knew from \citep{1349695}.
We would also like to acknowledge support from the Institute for Mathematics and its Applications (IMA).
For one thing, the authors first learned about the research of \cite{goldberg2009multi} there, at the {\em Multi-Manifold Data Modeling and Applications} workshop in the Fall of 2008, and this was the main inspiration for our paper.
Also, part of our work was performed while TZ was a postdoctoral fellow at the IMA, and also while EAC and GL were visiting the IMA.
This work was partially supported by grants from the National Science Foundation (DMS-09-15160, DMS-09-15064, DMS-09-56072).
\bibliography{refs}
\bibliographystyle{chicago}

\end{document}